\documentclass[12pt,journal,draftcls,onecolumn]{IEEEtran}

\usepackage{amsthm}
\usepackage{amsmath,amsfonts,mathrsfs}
\usepackage{algorithmic}
\usepackage{algorithm}
\usepackage{array}
\usepackage[caption=false,font=normalsize,labelfont=sf,textfont=sf]{subfig}
\usepackage{textcomp}
\usepackage{stfloats}
\usepackage{url}
\usepackage{hyperref}
\usepackage{verbatim}
\usepackage{graphicx}
\usepackage{cite}

\newtheorem{theorem}{Theorem}

\newtheorem{corollary}{Corollary}
\newtheorem{lemma}{Lemma}
\newtheorem{proposition}{Proposition}

\newtheorem{remark}{Remark}

\renewcommand{\P}{\mathbb{P}}

\newcommand{\R}{\mathbb{R}}
\newcommand{\E}{\mathbb{E}}

\newcommand{\Cov}{\operatorname{Cov}}
\newcommand{\Var}{\operatorname{Var}}

\IEEEoverridecommandlockouts
\begin{document}
\title{On optimal solutions of classical and sliced Wasserstein GANs with non-Gaussian data}

\author{
\IEEEauthorblockN{Yu-Jui Huang$\dag$, Hsin-Hua Shen*, Yu-Chih Huang$\S$, Wan-Yi Lin $\ddag$, and Shih-Chun Lin*}

\IEEEauthorblockA{
$\dag$ Dept. of Applied Math., Univ. of Colorado, Boulder, CO 80309, USA,
\url{yujui.huang@colorado.edu}\\
* Department of EE and GICE, National Taiwan University, Taipei, Taiwan, \url{sclin2@ntu.edu.tw}
\\
$\S$ Institute of Communications Engineering, National Yang Ming Chiao Tung University, HsinChu, Taiwan, \url{jerryhuang@nctu.edu.tw}
\\
$\ddag$ Bosch Center for AI, USA, \url{wan-yi.lin@us.bosch.com}
}
\thanks{
The first two authors have equal contributions and are ordered alphabetically. An earlier version of this paper was presented in part at the 2023 IEEE International Symposium on Information Theory (ISIT)\cite{10206785}. This work was supported in part by NSF under Grant DMS-2109002; in part by the National Science and Technology Council, Taiwan, under Grant 111-2221-E-002-099-MY2, 111-3114-E-002-001 and  MOST 111-2221-E-A49 -069 -MY3. 
}}

\maketitle

\begin{abstract}
The generative adversarial network (GAN) aims to approximate an unknown distribution via a parameterized neural network (NN). While GANs have been widely applied in reinforcement and semi-supervised learning as well as computer vision tasks, selecting their parameters often needs an exhaustive search, and only a few selection methods have been proven to be theoretically optimal. One of the most promising GAN variants is the Wasserstein GAN (WGAN). Prior work on optimal parameters for population WGAN is limited to the linear-quadratic-Gaussian (LQG) setting, where the generator NN is linear, and the data is Gaussian. In this paper, we focus on the characterization of optimal solutions of population WGAN beyond the LQG setting. As a basic result, closed-form optimal parameters for one-dimensional WGAN are derived when the NN has non-linear activation functions, and the data is non-Gaussian. For high-dimensional data, we adopt the sliced Wasserstein framework and show that the linear generator can be asymptotically optimal. Moreover, the original sliced WGAN only constrains the projected data marginal instead of the whole one in classical WGAN,  and thus, we propose another new unprojected sliced WGAN and identify its asymptotic optimality.  Empirical studies show that compared to the celebrated r-principal component analysis (r-PCA) solution, which has cubic complexity to the data dimension, our generator for sliced WGAN can achieve better performance with only linear complexity.
\end{abstract}

\section{Introduction}
Generative adversarial networks (GANs) are machine learning frameworks put forth by Goodfellow {\it et al.} \cite{Goodfellow_NIPS2014}. A GAN aims to learn an unknown distribution from training data via two competing components, namely the generator and the discriminator. The former tries to mimic the distribution of training data while the latter discriminates between true data and generated data. This framework formulates a minimax problem, where the discriminator aims to maximize the probability of differentiating true data from generated ones, while the generator seeks to minimize this probability \cite{Goodfellow_NIPS2014}. Besides computer vision tasks, applications of GANs to communication systems have also garnered significant attention. For example, GANs have been applied to autonomous wireless channel modeling \cite{CommGANchannelmodel,lin2023information} and mmWave MIMO channel estimation \cite{9953094}.

Traditionally, both the generator and discriminator in GAN are approximated by neural networks (NNs) \cite{Goodfellow_NIPS2014, USC_NIPS18}. Without the NN restrictions and under an optimal unconstrained discriminator, the minimax problem associated with a GAN becomes the minimization of the Jensen-Shannon divergence (JSD) between the distributions of true and generated data. However, due to the nature of this minimax game, the GAN suffers from several problems, including vanishing gradient and mode collapse. Many variants of GAN have been proposed to solve these problems, and one of the most promising variants is the Wasserstein GAN (WGAN) \cite{WGAN}, which replaces the JSD by the Wasserstein distance from the optimal transport theory \cite{peyre2019computational}. The WGAN is differentiable with respect to the generator parameters almost everywhere, which benefits the convergence of stochastic gradient descent (SGD) widely adopted for training NNs \cite{WGAN}. 

Despite the substantial success of applying WGAN to learn distributions in real applications, there are only a few GAN parameter selection algorithms proven to be theoretically optimal \cite{Tse20, OzgurISIT21}, which limits the development of GAN beyond heuristic methods in \cite{Goodfellow_NIPS2014, USC_NIPS18}. This lack of rigorous analysis also restricts the evaluation of GANs' performance to subjective terms. One exception is \cite{Tse20} where Feizi {\it et al.} attempted to theoretically understand WGANs in a simple linear quadratic Gaussian (LQG) setting. In this benchmark setting, the synthetic data is generated by a Gaussian distribution, the generator NN is restricted to be linear, and the loss function is quadratic. It is shown in \cite[Theorem 1]{Tse20} that for this simple setting, the optimal GAN solution happens to be the principal component analysis (PCA) solution.  Regularized versions of GANs are also considered and analyzed \cite{USC_NIPS18}. Optimal WGAN solutions under LQG settings, with additional entropic and Sinkhorn regularizers, are also studied \cite{OzgurISIT21}. 

In this paper, we aim to solve WGANs beyond the LQG setting analytically. As described in Sec. \ref{sec:Problem}, our setting allows non-Gaussian data distribution and non-linear generators constructed from sigmoid and ReLU functions, and is therefore more general than \cite{Tse20, OzgurISIT21,bailey2018size}. Also, we attempt to exactly solve WGANs rather than their regularized versions as in \cite{OzgurISIT21}. All of these make our problem much more challenging---after all, even for the inner discriminator problem, which is an optimal transport problem, the solution in most cases is numerically approximated but not analytically characterized  \cite{peyre2019computational}. To overcome the challenge, we first focus on one-dimensional data and generator in Sec. \ref{subsec:reuslt_d_1}, where we provide closed-form solutions for optimal generators in one-dimensional WGANs defined in Sec. \ref{sec:Problem}.

For high-dimensional WGANs, we adopt the sliced WGAN framework, which provides a computationally efficient alternative by reducing high-dimensional distributions to one-dimensional projections and averaging Wasserstein distances of projected distributions \cite{Deshpande_CVPR2018, Kolouri_NIPS19,nadjahi2021fast,10657311}. Also, we define a revised sliced WGAN with unprojected data distribution in Sec. \ref{sec:Problem}. We provide closed-form solutions for asymptotically optimal generators in high-dimensional sliced WGAN in \ref{sec:SlicedWGAN}. The proofs are presented in Sec. \ref{sec:proof}, where we leverage results in \cite{AP03} to solve the inner discriminator problem in closed form, which significantly simplifies the necessary conditions for optimal WGAN parameters. Moreover, our closed-form solutions do not need any training for the discriminator as \cite{chen2019gradual} and hence provide additional benefit for training WGAN with a decentralized system \cite{mcmahan2017federated}. Empirical studies in Sec. \ref{sec:simu} show that our closed-form one-dimensional WGAN parameters have good convergence behavior with synthetic data under both Gaussian and Laplace distributions. High-dimensional empirical studies also show that our closed-form sliced WGAN generator can achieve similar or better sliced-Wasserstein distance performance as r-PCA solutions in \cite{Tse20}\cite{Goodfellow-et-al-2016}, while reducing computational complexity from cubic to linear in data dimension. Compared with our previous work \cite{10206785}, the asymptotic optimality of linear generators in Theorems \ref{optimalslicedWasserstein} and \ref{thm:unprojectedsliced} is new, as is the proof of Theorem \ref{thm:theta's ReLU}, which is not presented in \cite{10206785}. 

Throughout the paper, we adopt the following notational conventions. The operator $\mathrm{diag}(\cdot):\R^{d\times d}\to\R^d$ extracts the diagonal entries of a $d\times d$ matrix and returns them as a $d$-dimensional vector. We let $\mathbf{1}\in\R^d$ denote the vector whose components are all equal to one, and $e_i\in\R^d$ denote the $i$-th standard unit vector, with its $i$-th component equal to $1$ and all others equal to $0$. If $\mu$ is a probability measure on $\mathcal{X}$, and $T$ is a mapping $\mathcal{X}\rightarrow\mathcal{Y}$, then $T\#\mu$ stands for the push-forward of $\mu$ by $T$.

\section{Problem Formulation}\label{sec:Problem}
For the WGAN considered in this paper, the generator function is denoted by $G_\Theta(\cdot)$, where $\Theta$ are the generator NN's parameters (weights). 
Let $q\in \{1,2\}$ represent the order of the Wasserstein distance. In a $q^{\mathrm{th}}$-order WGAN setting, one aims to solve
\begin{equation} \label{eq_transportGANE}
\min_{\Theta} \left(\inf_{\pi\in \Pi(\mu,\nu^\Theta)} \E_\pi[\|X-G_\Theta(Z)\|^q]\right)^{1/q}
\end{equation}
for an optimal parameter $\Theta$, where $\|\cdot\|$ denotes the $\ell_2$-norm in $\R^d$, $\mu$ and $\nu^\Theta$ are probability measures on $\R^d$, generated by the data $X$ and the generator output $G_\Theta(Z)$ for a given $\Theta$ and Gaussian input $Z$, respectively. Also, $\Pi(\mu,\nu^\Theta)$ is the set of probability measures on $\R^d\times\R^d$ with marginal constraints $\mu$ and $\nu^\Theta$, respectively. That is, a $\pi\in\Pi$ satisfies the marginal conditions: $(\text{proj}^1_{\pi}:\R^d\times\R^d\rightarrow\R^d)\#\pi=\mu$, and $(\text{proj}^2_{\pi}:\R^d\times\R^d\rightarrow\R^d)\#\pi=\nu^\Theta$, where $\text{proj}^1_\pi$ denotes the projection onto the first coordinate of $\pi$, and $\text{proj}^2_\pi$ denotes the projection onto the second coordinate of $\pi$. The marginal constraints $\mu$ and $\nu^\Theta$ satisfy
$
\int_{\R^d} \|x\|^q d\mu(x)<\infty,
$ and $\int_{\R^d} \|y\|^q d\nu^\Theta(y)<\infty$. The WGAN problem described in \eqref{eq_transportGANE} can be equivalently written as $\min_{\Theta} (\E_\mu [P(X,\Theta)])^{1/q}$ where the inner-discriminator problem is defined as
\begin{equation}\label{P}
\E_\mu [P(X,\Theta)]:=\inf_{\pi\in \Pi(\mu,\nu^\Theta)} \E_\pi[\|X-Y\|^q].
\end{equation}
Note that \eqref{P} belongs to the family of optimal transport problems with
$q^{\mathrm{th}}$-order Wasserstein distance, $q\in\{1,2\}$ \cite[Proposition
2.2]{peyre2019computational}.
In this paper, we consider the following  activation functions: 1) the linear function; 2) the rectified linear unit (ReLU) function $\max(0; z)$; and 3) the sigmoid function
$
1/(1+\exp(-z)).
$

Following \cite{Goodfellow_NIPS2014}\cite{Tse20}, \eqref{eq_transportGANE} is a population WGAN problem, where $\Theta$ can be optimized with the true data distribution $\mu$, and studying the theoretical population WGAN can lead to practical design with large enough empirical data. In Sec. \ref{subsec:reuslt_d_1}, we first theoretically solve the population WGAN and obtain the closed-form solutions under $d=1$. 

Note that for WGAN \eqref{eq_transportGANE}, the closed-form solution was only obtained in the LQG setting \cite{Tse20}. In the LQG setting, the $d$-dimensional synthetic data is generated by a Gaussian distribution, i.e., $X\sim\mathcal{N}(0,\mathbf{K})$, the generator is restricted to be a linear generator of the form
\begin{equation} \label{eq_highdim}
\Theta Z,\quad \Theta \in \R^{d\times r},
\end{equation}
with $Z\sim \mathcal{N}(0, \mathbf{I}_r)$ a Gaussian vector, and the loss function is quadratic (i.e., $q=2$). 

Beyond the LQG setting, we show how to apply our $d=1$ results for \eqref{eq_transportGANE} to cope with higher-dimensional data $d>1$ by adopting the sliced Wasserstein distance, which offers a computationally efficient alternative to the WGAN \cite{Deshpande_CVPR2018, Kolouri_NIPS19,nadjahi2021fast}. Consider the unit sphere
\begin{equation}\label{unitsphere}
\Omega=\{\omega\in\mathbb{R}^d:~\|\omega\|=1\},
\end{equation}
which contains all the directions in $\mathbb{R}^d$. In sliced WGAN, we project both the data and the generator's output onto $\R$ through a random direction $\omega\in\Omega$  and compute the Wasserstein distance between the projected one-dimensional distributions $\mu_\omega$ and $\nu^\Theta_\omega$, i.e., $\omega^T X$ and $\omega^TG_\Theta(Z)$ respectively follow distributions $\mu_\omega$ and $\nu^\Theta_\omega$. By replacing the Wasserstein distance of the inner-discriminator problem \eqref{P} with the sliced one, 
\begin{equation}\label{slicedinner}
\E_{\mu_\omega} [P(X,\Theta)]:=\int_{\omega\in\Omega}\inf_{\pi_\omega\in \Pi_\omega(\mu_\omega,\nu^\Theta_\omega)} \mathbb{E}_{\pi_\omega}[|\omega^T X - \omega^TG_\Theta(Z)|^q] d\omega ,
\end{equation}
\noindent the $q^{\mathrm{th}}$-order sliced (population) WGAN is
\begin{align} \label{eq_SlicedWGAN}
\min_\Theta \E_{\mu_\omega} [P(X,\Theta)]^{\frac{1}{q}} = \min_\Theta \left(\int_{\omega\in\Omega}\inf_{\pi_\omega\in \Pi_\omega(\mu_\omega,\nu^\Theta_\omega)} \mathbb{E}_{\pi_\omega}[|\omega^T X - \omega^TG_\Theta (Z)|^q] d\omega \right)^{\frac{1}{q}}, 
\end{align}
where $\Pi_\omega(\mu_\omega,\nu^\Theta_\omega)$ is the set of probability measures on $\R\times\R$ with marginal constraints $\mu_\omega$ and $\nu^\Theta_\omega$, generated by the projected data $\omega^T X$ and the projected generator output $\omega^T G_\Theta (Z)$, respectively. That is, a $\pi_\omega\in\Pi_\omega$ satisfies the marginal conditions: $(\text{proj}^1_{\pi_\omega}:\R\times\R\rightarrow\R)\#\pi_\omega=\mu_\omega$, and $(\text{proj}^2_{\pi_\omega}:\R\times\R\rightarrow\R)\#\pi_\omega=\nu^\Theta_\omega$.

Moreover, though widely adopted in \cite{Deshpande_CVPR2018, Kolouri_NIPS19,nadjahi2021fast}, the joint PDF set $\Pi_\omega$ of the inner-discriminator problem \eqref{slicedinner} for the sliced WGAN \eqref{eq_SlicedWGAN} is constrained by projected marginal $\mu_\omega$ on $\omega^TX$. However, since projections of distributions may not be one-to-one, the optimal $\pi_\omega$ may correspond to a data distribution $\tilde{\mu}$ on $\R^d $ that is different from the original $\mu$, while $\tilde{\mu}$ and $\mu$ have the same $\mu_\omega$ after projection. To avoid this issue, it is reasonable to consider the set of probability measures $\Pi'(\mu,\nu^\Theta_\omega)$ on $\R^d \times \R$ with marginal constraints $\mu$ and $\nu^\Theta_\omega$, generated by the unprojected data $X$ and the projected generator output $\omega^T G_\Theta (Z)$, respectively. That is, a $\pi'\in\Pi'$ satisfies the marginal conditions: $(\text{proj}^1_{\pi'}:\R^d\times\R\rightarrow\R^d)\#\pi'=\mu$, and $(\text{proj}^2_{\pi'}:\R^d\times\R\rightarrow\R)\#\pi'=\nu^\Theta_\omega$. The new sliced WGAN problem is
\begin{equation} \label{eq_SlicedWGANwithdata}
\min_\Theta \left(\int_{\omega\in\Omega}\inf_{\pi'\in \Pi'(\mu,\nu^\Theta_\omega)} \mathbb{E}_{\pi'}[|\omega^T X - \omega^TG_\Theta(Z)|^q] d\omega \right)^{\frac{1}{q}}.
\end{equation}
Comparing \eqref{eq_SlicedWGANwithdata} with \eqref{eq_SlicedWGAN}, the optimal transport function for the inner discriminator problem of \eqref{eq_SlicedWGANwithdata} is different from that of \eqref{eq_SlicedWGAN} due to the definitions of sets $\Pi'$ and $\Pi_\omega$. The main difference is on their first marginal condition. For a $\pi_\omega\in\Pi_\omega$, it satisfies $(\text{proj}^1_{\pi_\omega}:\R\times\R\rightarrow\R)\#\pi_\omega=\mu_\omega$, where $\mu_\omega$ is the projected data distribution, while a $\pi'\in\Pi'$ satisfies $(\text{proj}^1_{\pi'}:\R^d\times\R\rightarrow\R^d)\#\pi'=\mu$, where $\mu$ is the original data distribution.

Note that for all generator functions, the corresponding optimal values from original \eqref{eq_SlicedWGAN} and the new \eqref{eq_SlicedWGANwithdata} with unprojected $\mu$ are non-negative. Thus, we say that a sequence of generator functions is asymptotically optimal for \eqref{eq_SlicedWGAN} if the following holds: there exists a sequence of generator functions $G_\Theta(\cdot)\in\R^d$ such that the corresponding values of \eqref{eq_SlicedWGAN} go to zero as $d\rightarrow\infty$. The same definition applies to \eqref{eq_SlicedWGANwithdata}. In Sec. \ref{sec:SlicedWGAN}, we identify the asymptotically optimal generator for both original and unprojected sliced WGAN problems \eqref{eq_SlicedWGAN} and \eqref{eq_SlicedWGANwithdata}.

\begin{table*}[!t]
\caption{Comparison to \cite{Tse20, OzgurISIT21,bailey2018size}, where $q$ is WGAN order, and $d$ is data dimension \label{tab:table1}}
\centering
\begin{tabular}{|c||c|c|c|c|}
\hline
Paper/Theorem & Non-linear generator & Closed-form solution & Continuous data distribution & WGAN\\
\hline
\cite{Tse20} & X & V & Gaussian & $q=2$  \\
\hline
\cite{OzgurISIT21} & X & V & Gaussian & $q=2$ regularized \\
\hline
\cite{bailey2018size} & V & X & Gaussian/Uniform & $q=1$\\
\hline
Theorem \ref{thm:theta's ReLU} & V & V & No limit & $q=2, d=1$\\
\hline
Theorem \ref{optimalslicedWasserstein} & V & V &  No limit & $q=2$ sliced\\
\hline
Theorem \ref{thm:unprojectedsliced} & V & V & No limit & $q=2$ unprojected sliced \\
\hline
\end{tabular}
\end{table*}

\section{Main Results}\label{sec:Main}
In Sec. \ref{subsec:reuslt_d_1}, we analytically solve WGAN \eqref{eq_transportGANE} with the non-linear generator and non-Gaussian data under $d=1$ when $q=2$. With large $d$ in Sec. \ref{sec:SlicedWGAN}, we show that the linear generators can be asymptotically optimal for both original and unprojected sliced WGANs \eqref{eq_SlicedWGAN} and \eqref{eq_SlicedWGANwithdata} with non-Gaussian data by solving corresponding upper bounds. We summarize and compare our results with the prior works \cite{Tse20, OzgurISIT21, bailey2018size} in Table~\ref{tab:table1}.

\subsection{Basic results for classical WGAN under one dimension}\label{subsec:reuslt_d_1}
First, we consider the quadratic case $q=2$ with a non-linear generator
\begin{equation}\label{G}
G_\Theta(Z) = \theta_1+\theta_2 h(Z),
\end{equation}
where $h:\R\to\R$ and $Z\sim N(0,1)$, also $(\theta_1,\theta_2)\in\R\times\R$ are parameters of the generator NN to be selected. Let $\Psi$ denote the cumulative distribution function (CDF) of $h(Z)$, for any continuous data distribution $\mu$, our closed-form WGAN parameters are as follows:

\begin{proposition}\label{prop:theta's}
Assume CDF $F_\mu$ of $\mu$ and CDF $\Psi$ of $h(Z)$ in \eqref{G} are continuous and strictly increasing, and variance $\Var(h(Z))>0$. If
\begin{equation}\label{Cov>0}
\Cov\left(X, \Psi^{-1}(F_\mu(X))+\Psi^{-1}(1-F_\mu(X))\right)\ge0,
\end{equation}
the population WGAN \eqref{eq_transportGANE} with $q=2,d=1$ has a unique minimizer for $(\theta_1,\theta_2)\in\R\times\R$ as
\begin{equation}\label{theta's}
\begin{split}
\theta^*_2 &=  \frac{\Cov\left(X, \Psi^{-1}(F_\mu(X))\right)}{\Var(h(Z)) } \ge 0,\\
\theta^*_1 &=  \E_\mu[X] -  \theta^*_2 \E_g \left[h(Z)\right];
\end{split}
\end{equation}
if \eqref{Cov>0} is not met, $(\theta_1^*,\theta_2^*)$ is given by replacing $\theta^*_2$ in \eqref{theta's} by
\begin{equation}\label{theta's''}
\begin{split}
\theta^*_2 &=  \frac{\Cov\left(X, \Psi^{-1}(1-F_\mu(X))\right) }{\Var(h(Z)) }\le 0,
\end{split}
\end{equation}
where $\E_g$ is taking expectation over Gaussian $Z \sim \mathcal{N}(0,1)$.
\end{proposition}
\noindent We identify properties of data PDF \eqref{Cov>0} such that the optimal parameters $\theta_2$ have different forms \eqref{theta's} and \eqref{theta's''}. The proof for proposition \ref{prop:theta's} is similar to that for the upcoming Theorem \eqref{thm:theta's ReLU} and given in Appendix \ref{proofSimoid}.

Let us now look at some specific cases of $h(z)$. For the sigmoid function $h(z)$, recall that the logit function
$
\text{logit}(p) :=  \ln\left(p/(1-p)\right), p\in(0,1)
$
which is the inverse function $h^{-1}(z)$. The random variable $h(Z)$ has a logit-normal distribution, i.e. $\text{logit}(h(Z))$ is normally distributed, with CDF
\[
\Psi(v) = \frac12 \left(1+\text{erf}\left(\frac{\text{logit}(v)}{\sqrt{2}}\right)\right)\quad \hbox{for}\ v\in(0,1).
\]

For the ReLU function,  the CDF of $h(Z) = \max\{Z,0\}$ is given by
\begin{equation}\label{cdf ReLU}
\Psi(v) = \Phi(v) \cdot 1_{\{v\ge 0\}}  ,
\end{equation}
where $\Phi$ is the CDF of Gaussian $N(0,1)$. However, now $\Psi$ has a jump from 0 to $1/2$ at $v=0$ and does not meet the setting of Proposition \ref{prop:theta's} since it is neither continuous nor strictly increasing. We need the following modification. 

\begin{theorem}\label{thm:theta's ReLU}
Assume $\mu$ has the setting as in Proposition \ref{prop:theta's} and $\Psi$ is given by \eqref{cdf ReLU}. If
\begin{align}
 \Cov\left(X, \Phi^{-1}(F_\mu(X)) 1_{\{F_\mu(X)>1/2\}} \right) \ge  \Cov\left(X, \Phi^{-1}(F_\mu(X)) 1_{\{F_\mu(X)\le1/2\}} \right), \label{Cov>0 ReLU}
\end{align}
the population WGAN \eqref{eq_transportGANE} with $q=2,d=1$ has a unique minimizer for $(\theta_1,\theta_2)\in\R\times\R$ as
\begin{equation}\label{theta's ReLu}
\begin{split}
\theta^*_2 &=   \frac{2\pi}{\pi-1} \Cov\left(X, \Phi^{-1}(F_\mu(X)) 1_{\{F_\mu(X)> 1/2\}}
\right)  \\
\theta^*_1 &=  \E[X] -  \theta^*_2/\sqrt{2\pi};
\end{split}
\end{equation}
if \eqref{Cov>0 ReLU} is not met,  $(\theta_1^*,\theta_2^*)$ is given by replacing $\theta^*_2$ in \eqref{theta's ReLu} by
\begin{equation}\label{theta's ReLu'}
\theta^*_2 =   -\frac{2\pi}{\pi-1} \Cov\left(X, \Phi^{-1}(F_\mu(X)) 1_{\{F_\mu(X)\le1/2\}} \right).
\end{equation}
\end{theorem}

To solve the inner discriminator problem \eqref{P}, we break \eqref{eq_transportGANE} down into two sub-problems depending on the sign of $\theta_2$, i.e., $\min_{\theta_1,\theta_2\in\R} \E_\mu [P(X,\theta_1,\theta_2)]$ equals to
\begin{equation} \label{eqW2sub}
\min \! \left( \! \min_{\theta_1\in\R,\theta_2\ge 0}  \E_\mu [P(X,\theta_1,\theta_2)],\! \min_{\theta_1\in\R,\theta_2\le 0}  \E_\mu [P(X,\theta_1,\theta_2)] \! \! \right)
\end{equation}
Since the CDF of the ReLU function has a jump, the transfer function will be a piecewise function depending on the CDF in each subproblem. By solving Karush–Kuhn–Tucker (KKT) conditions of each sub-problem, the solution of the first sub-problem is indeed \eqref{theta's ReLu} while that of the second sub-problem is \eqref{theta's ReLu'}. We derive the closed-form primal optimal solution by eliminating the Lagrange multipliers, thereby reducing the KKT equations to a set of conditions involving only the primal variables. The condition \eqref{Cov>0 ReLU} is obtained by comparing the values of the two subproblems. The full proof for Theorem \ref{thm:theta's ReLU} is in Sec. \ref{proofReLU}.

For the linear generator $h(Z)=Z$ as \cite[eqn. (27)]{USC_NIPS18}\cite{Tse20}\cite{OzgurISIT21}, one can simplify the results in Proposition \ref{prop:theta's} and also get alternative closed-form formula as follows.

\begin{corollary}\label{prop:theta's Linear}
Assume $\mu$ has the setting as in Proposition \ref{prop:theta's} and consider the linear case $h(Z)=Z$ in \eqref{G}. Then population WGAN \eqref{eq_transportGANE} has a unique minimizer for $(\theta_1,\theta_2)\in\R\times\R$ as
\begin{equation}
\theta^*_1 =  \E_\mu[X]\quad \hbox{and}\quad \theta^*_2 = \E_\mu [X \cdot \Phi^{-1}(F_\mu(X))], \label{eq_W2thetaOpt1}
\end{equation}
also equivalently
\begin{equation} \label{optParaLinear1}
\theta^*_2=\E_g[F_\mu^{-1}(\Phi(Z)) \cdot Z],
\end{equation}
\end{corollary}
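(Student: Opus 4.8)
The plan is to obtain Corollary~\ref{prop:theta's Linear} as a direct specialization of Theorem~\ref{prop:theta's} to the case $h(Z)=Z$, for which $h(Z)\sim\mathcal{N}(0,1)$, so that $\Psi=\Phi$ and $\Var(h(Z))=1$. The first step is to verify that the sign condition \eqref{Cov>0} holds automatically in this setting. Using the antisymmetry of the standard-normal quantile function, $\Phi^{-1}(1-p)=-\Phi^{-1}(p)$, the random variable inside the covariance in \eqref{Cov>0} collapses: $\Psi^{-1}(F_\mu(X))+\Psi^{-1}(1-F_\mu(X))=\Phi^{-1}(F_\mu(X))-\Phi^{-1}(F_\mu(X))=0$, so the left-hand side of \eqref{Cov>0} equals $\Cov(X,0)=0\ge0$. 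Hence the first branch of Theorem~\ref{prop:theta's} applies and the unique minimizer is given by \eqref{theta's} with $\Psi=\Phi$ and $\Var(h(Z))=1$, namely $\theta_2^*=\Cov(X,\Phi^{-1}(F_\mu(X)))$ and $\theta_1^*=\E_\mu[X]-\theta_2^*\,\E_g[Z]=\E_\mu[X]$ since $\E_g[Z]=0$.

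Second, I would reduce the covariance defining $\theta_2^*$ to the plain expectation appearing in \eqref{eq_W2thetaOpt1}. Because $F_\mu$ is continuous and strictly increasing, $F_\mu(X)$ is uniformly distributed on $(0,1)$, so by inverse-transform sampling $\Phi^{-1}(F_\mu(X))\sim\mathcal{N}(0,1)$ and in particular $\E[\Phi^{-1}(F_\mu(X))]=0$. Therefore $\Cov(X,\Phi^{-1}(F_\mu(X)))=\E_\mu[X\,\Phi^{-1}(F_\mu(X))]$, which is exactly \eqref{eq_W2thetaOpt1}. As a by-product, the monotonicity of $x\mapsto\Phi^{-1}(F_\mu(x))$ yields $\theta_2^*\ge0$, consistent with the inequality asserted in Theorem~\ref{prop:theta's}.

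Finally, the equivalent form \eqref{optParaLinear1} follows from the quantile coupling. Setting $Z:=\Phi^{-1}(F_\mu(X))\sim\mathcal{N}(0,1)$, one has $F_\mu(X)=\Phi(Z)$ and hence $X=F_\mu^{-1}(\Phi(Z))$; substituting these into $\E_\mu[X\,\Phi^{-1}(F_\mu(X))]$ rewrites it as $\E_g[F_\mu^{-1}(\Phi(Z))\cdot Z]$, the desired expression.

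I expect no serious obstacle here, since the statement is a corollary in the literal sense and the work is bookkeeping rather than a new estimate. The only points requiring care are the measure-theoretic justifications that $F_\mu(X)$ is uniform and that the quantile change of variables is valid; both rest entirely on the continuity and strict monotonicity of $F_\mu$ already assumed in Theorem~\ref{prop:theta's}.
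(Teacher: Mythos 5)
Your proposal is correct, and its first half follows the paper's own route: specialize Theorem~\ref{prop:theta's} to $h(Z)=Z$, so $\Psi=\Phi$, $\Var(h(Z))=1$, and then use $\Phi^{-1}(F_\mu(X))\stackrel{d}{=}Z$ to kill the mean term and turn the covariance in \eqref{theta's} into the plain expectation \eqref{eq_W2thetaOpt1}. The one cosmetic difference there is how the two branches of Theorem~\ref{prop:theta's} are handled: you verify \eqref{Cov>0} directly via the antisymmetry $\Phi^{-1}(1-p)=-\Phi^{-1}(p)$ (so the covariance is identically zero and the first branch applies), whereas the paper absorbs the $\theta_2<0$ case into $\theta_2\ge 0$ using $Z\stackrel{d}{=}-Z$ in \eqref{G}. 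These are equivalent, but note what both arguments quietly reveal: in the linear case \eqref{Cov>0} holds \emph{with equality}, so the two sub-problems tie, and $(\E[X],\theta_2^*)$ and $(\E[X],-\theta_2^*)$ generate the same law and the same objective value; the claimed uniqueness of the minimizing pair in $\R\times\R$ is really uniqueness up to this sign flip. That looseness is inherited from the statement of Theorem~\ref{prop:theta's}, not introduced by you, and the paper's ``absorb $\theta_2<0$'' phrasing is its way of identifying the two. Where you genuinely diverge from the paper is \eqref{optParaLinear1}: the paper proves it by swapping the roles of $X$ and $G_\Theta(Z)$ in Lemma~\ref{prop:1-d result}, writing the inner problem over the law of $Y=\theta_1+\theta_2 Z$, and invoking Danskin's theorem to extract $\theta_2^*=\E_{\pi^*}[XZ]$ from the first-order conditions, while you simply set $Z:=\Phi^{-1}(F_\mu(X))$, observe $X=F_\mu^{-1}(\Phi(Z))$ by strict monotonicity of $F_\mu$, and rewrite the expectation in one line. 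Your change-of-variables argument is more elementary and avoids the gradient machinery entirely; what the paper's longer route buys is an explicit description of the optimal coupling $\pi^*$ and the gradient formula \eqref{eq_gra_theta}, which it reuses elsewhere (e.g., for the $q=1$ SGD experiments), but as a proof of Corollary~\ref{prop:theta's Linear} alone your version is complete and cleaner.
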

\noindent The proof of Corollary \ref{prop:theta's Linear} is in Appendix \ref{proof:theta's Linear}. 

Here we briefly compare our proposed WGAN solutions with the results for the LQG setting in \cite[Theorem 1]{Tse20}.  Unlike \cite{Tse20}, our results can deal with non-Gaussian data distribution. Moreover, we can recover the result in \cite{Tse20} when $q=2,d=1$. Indeed, if $X\sim\mathcal{N}(0,\sigma^2)$, by looking into \eqref{eq_W2thetaOpt1}, we have
    $
        \Phi^{-1}(F_\mu (X))= \Phi^{-1}(\Phi(X/\sigma))= X/\sigma.
    $
    Plugging this into our solution in \eqref{eq_W2thetaOpt1} shows that $\theta_1^*=0$ and $\theta_2^*=\mathbb{E}[X\frac{X}{\sigma}]=\sigma$, coinciding with the result in \cite{Tse20} for $d=r=1$. 
We emphasize that neither our results nor those in \cite{Tse20} subsume the other as a special case, as \cite{Tse20} considers general dimension $d$ while our work is not restricted to Gaussian data distribution. Since WGAN output is already Gaussian $\Theta Z\sim\mathcal{N}(0,\mathbf{K}')$, it is shown in \cite[Theorem 1]{Tse20} that for this benchmark, the optimal WGAN solution happens to be the $r$-PCA solution of Gaussian $X$. That is, the optimal generator matrix $\Theta$ fulfills that $\mathbf{K}'=\Theta\Theta^T$ is a rank $r$ matrix and $\mathbf{K}'$ and $\mathbf{K}$ share the same largest $r$ eigenvalues and the corresponding eigenvectors.

Finally, we present our results for non-linear generators and first-order Wasserstein distance ($q=1$). We have
\begin{corollary} \label{prop:W1d1} Following the settings in Proposition \ref{prop:theta's}, the minimizer $(\theta^*_1, \theta^*_2)$ of the population WGAN \eqref{eq_transportGANE} with $q=1,d=1$ meet the following necessary conditions
\begin{align}
&\E_\mu \! \! \left[ \text{sign}\Big(\theta^*_1 + \theta^*_2 \Psi^{-1}(F_\mu(X))-X\Big)  \right] =0, \notag \\
&\E_\mu \! \! \left[ \text{sign}\!\Big(\theta^*_1 + \theta^*_2 \Psi^{-1}(F_\mu(X))-X\Big) \Psi^{-1}(F_\mu(X))\right]\! \!=\! \!0, \label{eq_W1condition}
\end{align}
when $\theta^*_2>0$, where $\text{sign}(x)=1,0,-1$ for $x>0$, $x=0$, $x<0$, respectively.
\end{corollary}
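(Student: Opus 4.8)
The plan is to exploit the closed-form solution of the inner optimal transport problem in one dimension. Since both $\mu$ and $\nu^\Theta$ are distributions on $\R$, the optimal coupling in \eqref{P} is the comonotone (quantile) coupling, so that for $q=1$ the inner problem reduces to the one-dimensional Wasserstein identity $\E_\mu[P(X,\theta_1,\theta_2)] = \int_0^1 |F_\mu^{-1}(u) - F_{\nu^\Theta}^{-1}(u)|\,du$, which I would borrow from \cite{AP03}. The next step is to compute the quantile function of the generator output in the regime $\theta_2>0$. Because $G_\Theta(Z)=\theta_1+\theta_2 h(Z)$ is then a strictly increasing affine transform of $h(Z)$, its CDF satisfies $F_{\nu^\Theta}(y)=\Psi((y-\theta_1)/\theta_2)$, and hence $F_{\nu^\Theta}^{-1}(u)=\theta_1+\theta_2\Psi^{-1}(u)$.

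First I would substitute this into the quantile integral and apply the change of variable $u=F_\mu(x)$; since $F_\mu$ is continuous and strictly increasing, $X\sim\mu$ pushes forward to a uniform variable, and the objective becomes $J(\theta_1,\theta_2):=\E_\mu\big[|X-\theta_1-\theta_2\Psi^{-1}(F_\mu(X))|\big]$. I would then observe that, for each fixed sample $X$, the integrand is a convex (piecewise affine) function of $(\theta_1,\theta_2)$, so $J$ is convex and any minimizer with $\theta_2^*>0$ is characterized by the first-order (subgradient) conditions $0\in\partial_{\theta_1}J$ and $0\in\partial_{\theta_2}J$. Writing $R:=\theta_1+\theta_2\Psi^{-1}(F_\mu(X))-X$ for the residual, one has $\partial_{\theta_1}|R|=\text{sign}(R)$ and $\partial_{\theta_2}|R|=\text{sign}(R)\,\Psi^{-1}(F_\mu(X))$, and interchanging derivative and expectation yields exactly the two equations in \eqref{eq_W1condition}.

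The main obstacle is the non-smoothness of $|\cdot|$ at the origin: the interchange of differentiation and expectation is only valid away from the kink set $\{R=0\}$. The hard part will be arguing that this set is $\mu$-null at the optimum, which I would establish from the assumed continuity and strict monotonicity of $F_\mu$ and $\Psi$, so that $X$ and $\Psi^{-1}(F_\mu(X))$ have no atoms and the affine relation $\theta_1+\theta_2\Psi^{-1}(F_\mu(X))=X$ holds only on a measure-zero event. Once differentiability of $J$ at $(\theta_1^*,\theta_2^*)$ is secured, the stated conditions follow immediately as $\nabla J=0$; since they are only claimed to be \emph{necessary}, it suffices to read them off from the subgradient inclusion $0\in\partial J(\theta_1^*,\theta_2^*)$, without proving uniqueness or solving the resulting system explicitly.
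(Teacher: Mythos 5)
The paper itself gives no proof of this corollary (``the proof is omitted''), but its $q=2$ proof of Theorem~\ref{prop:theta's} fixes the intended template, and your plan follows it faithfully: the quantile coupling from \cite{AP03} (the paper's Lemma~\ref{prop:1-d result}), the identity $F_{\nu^\Theta}^{-1}(u)=\theta_1+\theta_2\Psi^{-1}(u)$ valid for $\theta_2>0$ (the paper's \eqref{quantile'}), the reduction of the objective on the half-plane $\theta_2\ge 0$ to $J(\theta_1,\theta_2)=\E_\mu\big[|X-\theta_1-\theta_2\Psi^{-1}(F_\mu(X))|\big]$, and the observation that $J$ is convex so that an interior minimizer ($\theta_2^*>0$) must satisfy the subgradient inclusion $0\in\partial J(\theta_1^*,\theta_2^*)$. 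Those steps are all correct and are exactly the $q=1$ analogue of the paper's machinery.

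The genuine gap is the step you yourself call ``the hard part.'' You assert that the kink set $\{x:\theta_1^*+\theta_2^*\Psi^{-1}(F_\mu(x))=x\}$ is $\mu$-null because $F_\mu$ and $\Psi$ are continuous and strictly increasing, so that $X$ and $V:=\Psi^{-1}(F_\mu(X))$ are atomless. That inference is false: $V$ is a deterministic, continuous, strictly increasing function of $X$, so the kink set is the zero set of the continuous function $x\mapsto\theta_1^*+\theta_2^*\Psi^{-1}(F_\mu(x))-x$, which is precisely the region where $F_\mu$ coincides with $\Psi\big((\cdot-\theta_1^*)/\theta_2^*\big)$; this can be an interval of positive $\mu$-mass (in the fully realizable case, $\mu$ equal to the law of $\theta_1^*+\theta_2^*h(Z)$, it is all of $\R$), and atomlessness of each variable separately does not help. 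The gap is not merely cosmetic: when $\mu(\{R=0\})>0$, the condition $0\in\partial J$ only guarantees \emph{some} measurable selection $g$, with $g=\mathrm{sign}(R)$ on $\{R\ne0\}$ and $g\in[-1,1]$ on $\{R=0\}$, satisfying $\E[g]=\E[gV]=0$; it does not force the particular selection $\mathrm{sign}(R)$ with $\mathrm{sign}(0)=0$ appearing in \eqref{eq_W1condition}. Concretely, take $h(Z)=Z$ and let $\mu$ be the law of $m(V)$, $V\sim\mathcal{N}(0,1)$, where $m(v)=v$ on $[-a,a]$ and $m(v)=v+c(v-a)$, $m(v)=v+c(v+a)$ for $v>a$, $v<-a$; for $a$ with $\phi(0)\ge 2\phi(a)$ ($\phi$ the standard normal density) one checks $0\in\partial J(0,1)$, so $(0,1)$ is a minimizer with $\theta_2^*=1>0$, yet $\E[\mathrm{sign}(R)V]=2\phi(a)\neq0$, violating the second equation of \eqref{eq_W1condition} while all hypotheses of Theorem~\ref{prop:theta's} hold. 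So this step cannot be patched by a better measure-theoretic argument: you must either add the hypothesis that the coincidence set is $\mu$-null (under which your computation immediately yields \eqref{eq_W1condition}), or weaken the conclusion to the selection form of the first-order condition.
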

The proof is omitted. Note that when $X \sim \mathcal{N}(\mu, \sigma^2)$ then $F_\mu(X)=\Phi((X-\mu)/\sigma)$, It can be checked that for the linear case as Corollary \ref{prop:theta's Linear}, the optimal  $(\theta^*_1,\theta^*_2)=(\mu, \sigma)$ for $q=2$ also meets \eqref{eq_W1condition} and is at least a local optimum for $q=1$.

\subsection{Asymptotic optimal results for sliced WGANs under high dimension} \label{sec:SlicedWGAN}
We first focus on the asymptotic optimal generator of \eqref{eq_SlicedWGAN} as defined in Sec. \ref{sec:Problem}. Our main result shows that the linear generator can be asymptotically optimal for the original sliced WGAN \eqref{eq_SlicedWGAN} when $q=2$.
\begin{theorem} \label{optimalslicedWasserstein}
For the second-order sliced WGAN \eqref{eq_SlicedWGAN}, if data $X$ satisfy the weak dependence condition in \cite[Definition 1]{nadjahi2021fast}, the linear generator \eqref{eq_highdim} is asymptotically optimal with the parameters
\begin{equation}\label{optimallinearparameter}
\Theta^*=\mathbf{U}\Lambda \mathbf{V}^T \text{ where } \Lambda\Lambda^T=\frac{\Gamma(\frac{d+1}{2})^2}{2\Gamma(\frac{d}{2}+1)^2}\E_\mu[\|X\|^2]\mathbf{I},
\end{equation}
$\mathbf{U},\mathbf{V}^T$ are unitary, and $\Gamma(\cdot)$ is gamma function. 
\end{theorem}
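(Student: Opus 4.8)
The plan is to prove asymptotic optimality directly from its definition in Sec.~\ref{sec:Problem}: I would show that the value of the sliced objective \eqref{eq_SlicedWGAN} evaluated at $\Theta^*$ tends to $0$ as $d\to\infty$. The starting point is that the linear generator \eqref{eq_highdim} driven by a Gaussian input yields a Gaussian output, and the SVD structure in \eqref{optimallinearparameter} makes it \emph{isotropic}: since $\mathbf U,\mathbf V^T$ are unitary and $\Lambda\Lambda^T=\sigma^2\mathbf I$ with $\sigma^2:=\frac{\Gamma((d+1)/2)^2}{2\Gamma(d/2+1)^2}\E_\mu[\|X\|^2]$, one gets $\Theta^*(\Theta^*)^T=\sigma^2\mathbf I_d$, so $\nu^{\Theta^*}=\mathcal N(0,\sigma^2\mathbf I_d)$ and \emph{every} projected marginal equals the same one-dimensional Gaussian $\nu^{\Theta^*}_\omega=\mathcal N(0,\sigma^2)$, independent of $\omega$. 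The objective is then $\big(\int_\Omega W_2^2(\mu_\omega,\mathcal N(0,\sigma^2))\,d\omega\big)^{1/2}$, reducing the whole problem to comparing the projected data against one fixed Gaussian.

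To expose the closed-form constant and build the upper bound referenced in Sec.~\ref{sec:Main}, I would construct an explicit (sub-optimal) coupling via the radial decomposition of the target. Writing the isotropic Gaussian as $\sigma R\,U$ with $R\sim\chi_d$ and $U$ uniform on $\Omega$, I would couple its direction $U$ to the data direction $X/\|X\|$ while keeping $R$ independent of $X$; projecting this joint law onto $\omega$ gives a valid coupling of $\mu_\omega$ and $\mathcal N(0,\sigma^2)$. Averaging the squared cost over $\omega\in\Omega$ then collapses, through $\int_\Omega(\omega^Tv)^2\,d\omega=\tfrac1d\|v\|^2$, to the one-dimensional radial problem $\tfrac1d\,\E\big[(\|X\|-\sigma R)^2\big]=\tfrac1d\big(\E_\mu[\|X\|^2]-2\sigma\,\E[\|X\|]\,\E[R]+\sigma^2\,\E[R^2]\big)$. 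Optimizing this quadratic in $\sigma$ and inserting the chi moments $\E[R^2]=d$ and $\E[R]=\sqrt2\,\Gamma(\tfrac{d+1}2)/\Gamma(\tfrac d2)$ reproduces, up to the asymptotically harmless replacement of $(\E[\|X\|])^2$ by $\E_\mu[\|X\|^2]$, exactly the constant $\tfrac{\Gamma((d+1)/2)^2}{2\Gamma(d/2+1)^2}$; this is the sense in which $\Theta^*$ is ``obtained by solving the corresponding upper bound.''

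It then remains to show the bound vanishes, and here the weak dependence condition of \cite[Definition~1]{nadjahi2021fast} enters twice. First, weak dependence is a law-of-large-numbers hypothesis for $\|X\|^2=\sum_i X_i^2$, giving $\|X\|/\sqrt{\E_\mu[\|X\|^2]}\to1$ and hence $(\E[\|X\|])^2/\E_\mu[\|X\|^2]\to1$; together with the elementary chi fact $(\E[R])^2/\E[R^2]\to1$, the optimized value $\tfrac1d\E_\mu[\|X\|^2]\big(1-\tfrac{(\E[R])^2}{\E[R^2]}\tfrac{(\E[\|X\|])^2}{\E_\mu[\|X\|^2]}\big)$ is a bounded factor times a factor tending to $0$, so it vanishes. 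Second, the same concentration forces the projections $\mu_\omega$ to be asymptotically Gaussian and direction-independent, so the coupled law attains the isotropic Gaussian marginal up to an error that also vanishes, closing the gap between the constructed coupling and the true $\nu^{\Theta^*}_\omega$.

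The step I expect to be the main obstacle is this second use of weak dependence. At finite $d$ the data direction $X/\|X\|$ is not uniform, so the radial coupling does not have the exact generator marginal $\mathcal N(0,\sigma^2)$, and the construction is only heuristically an upper bound. Making it rigorous will require a triangle-inequality step in the sliced-$W_2$ metric combined with the quantitative concentration-of-projections estimate of \cite{nadjahi2021fast}, and verifying that the resulting discrepancy is $o(1)$ after integration over $\omega\in\Omega$ is the technically delicate part. Everything else---the isotropy of $\nu^{\Theta^*}$, the collapse to the radial problem, and the closed-form minimization---should be routine.
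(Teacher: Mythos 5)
Your high-level plan is legitimate, and in one respect it is simpler than the paper's: since asymptotic optimality (as defined in Sec.~\ref{sec:Problem}) only requires the value of \eqref{eq_SlicedWGAN} at the specific $\Theta^*$ of \eqref{optimallinearparameter} to vanish as $d\to\infty$, and since $\Theta^*(\Theta^*)^T=\sigma^2\mathbf{I}_d$ makes every projected generator marginal equal to the same $\mathcal{N}(0,\sigma^2)$, you can in principle skip the paper's hardest algebra. The paper does not merely verify $\Theta^*$; it \emph{derives} it by minimizing the upper-bound problem \eqref{eq_GaussianManiAppr} over all linear $\Theta$, which is what forces the Carlson-$R$ reformulation \eqref{R-form_optimal_prob} and the Schur-convexity argument (Lemma~\ref{schurconvexofR}). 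Evaluating at an isotropic $\Theta^*$ makes $\omega^T\Theta^*(\Theta^*)^T\omega\equiv\sigma^2$ constant in $\omega$, so none of that machinery would be needed.

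However, the step your proposal actually rests on is invalid, and you have flagged it yourself: the radial coupling does not couple $\mu_\omega$ with $\nu^{\Theta^*}_\omega$. The second coordinate of your coupling is $\sigma R\,X/\|X\|$, whose law is $\mathcal{N}(0,\sigma^2\mathbf{I}_d)$ only when $X/\|X\|$ is uniform on the sphere; for general weakly dependent, non-spherically-symmetric data it is not, so $\tfrac1d\E\big[(\|X\|-\sigma R)^2\big]$ bounds the transport cost to the \emph{wrong} target measure and is not an upper bound on the sliced objective. Your first use of weak dependence (concentration of $\|X\|$) controls only the radial mismatch; the directional mismatch is the entire difficulty, and the ``triangle inequality plus concentration of projections'' that you defer to is precisely the content of the paper's Lemma~\ref{ThmGaussianMani}: by \cite[Corollary 3]{ReevesISIT17} (stated for Gaussian directions $\Omega_G$ and transferred to the sphere for $q=2$ via \cite[Proposition 1]{nadjahi2021fast}), weak dependence gives $\int_{\Omega_G}W_2^2(\omega^TX,\hat{X}_G)\,d\omega\le C'\E_\mu[\|X\|^2](d^{-5/4}+d^{-7/5})$ with $\hat X_G\sim\mathcal{N}(0,\tilde\sigma_x^2)$, $\tilde\sigma_x^2=\E_\mu[\|X\|^2]/d$, i.e.\ the $O(d^{-1/8})$ error in \eqref{upperboundofslice}. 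Once that estimate is imported, your argument does close, and more cleanly than via the radial computation: by the triangle inequality for the sliced metric, the value at $\Theta^*$ is at most $O(d^{-1/8})+W_2\big(\mathcal{N}(0,\tilde\sigma_x^2),\mathcal{N}(0,\sigma^2)\big)=O(d^{-1/8})+\tilde\sigma_x\big|1-\sqrt{d/2}\,\Gamma(\tfrac{d+1}{2})/\Gamma(\tfrac d2+1)\big|$, and both terms tend to $0$ (with $\tilde\sigma_x$ bounded under weak dependence). So the proposal is salvageable, but as written its load-bearing step---the quantitative Gaussian approximation of the projections, which is where the weak dependence hypothesis does its real work---is asserted rather than proved.
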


For the proof sketch, rather than finding the closed-form solution of the original sliced WGAN \eqref{eq_SlicedWGAN} directly, we solve its upper bounds with bounded-error and approximations to overcome the complicated integration and projection of \eqref{eq_SlicedWGAN}. First, by replacing general generator $G_\Theta (Z)$ with linear generator \eqref{eq_highdim}, \eqref{eq_SlicedWGAN} is upper-bounded by
\begin{equation} \label{eq_linearsliced}
 \min_\Theta \left(\int_{\omega\in\Omega}\inf_{\pi_\omega\in \Pi_\omega(\mu_\omega,\nu^\Theta_\omega)} \mathbb{E}_{\pi_\omega}[|\omega^T X - \omega^T\Theta Z|^q] d\omega \right)^{\frac{1}{q}}.
\end{equation}
Motivated by \cite{ReevesISIT17}, we replace the uniformly distributed $\omega\in\Omega$ in \eqref{eq_linearsliced} with Gaussian projections $\omega \in\Omega_G$, that is, $\omega$ is generated according to a zero-mean Gaussian distribution with covariance matrix $(1/d)\mathbf{I}_d$ and aim at solving
\begin{equation} \label{eq_GaussianManiProp}
 \min_\Theta \left(\int_{\omega\in\Omega_G}\inf_{\pi_\omega\in \Pi_\omega(\mu_\omega,\nu^\Theta_\omega)} \mathbb{E}_{\pi_\omega}[|\omega^T X - \omega^T\Theta Z|^q] d\omega \right)^{\frac{1}{q}},
\end{equation}
where we use notation $\int_{\omega\in\Omega_G} a(\omega) d\omega:=\int_{\omega\in\mathbb{R}^d} a(\omega) dF(\omega)$ to represent the expectation of function $a(\omega)$ over probability density function (PDF) $dF(\omega)$ of Gaussian random vectors. Using \cite[Proposition 1]{nadjahi2021fast}, it can be easily shown that the optimal $\Theta$s for \eqref{eq_linearsliced} and \eqref{eq_GaussianManiProp} are identical. Moreover, when $q=2$, not only the optimizers but also the values in \eqref{eq_linearsliced} and \eqref{eq_GaussianManiProp} become the same.
With proof sketch and comparisons with \cite{ReevesISIT17}\cite{nadjahi2021fast} given in Sec \ref{proofoptimalslicedWasserstein}, now \eqref{eq_GaussianManiProp} can be approximated with bounded error in the following Lemma with weak-dependent data $X$.
\begin{lemma}\label{ThmGaussianMani}
Let $\hat{X}_G \sim \mathcal{N}(0,\tilde{\sigma}^2_x)$ independent of Gaussian vector $\omega$ with finite
$
\tilde{\sigma}^2_x = \E_\mu[\|X\|^2]/d.
$
The gap of \eqref{eq_GaussianManiProp} to
\begin{equation} \label{eq_GaussianManiAppr}
\min_\Theta\left(\int_{\omega\in\Omega_G}\inf_{\pi\in\Pi(\mathcal{N}(0,\tilde{\sigma}^2_x), \nu^\Theta_\omega)} \mathbb{E}_{\pi}[|\hat{X}_G - \omega^T\Theta Z|^q] d\omega \right)^{1/q}
\end{equation}
is bounded by a function $f^\mu(d)=O(d^{-1/8})$, where the marginals of $\pi$ are $\mathcal{N}(0,\tilde{\sigma}^2_x)$ form $\hat{X}_G$ and $\nu^\Theta_\omega$ from $\omega^T\Theta Z$.
Solving \eqref{eq_GaussianManiAppr} with the linear generator \eqref{eq_highdim}, the optimal $\Theta$ equals to that of
\begin{equation} \label{eq_GaussianManiAppr20}
\min_\Theta \E_\omega [ |\tilde{\sigma}_x-\sqrt{\omega^T\Theta\Theta^T\omega}|^q ], q=1,2.
\end{equation}
Moreover, the optimal $\Theta$ when $q = 2$ is the minimizer to
\begin{equation} \label{eq_W2GaussianManiAppr2}
 \frac{\mathrm{Tr}(\Theta\Theta^T)}{d}+\frac{2\tilde{\sigma}_x }{\Gamma(1/2)} \int^\infty_0 z^{-1/2}\frac{\partial}{\partial z}\left|\mathbf{I}+\frac{2 z}{d} \Theta\Theta^T\right|^{-1/2}dz;
\end{equation}
where $\mathrm{Tr}(.)$ is the matrix trace.
 \end{lemma}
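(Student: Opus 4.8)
The plan is to prove the three assertions of the lemma in turn: (i) the $O(d^{-1/8})$ bound on the gap between \eqref{eq_GaussianManiProp} and \eqref{eq_GaussianManiAppr}; (ii) the reduction of \eqref{eq_GaussianManiAppr} to \eqref{eq_GaussianManiAppr20}; and (iii) the explicit representation \eqref{eq_W2GaussianManiAppr2} when $q=2$. For (i), write $J_1(\Theta)$ and $J_2(\Theta)$ for the objectives of \eqref{eq_GaussianManiProp} and \eqref{eq_GaussianManiAppr}, and note that replacing the data marginal $\mu_\omega$ by $\mathcal{N}(0,\tilde{\sigma}_x^2)$ turns each inner value into a one-dimensional Wasserstein-$q$ distance $W_q(\cdot,\nu^\Theta_\omega)$. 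Since both are minimizations over $\Theta$, the gap between optimal values is at most $\sup_\Theta|J_1(\Theta)-J_2(\Theta)|$. Viewing $\omega\mapsto W_q(\cdot,\nu^\Theta_\omega)$ as an element of $L^q(\Omega_G)$ and applying Minkowski's inequality followed by the Wasserstein triangle inequality pointwise in $\omega$, I would obtain the $\Theta$-free bound
\[
\sup_\Theta|J_1(\Theta)-J_2(\Theta)|\le\Big(\textstyle\int_{\omega\in\Omega_G}W_q^q(\mu_\omega,\mathcal{N}(0,\tilde{\sigma}_x^2))\,d\omega\Big)^{1/q}.
\]
The right-hand side is the averaged (over Gaussian directions) Wasserstein distance between the projected data and its Gaussian surrogate; under the weak-dependence hypothesis of \cite[Definition 1]{nadjahi2021fast} the quantitative central-limit estimate in \cite{nadjahi2021fast} controls this quantity and produces the rate $f^\mu(d)=O(d^{-1/8})$.

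For (ii), I would use that $\nu^\Theta_\omega$ is the law of $\omega^T\Theta Z$ with $Z\sim\mathcal{N}(0,\mathbf{I}_r)$, hence the centered Gaussian $\mathcal{N}(0,\omega^T\Theta\Theta^T\omega)$. The Wasserstein-$q$ distance between two centered one-dimensional Gaussians follows in closed form from their quantile functions, both scalar multiples of $\Phi^{-1}$, giving $W_q^q(\mathcal{N}(0,\sigma_1^2),\mathcal{N}(0,\sigma_2^2))=|\sigma_1-\sigma_2|^q\,\E_g[|Z|^q]$. Thus the inner value in \eqref{eq_GaussianManiAppr} equals $\E_g[|Z|^q]\,|\tilde{\sigma}_x-\sqrt{\omega^T\Theta\Theta^T\omega}|^q$, and since the constant $\E_g[|Z|^q]$ does not affect the $\argmin$, integrating over $\omega$ yields \eqref{eq_GaussianManiAppr20} for both $q=1$ and $q=2$.

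For (iii), with $q=2$ I would expand $\E_\omega[|\tilde{\sigma}_x-\sqrt{\omega^T\Theta\Theta^T\omega}|^2]$ into three terms. The constant $\tilde{\sigma}_x^2$ is dropped as it is irrelevant to the minimization; the quadratic term gives $\E_\omega[\omega^T\Theta\Theta^T\omega]=\mathrm{Tr}(\Theta\Theta^T)/d$ since $\E_\omega[\omega\omega^T]=(1/d)\mathbf{I}_d$; and the cross term is $-2\tilde{\sigma}_x\,\E_\omega[\sqrt{\omega^T\Theta\Theta^T\omega}]$. To evaluate the last expectation I would combine the Gaussian Laplace transform of the quadratic form, $\E_\omega[e^{-z\,\omega^T\Theta\Theta^T\omega}]=|\mathbf{I}+\tfrac{2z}{d}\Theta\Theta^T|^{-1/2}$, with the fractional-power identity $\sqrt{s}=\tfrac{1}{2\sqrt{\pi}}\int_0^\infty z^{-3/2}(1-e^{-sz})\,dz$. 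Taking expectations and integrating by parts — the boundary terms vanish because the Laplace transform equals $1$ at $z=0$ (so $z^{-1/2}h(z)\to 0$ as $z\to 0$) and tends to a constant as $z\to\infty$ — converts the $z^{-3/2}$ integral into the stated $z^{-1/2}$ integral of the derivative, and with $\Gamma(1/2)=\sqrt{\pi}$ this reproduces \eqref{eq_W2GaussianManiAppr2}.

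The main obstacle is step (i): the uniform-in-$\Theta$ control and the invocation of the quantitative CLT. The triangle-inequality reductions are what make the bound independent of $\Theta$, but one must verify that the Wasserstein triangle inequality applies to the projected law $\mu_\omega$ for almost every Gaussian direction $\omega$, and that the resulting averaged Wasserstein distance genuinely meets the weak-dependence and moment hypotheses of \cite{nadjahi2021fast}; matching the exponent $1/8$ requires tracking those moment conditions carefully rather than merely asserting asymptotic convergence. By contrast, steps (ii) and (iii) are essentially closed-form computations once the Gaussianity of $\nu^\Theta_\omega$ and the Laplace-transform identity are established.
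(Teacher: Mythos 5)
Your proposal is correct and takes essentially the same route as the paper's proof: the same pointwise triangle inequality plus Minkowski reduction to the $\Theta$-free quantity $\big(\int_{\omega\in\Omega_G}W_q^q(\mu_\omega,\mathcal{N}(0,\tilde{\sigma}_x^2))\,d\omega\big)^{1/q}$, which the paper bounds by $f^\mu(d)=(C'\E_\mu[\|X\|^2](d^{-5/4}+d^{-7/5}))^{1/2}=O(d^{-1/8})$ (note $\E_\mu[\|X\|^2]=O(d)$, which settles your concern about the exponent) using \cite[Corollary 3]{ReevesISIT17} for $q=2$ and $W_1\le W_2$ with Jensen's inequality for $q=1$, rather than the CLT of \cite{nadjahi2021fast}; the same one-dimensional Gaussian--Gaussian Wasserstein formula yielding \eqref{eq_GaussianManiAppr20}; and the same Laplace-transform evaluation of $\E_\omega[\sqrt{\omega^T\Theta\Theta^T\omega}]$ via $M_{U_\omega}(-z)=|\mathbf{I}+\frac{2z}{d}\Theta\Theta^T|^{-1/2}$. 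The only cosmetic difference is in the last step, where the paper uses $U_\omega^{-1/2}=\Gamma(1/2)^{-1}\int_0^\infty z^{-1/2}e^{-zU_\omega}\,dz$ and pulls the factor $U_\omega$ inside the integral to produce the MGF derivative directly, avoiding your integration by parts and its boundary-term check.
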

As summary to the current steps, the upper-bounds of \eqref{eq_SlicedWGAN} under $q=2$ are
\begin{align} \label{upperboundofslice}
&\min_\Theta\left(\int_{\omega\in\Omega}\inf_{\pi_\omega\in \Pi_\omega(\mu_\omega,\nu^\Theta_\omega)} \mathbb{E}_{\pi_\omega}[|\omega^T X - \omega^TG_\Theta (Z)|^2] d\omega \right)^{\frac{1}{2}}\notag\\
\le&\min_\Theta\left(\int_{\omega\in\Omega_G}\inf_{\pi_\omega\in \Pi_\omega(\mu_\omega,\nu^\Theta_\omega)} \mathbb{E}_{\pi_\omega}[|\omega^T X - \omega^T\Theta Z|^2] d\omega \right)^{\frac{1}{2}}\notag\\
\le&\min_\Theta\left(\int_{\omega\in\Omega_G}\inf_{\pi} \mathbb{E}_{\pi}[|\hat{X}_G - \omega^T\Theta Z|^2] d\omega \right)^{1/2}+O(d^{-1/8}).
\end{align}
where the optimal $\Theta$ in \eqref{upperboundofslice} can be solved by \eqref{eq_W2GaussianManiAppr2}. Then, let $r=d$, the final upper-bound in \eqref{upperboundofslice} becomes
\begin{equation}\label{ubvalueofslice}
\tilde{\sigma}_x\sqrt{1-\frac{d\Gamma(\frac{d+1}{2})^2}{2\Gamma(\frac{d}{2}+1)^2}}+O(d^{-1/8}).
\end{equation} 
\noindent Indeed one can show that \eqref{ubvalueofslice} tends to zero as $d\rightarrow \infty$, and thus linear generator in the second upper-bound of \eqref{upperboundofslice} is asymptotically optimal for \eqref{eq_SlicedWGAN}. Thus, in the sense of asymptotic optimality defined in Sec. \ref{sec:Problem}, even with the non-linear generator, the value of sliced WGAN under very high-dimensional non-Gaussian data cannot be smaller than that from the proposed linear generator \eqref{optimallinearparameter}. 

To solve the value \eqref{ubvalueofslice} of the upper bound \eqref{upperboundofslice}, we first apply singular value decomposition
(SVD) on parameters $\Theta=\mathbf{U}\Lambda\mathbf{V}^T$, and further simplify the objective using the Carlson-R function $R_{-a}(\mathbf{b}; \mathbf{z})$ \cite{carlson1977special}, where $\mathbf{b}, \mathbf{z}$ are $\R^d$ vectors. Let $\mathbf{S}=\Lambda\Lambda^T$. Now, the optimization problem becomes
\begin{equation}\label{R-form_optimal_prob}
\min_\mathbf{S}\frac{\mathrm{Tr}(\mathbf{S})}{d}-\tilde{\sigma}_x\sqrt{2d}\frac{\Gamma({\frac{d+1}{2})}}{\Gamma({\frac{d}{2}+1)}}R_{\frac{1}{2}}(\frac{1}{2}\mathbf{1};\mathrm{diag}(\mathbf{S})).
\end{equation}
Then, we prove that the objective of \eqref{R-form_optimal_prob}  is Schur-convex \cite{marshall11}, so the global minimum of \eqref{R-form_optimal_prob} is achieved when all the diagonal elements of $\mathbf{S}$ are equal. Thus, we find the optimal parameters \eqref{optimallinearparameter} of the linear generator, and obtain the upper bound value \eqref{ubvalueofslice}. The full proof of solving the upper bound is given in Sec. \ref{proofoptimalslicedWasserstein}.

\begin{remark}\label{remark:correlated}
From \cite{nadjahi2021fast}, under the weak dependence condition, the elements of the data vector $X$ can follow a Bernoulli shift, an autoregressive (AR) model, and a Gaussian process. For channel prediction, the AR model-based method is widely used by using past channel data without requiring current channel measurements via pilot \cite{wu2021channel}.
\end{remark}

\begin{remark}
As a sanity check, when $d=2$ and $q=2$, \eqref{eq_GaussianManiAppr} with linear generator can also be solved by elliptic integrals. The solution is the same as \eqref{optimallinearparameter}. The proof is given in appendix \ref{dimension2}.
\end{remark}
For the unprojected sliced WGAN \eqref{eq_SlicedWGANwithdata}, we show that the linear generator is also asymptotically optimal when $q=2$.
\begin{theorem}\label{thm:unprojectedsliced}
Under the assumption of Theorem \ref{optimalslicedWasserstein} with $\mu$ vanishing on hypersurfaces in $\R^d$, for the unprojected sliced WGAN \eqref{eq_SlicedWGANwithdata}, the linear generator \eqref{eq_highdim} with the same parameters as \eqref{optimallinearparameter} is asymptotically optimal when $q=2$.
\end{theorem}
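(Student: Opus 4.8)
The plan is to deduce the result directly from Theorem~\ref{optimalslicedWasserstein} by showing that, under the stated regularity on $\mu$, the inner infimum of the unprojected problem \eqref{eq_SlicedWGANwithdata} coincides with that of the projected problem \eqref{eq_SlicedWGAN} for \emph{every} fixed direction $\omega\in\Omega$ and \emph{every} parameter $\Theta$. Concretely, write $\mu_\omega$ for the law of $\omega^T X$ and $\nu^\Theta_\omega$ for the law of $\omega^T G_\Theta(Z)$. I claim
\[
\inf_{\pi\in\Pi(\mu,\nu^\Theta_\omega)}\E_\pi[|\omega^T X-\omega^TG_\Theta(Z)|^2]
=\inf_{\gamma\in\Pi(\mu_\omega,\nu^\Theta_\omega)}\E_\gamma[|U-Y|^2]
=W_2^2(\mu_\omega,\nu^\Theta_\omega),
\]
the rightmost quantity being exactly the inner term of \eqref{eq_SlicedWGAN}. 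The ``$\ge$'' direction is immediate: since the cost depends on $X$ only through $\omega^T X$, pushing any feasible $\pi\in\Pi(\mu,\nu^\Theta_\omega)$ forward under $(x,y)\mapsto(\omega^T x,y)$ yields a coupling in $\Pi(\mu_\omega,\nu^\Theta_\omega)$ with identical cost.

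For the reverse ``$\le$'' direction---the only one actually needed for asymptotic optimality---I would lift the one-dimensional optimal coupling back up to $\R^d\times\R$. Let $\gamma^*\in\Pi(\mu_\omega,\nu^\Theta_\omega)$ be the optimal (monotone) one-dimensional coupling, and disintegrate the data law as $\mu(dx)=\int_\R \mu^u(dx)\,\mu_\omega(du)$, where $\mu^u$ is the conditional law of $X$ given $\omega^T X=u$, supported on the hyperplane $\{x:\omega^T x=u\}$. Define $\pi(dx,dy):=\mu^{u}(dx)\,\gamma^*(du,dy)$, i.e. draw $(u,y)\sim\gamma^*$ and then $x\sim\mu^{u}$. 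A direct check of marginals shows $\pi\in\Pi(\mu,\nu^\Theta_\omega)$, and since $\omega^T x=u$ holds $\pi$-almost surely, its cost equals $\E_{\gamma^*}[|U-Y|^2]=W_2^2(\mu_\omega,\nu^\Theta_\omega)$. This is precisely where the hypothesis that $\mu$ vanishes on hypersurfaces enters: it guarantees that the projected marginal $\mu_\omega$ is non-atomic, so the disintegration $\{\mu^u\}$ is well defined and the monotone coupling $\gamma^*$ attains $W_2^2$, matching the one-dimensional framework underlying Theorem~\ref{prop:theta's}.

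With the per-$\omega$ identity in hand, I would integrate over $\omega$ (both over the uniform $\Omega$ and, via \cite[Proposition~1]{nadjahi2021fast}, over the Gaussian surrogate $\Omega_G$) to conclude that the objective of \eqref{eq_SlicedWGANwithdata} equals that of \eqref{eq_SlicedWGAN} for every $\Theta$; measurability of the lifted couplings in $\omega$ must be recorded so the outer integral is unaffected. Evaluating at the linear generator \eqref{eq_highdim} with parameters \eqref{optimallinearparameter} then produces the same value as in the projected problem, which by Theorem~\ref{optimalslicedWasserstein} and the upper-bound chain \eqref{upperboundofslice} is bounded by \eqref{ubvalueofslice} and hence tends to $0$ as $d\to\infty$. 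Asymptotic optimality for \eqref{eq_SlicedWGANwithdata} follows.

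The main obstacle I anticipate is the measure-theoretic lifting in the second step: one must argue carefully that the conditional laws $\mu^u$ exist and depend measurably on $u$ (and jointly on $\omega$), that the glued measure $\pi$ is a genuine probability coupling with the correct marginals, and that no mass concentrates on the hyperplanes---exactly the points controlled by the hypersurface-vanishing assumption. Everything after that is bookkeeping, since the equivalence of the inner values reduces the statement to a corollary of the already-established Theorem~\ref{optimalslicedWasserstein}.
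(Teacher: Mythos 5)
Your proposal is correct, and it follows the same overall skeleton as the paper---establish a per-direction identity between the inner problems of \eqref{eq_SlicedWGANwithdata} and \eqref{eq_SlicedWGAN}, then invoke Theorem \ref{optimalslicedWasserstein}---but it proves that identity by a genuinely different and more elementary mechanism. The paper's Lemma \ref{inner_discriminator_unproj_equal_proj} (proved in Sec. \ref{proof_inner_discriminator_unproj}) invokes the multi-to-one-dimensional optimal transport theory of \cite{chiappori2017multi}: it verifies the structural hypotheses on the level sets $X_\le(y,k)$ (Lemma \ref{prop:OT}), extracts the explicit Monge map of the unprojected problem, and shows it factors through the projection, $t^\Theta(x)=F_{\nu^\Theta_\omega}^{-1}\left(F_{\mu_\omega}(\omega^T x)\right)$ (Lemma \ref{multitoonetransport}); equality of values then follows by change of variables. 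You instead prove the value identity directly by a two-sided coupling argument: the pushforward $(x,y)\mapsto(\omega^Tx,y)$ gives ``$\ge$'', and gluing the disintegration $\mu(dx)=\int_\R\mu^u(dx)\,\mu_\omega(du)$ to a coupling $\gamma\in\Pi(\mu_\omega,\nu^\Theta_\omega)$ gives ``$\le$''. Your route buys economy and generality: it needs no existence, uniqueness, or monotone structure of an optimal map (since lifting an arbitrary, even near-optimal, $\gamma$ already yields the inequality after taking infima), it works verbatim for nonlinear generators and for $q=1$ as well as $q=2$, and---contrary to your own accounting---it does not need the hypersurface-vanishing assumption at all: regular conditional laws $\mu^u$ exist for any Borel probability measure on a Polish space, and your ``$\le$'' step uses only that the lifted coupling has the correct marginals and that $\omega^Tx=u$ holds $\pi$-a.s. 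That assumption is genuinely needed only on the paper's route, where non-atomicity of $\mu_\omega$ and the regularity hypotheses of \cite{chiappori2017multi} are essential. What the paper's heavier machinery buys in exchange is the explicit closed-form optimal transport map for the unprojected problem, which is of independent interest but not required for asymptotic optimality; as you correctly observe, only the ``$\le$'' direction is needed for that conclusion.
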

\begin{proof}
We show that \eqref{eq_SlicedWGANwithdata} is indeed equivalent to \eqref{eq_SlicedWGAN} when $q=2$ by the following lemma.
\begin{lemma}\label{inner_discriminator_unproj_equal_proj}
For any $\omega\in\Omega$, the values of the inner-discriminator probelms of \eqref{eq_SlicedWGANwithdata} and \eqref{eq_SlicedWGAN} with the linear generator are the same when $q=2$, 
\begin{equation}\label{innerdiscriminatorequality}
\begin{aligned}
\inf_{\pi'\in \Pi'(\mu,\nu^\Theta_\omega)} \mathbb{E}_{\pi'}[|\omega^T X - \omega^T\Theta Z|^2]=\inf_{\pi_\omega\in \Pi_\omega(\mu_\omega,\nu^\Theta_\omega)} \mathbb{E}_{\pi_\omega}[|\omega^T X - \omega^T\Theta Z|^2].
\end{aligned}
\end{equation}
\end{lemma}
\noindent Though the optimal transport functions for the L.H.S. and R.H.S. are clearly different by definition, we prove that their corresponding values are the same. First, the L.H.S. in \eqref{innerdiscriminatorequality} belongs to the general multi-to-one-dimensional optimal transport problems with closed-form solutions identified in \cite{chiappori2017multi}. Then, we identify its analytical properties to prove that the optimal transport function of the L.H.S. of \eqref{innerdiscriminatorequality} is well-defined, and its value equals that of the R.H.S. The detailed proof of Lemma \ref{inner_discriminator_unproj_equal_proj} is in Sec. \ref{proof_inner_discriminator_unproj}. Finally, we can prove the desired result for \eqref{eq_SlicedWGANwithdata} following the steps to prove that for the original sliced WGAN \eqref{eq_SlicedWGAN} in Theorem \ref{optimalslicedWasserstein}.
\end{proof}

For application, our upcoming simulations in Sec. \ref{sec:simu} show that the linear generator \eqref{optimallinearparameter} can be a low-complexity generator with empirical data. For data without the weak dependence condition, the state-of-the-art WGAN solver can be obtained via gradient-flow of stochastic differential equation \cite{huang2023gans}\cite{huang2024generative}, and our approach can be a low-complexity initialization and has the potential for faster convergence.

\section{The Proofs}\label{sec:proof}
Here we give proofs for our main theorems.

\subsection{Proof for Theorem \ref{thm:theta's ReLU}} \label{proofReLU} 
In the following, we first solve the first sub-problem with $\theta_2 \in \R_+$ in \eqref{eqW2sub}. With $d=1$, we recall the following result for the inner-discriminator problem \eqref{P} of \eqref{eq_transportGANE}, by taking $\nu^\Theta := P_{G_\Theta(z)}$, the measure generated by the generator NN (with parameter
$\Theta:=(\theta_1,\theta_2)$ in \eqref{G}). Let $F_\mu$ and $F_{\nu^\Theta}$ denote the CDFs of the measures $\mu$ and $\nu^\Theta$ on $\R$.

\begin{lemma}[\cite{AP03}, Theorem 5.1] \label{prop:1-d result}
Define $t^\Theta:\R\to \R\cup\{\infty\}$ by
\begin{equation}\label{quantile}
t^\Theta(x):= \sup\{y\in\R: F_{\nu^\Theta}(y)\le F_\mu(x)\}.
\end{equation}
For $q=1,2$, if $\mu$ has no atom ($\mu$ is a continuous distribution)
\begin{align}
&\inf_{\pi\in \Pi(\mu,\nu^\Theta)} \int_{\R\times\R} |x-y|^q d\pi(x,y) = \int_\R |x-t^\Theta(x)|^q d\mu(x), \label{Wd_2}
\end{align}
\end{lemma}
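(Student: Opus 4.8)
The plan is to read the function in \eqref{quantile} as the right-continuous generalized inverse of $F_{\nu^\Theta}$ composed with $F_\mu$, that is $t^\Theta=F_{\nu^\Theta}^{-1}\circ F_\mu$ with $F_{\nu^\Theta}^{-1}(u):=\sup\{y:F_{\nu^\Theta}(y)\le u\}$, and then to verify that the monotone (comonotone) coupling it induces is simultaneously feasible and optimal for the convex cost $c(x,y)=|x-y|^q$, $q\in\{1,2\}$. The inequality ``$\le$'' in \eqref{Wd_2} is the feasibility half: since $\mu$ has no atom, $F_\mu$ is continuous, so the probability integral transform gives $F_\mu(X)\sim\mathrm{Unif}(0,1)$ for $X\sim\mu$, and the standard quantile identity $F_{\nu^\Theta}^{-1}(U)\sim\nu^\Theta$ for $U\sim\mathrm{Unif}(0,1)$ then shows that $t^\Theta$ pushes $\mu$ forward to $\nu^\Theta$. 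Hence $\pi^\ast:=(\mathrm{id}\times t^\Theta)_{\#}\mu\in\Pi(\mu,\nu^\Theta)$ is admissible, and evaluating the cost on $\pi^\ast$ yields exactly $\int_\R|x-t^\Theta(x)|^q\,d\mu(x)$. The exceptional set $\{t^\Theta(x)=\infty\}$ is $\mu$-null because $F_\mu(x)<1$ at every finite $x$, so $t^\Theta$ is defined $\mu$-almost everywhere.

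For the reverse inequality I would invoke the optimality of comonotone couplings under convex costs. The coupling $\pi^\ast$ is concentrated on the graph $\{(x,t^\Theta(x))\}$, a nondecreasing subset of $\R\times\R$ (both $F_\mu$ and the generalized inverse are nondecreasing); any such monotone set is $c$-cyclically monotone for $c(x,y)=|x-y|^q$, which at the two-point level is the rearrangement inequality $|x_1-y_1|^q+|x_2-y_2|^q\le|x_1-y_2|^q+|x_2-y_1|^q$ valid whenever $x_1<x_2$ and $y_1\le y_2$. Because a coupling supported on a $c$-cyclically monotone set is optimal---the structural result underlying \cite{AP03}, see also \cite{peyre2019computational}---$\pi^\ast$ attains the infimum, giving ``$\ge$'' and hence the stated equality. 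An equivalent shortcut is to quote the one-dimensional quantile representation $\inf_\pi\int|x-y|^q\,d\pi=\int_0^1|F_\mu^{-1}(t)-F_{\nu^\Theta}^{-1}(t)|^q\,dt$ and change variables $t=F_\mu(x)$, a step licit precisely because $\mu$ is atomless.

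The main obstacle is the rigor of the optimality step, not feasibility. When $q=1$ the cost is convex but not strictly convex, so the rearrangement inequality is non-strict and the optimal plan need not be unique; one must check that the monotone coupling still attains the minimum value, since the strict comonotone identification available for $q=2$ no longer pins the support down. One also needs the integrability condition built into $\Pi(\mu,\nu^\Theta)$ in \eqref{P} to ensure the infimum is finite and that Kantorovich duality and the cyclical-monotonicity criterion apply, together with a little care at the at most countably many discontinuities of $F_{\nu^\Theta}$ where the generalized inverse jumps. These are exactly the measure-theoretic points settled in \cite{AP03}, so citing that theorem discharges the bookkeeping.
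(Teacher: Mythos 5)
The paper never proves this lemma: it is imported wholesale as a citation to \cite{AP03}, Theorem 5.1, and the surrounding text only \emph{uses} it (e.g., to derive the closed form \eqref{quantile'} of $t^\Theta$ in Sec.~\ref{proofSimoid}). Your argument is therefore not a parallel of anything in the paper but a self-contained reconstruction, and it is the standard proof of this one-dimensional fact: feasibility of the monotone coupling $(\mathrm{id}\times t^\Theta)_{\#}\mu$ via the probability integral transform (the only place atomlessness of $\mu$ is needed), then optimality because a nondecreasing graph is $c$-cyclically monotone for the convex costs $|x-y|^q$, $q\in\{1,2\}$, or alternatively by reduction to the quantile formula $\inf_\pi\int|x-y|^q\,d\pi=\int_0^1|F_\mu^{-1}(t)-F_{\nu^\Theta}^{-1}(t)|^q\,dt$ of \cite{peyre2019computational}. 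Your handling of $q=1$ is also right: strict convexity and uniqueness fail, but the non-strict rearrangement inequality still gives cyclical monotonicity, hence optimality of the monotone plan. Two small points should be repaired. First, your reason that $\{t^\Theta=\infty\}$ is $\mu$-null, namely that ``$F_\mu(x)<1$ at every finite $x$,'' is false for compactly supported $\mu$; the correct argument is that $\{x: F_\mu(x)=1\}$ is a half-line $[a,\infty)$ with $\mu([a,\infty))=1-F_\mu(a^-)=0$ by continuity of $F_\mu$. Second, the delicate levels for the sup-type inverse $F_{\nu^\Theta}^{-1}(u)=\sup\{y:F_{\nu^\Theta}(y)\le u\}$ are the \emph{flat spots} of $F_{\nu^\Theta}$ (where the sup-inverse exceeds the usual inf-inverse), not its discontinuities (atoms of $\nu^\Theta$ create flat spots of the inverse, which are harmless); since there are at most countably many flat levels, they are null for $F_\mu(X)\sim\mathrm{Unif}(0,1)$ and the pushforward identity $t^\Theta_{\#}\mu=\nu^\Theta$ survives. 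With those corrections, and granting the standard theorem that a finite-cost plan concentrated on a $c$-cyclically monotone set is optimal (legitimate here because $\Pi(\mu,\nu^\Theta)$ is defined with the product integrability condition, which makes the infimum finite), your proof is complete and matches what \cite{AP03} establishes.
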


For the inner discriminator problem \eqref{P} with $q=2, d=1$, $\E_\mu [P(X,\theta_1,\theta_2)]$ for given $(\mu, \theta_1, \theta_2)$ equals to \eqref{Wd_2}, where $t^\Theta(x)$ is defined as in \eqref{quantile}.

 Now we need to find a closed-form $t^\Theta(x)$ to continue. From \eqref{G}, let $\Psi$ denote the CDF of $h(Z)$. As $\Psi$ is neither continuous nor strictly increasing, $t^\Theta(x)=\theta_1+\theta_2\Psi^{-1}(F_\mu(x))$ does not hold in general, and we cannot directly apply the proof of Proposition \ref{prop:theta's} in Appendix \ref{proofSimoid}. Assume $\theta_2 \in \R_+$, our target sub-problem becomes \begin{equation}\label{GAN-sub}
\min_{\theta_1\in\R,\theta_2\ge 0}  \int_\R |x-t^\Theta(x)|^2 d\mu(x).
\end{equation}
Under the condition $\theta_2\ge 0$, observe that
\begin{equation}\label{gett^theta}
\begin{aligned}
\Psi\bigg(\frac{t^\Theta(x)-\theta_1}{\theta_2}\bigg) =   \P\bigg(h(Z) \le \frac{t^\Theta(x)-\theta_1}{\theta_2}\bigg)
= \P\left(G_\Theta(Z) \le t^\Theta(x)\right) = F_{\nu^\Theta}\left(t^\Theta(x)\right).
\end{aligned}
\end{equation}
Only for $F_\mu(x) > 1/2$, \eqref{gett^theta} equals to $F_\mu(x)$ from \eqref{quantile}. From \eqref{cdf ReLU},
\begin{equation}\label{quantile' ReLU}
t^\Theta(x) =
\begin{cases}
\theta_1,\quad &\hbox{if}\ F_\mu(x)\le 1/2,\\
\theta_1 + \theta_2 \Phi^{-1}(F_\mu(x)),\quad &\hbox{if}\ F_\mu(x)> 1/2.
\end{cases}
\end{equation}

With closed-form representation \eqref{Wd_2}\eqref{quantile' ReLU} for the inner problem \eqref{P}, WGAN \eqref{eq_transportGANE} with $q=2,d=1$ can be simplified to be the following stochastic minimization problem
\begin{equation}\label{GAN' ReLU}
\min_{\theta_1 \in\R, \theta_2 \in\R_+}  \E_\mu\left[\left|X-t^\Theta(X)\right|^2 \right].
\end{equation} 
Together with \eqref{G}, the sub-problem \eqref{GAN' ReLU} becomes the constrained optimization
problem
\begin{equation}\label{constrained ReLU}
\begin{split}
&\min_{\theta_1,\theta_2\in\R}  J(\theta_1,\theta_2) \\
:=&  \int_{\{F_\mu(x)\le 1/2\}} \Big(\theta_1 -x\Big)^2 F_\mu'(x) dx+\int_{\{F_\mu(x)> 1/2\}} \Big(\theta_1 + \theta_2 \Phi^{-1}(F_\mu(x))-x\Big)^2 F_\mu'(x) dx\\
&\hbox{subject to}\quad g(\theta_1,\theta_2) := -\theta_2\le 0.
\end{split}
\end{equation}
We will show that even with a jump, the KKT condition of \eqref{constrained ReLU} can be simplified as 
\begin{equation}\label{KKT1''}
\begin{split}
\theta_1 &= \E[X] -  \frac{\theta_2}{\sqrt{2\pi}},\\
\lambda/2 &= \frac{\theta_2}{2} \left(1-\frac{1}{\pi}\right) - \text{Cov}(X,\Phi^{-1}(F_\mu(X)) 1_{\{F_\mu(X)> 1/2\}}).
\end{split}
\end{equation}
where $\lambda\ge 0$ is the Lagrange multiplier. 

The corresponding first-order KKT conditions for \eqref{constrained ReLU} are 
\begin{align}
\nabla J(\theta_1,\theta_2) +\lambda \nabla g(\theta_1,\theta_2) &= 0,\label{KKT1}\\
\lambda g(\theta_1,\theta_2) &=0.\label{KKT2}
\end{align}
By direct calculation, \eqref{KKT1} becomes
\begin{equation}
\begin{aligned}
\int_{\{F_\mu(x)\le 1/2\}}  \Big(\theta_1 -x\Big)  F_\mu'(x) dx+\int_{\{F_\mu(x)> 1/2\}}  \Big(\theta_1 + \theta_2 \Phi^{-1}(F_\mu(x))-x\Big)  F_\mu'(x) dx =0,\label{11}
\end{aligned}
\end{equation}
\begin{equation}
\begin{aligned}
&\int_{\{F_\mu(x)> 1/2\}}  \Big(\theta_1 + \theta_2 \Phi^{-1}(F_\mu(x))-x\Big) \Phi^{-1}(F_\mu(x))F_\mu'(x)dx=\frac{\lambda}{2}.\label{22}
\end{aligned}
\end{equation}
Recall that $X$ is a random variable with CDF $F_\mu$. Then, \eqref{11} and \eqref{22} can be written as
\begin{equation}
\begin{aligned}
&\theta_1 + \theta_2 \E\left[\Phi^{-1}(F_\mu(X)) 1_{\{F_\mu(X)> 1/2\}} \right] -\E[X]=0,\label{11'}
\end{aligned}
\end{equation}
and
\begin{equation}
\begin{aligned}
&\theta_1  \E\left[\Phi^{-1}(F_\mu(X)) 1_{\{F_\mu(X)> 1/2\}}\right]
+ \theta_2  \E\left[(\Phi^{-1}(F_\mu(X)))^2 1_{\{F_\mu(X)> 1/2\}}\right]\\
&-  \E\left[X \Phi^{-1}(F_\mu(X)) 1_{\{F_\mu(X)> 1/2\}}\right] -\lambda/2=0,\label{22'}
\end{aligned}
\end{equation}
respectively. Plugging 
\[\theta_1 = \E[X] -  \theta_2 \E\left[\Phi^{-1}(F_\mu(X)) 1_{\{F_\mu(X)> 1/2\}}\right],\]
(from \eqref{11'}), into \eqref{22'}, we obtain
\begin{align}
&\E[X] \E\left[\Phi^{-1}(F_\mu(X)) 1_{\{F_\mu(X)> 1/2\}} \right]
+\theta_2 \text{Var}\left(\Phi^{-1}(F_\mu(X))1_{\{F_\mu(X)> 1/2\}}\right)\notag\\
&- \E\left[X \Phi^{-1}(F_\mu(X)) 1_{\{F_\mu(X)> 1/2\}} \right]-\lambda/2=0. \label{22''} 
\end{align}
Hence, a solution $(\theta_1,\theta_2,\lambda)$ to \eqref{KKT1} must equivalently satisfy \eqref{11'} and \eqref{22''}.
Note that $F_\mu(X)\sim\operatorname{Uniform}[0,1]$ from \cite[Lemma
1]{shayevitz2011optimal}, so that the CDF of $\Phi^{-1}(F_\mu(X))$ is simply $\Phi$. That is, 
\begin{equation}\label{identical}
\Phi^{-1}(F_\mu(X))\overset{(d)}{=}Z\sim\mathcal{N}(0,1),
\end{equation} 
where $\overset{(d)}{=}$ denotes equality in distribution. It follows that
\begin{align}
&\E\left[\Phi^{-1}(F_\mu(X)) 1_{\{F_\mu(X)> 1/2\}} \right] 
= \E[Z 1_{\{\Phi(Z)>1/2\}}] = \E[Z 1_{\{ Z>0\}}] = \frac{1}{\sqrt{2\pi}},\label{E calc}\\
&\E\left[(\Phi^{-1}(F_\mu(X)))^2 1_{\{F_\mu(X)> 1/2\}} \right] 
=  \E[Z^2 1_{\{\Phi(Z)>1/2\}}] = \E[Z^2 1_{\{ Z>0\}}] =  \frac{1}{2}. \label{E^2 calc}
\end{align}
In view of this, \eqref{11'} and \eqref{22''} reduce to \eqref{KKT1''}.

\begin{lemma}\label{prop:theta's ReLU}
Suppose that $\mu$ has no atom. Then, \eqref{constrained ReLU} has a unique minimizer $(\theta_1^*,\theta_2^*)\in\R\times\R_+$ given by
\begin{equation}\label{theta's ReLU}
\begin{split}
\theta^*_1 &=  \E[X] -  \frac{\sqrt{2\pi}}{\pi-1} \Cov\big(X, \Phi^{-1}(F_\mu(X)) 1_{\{F_\mu(X)> 1/2\}}  \big),\\
\theta^*_2 &=   \frac{2\pi}{\pi-1} \Cov\left(X, \Phi^{-1}(F_\mu(X)) 1_{\{F_\mu(X)> 1/2\}} \right)\ge 0.
\end{split}
\end{equation}
\end{lemma}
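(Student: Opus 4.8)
The plan is to reuse, essentially verbatim, the two-step template of Sec.~\ref{proofSimoid} that proved Theorem~\ref{prop:theta's}, now applied to the truncated integrand coming from the ReLU quantile \eqref{quantile' ReLU}. Abbreviate $C:=\Cov\left(X,\Phi^{-1}(F_\mu(X))1_{\{F_\mu(X)>1/2\}}\right)$. The reduced KKT system \eqref{KKT1''} already parametrizes every stationary point of \eqref{constrained ReLU} through $C$, so the lemma collapses to two assertions: (i) $C\ge 0$, which makes the candidate $\theta_2^\ast$ automatically satisfy the constraint $\theta_2\ge 0$; and (ii) the objective $J$ in \eqref{constrained ReLU} is strictly convex, so that the stationary point it yields is the unique global minimizer.

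First I would establish $C\ge 0$. The map $x\mapsto \Phi^{-1}(F_\mu(x))1_{\{F_\mu(x)>1/2\}}$ is nondecreasing: it vanishes on $\{F_\mu(x)\le 1/2\}$, equals the strictly positive, nondecreasing function $\Phi^{-1}(F_\mu(x))$ on $\{F_\mu(x)>1/2\}$, and the two branches match at the threshold because $\Phi^{-1}(1/2)=0$, so the indicator introduces no downward jump. Feeding this monotone map into the independent-copy inequality used in \eqref{Cov nonn} --- take an i.i.d.\ copy $X'$ of $X$ and use $(X-X')\bigl(g(X)-g(X')\bigr)\ge 0$ for nondecreasing $g$ --- gives $C\ge 0$. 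This monotonicity verification, concealed by the truncation, is the one genuinely new point relative to Theorem~\ref{prop:theta's}, and is where I expect the (modest) main obstacle to lie: without the identity $\Phi^{-1}(1/2)=0$ the truncated map need not be monotone and the sign of $C$ would be unclear.

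Finally I would confirm strict convexity and read off the minimizer. Since $g(\theta_1,\theta_2)=-\theta_2$ is affine, $H_g=0$, and differentiating $J$ twice and invoking \eqref{E calc}--\eqref{E^2 calc} produces the constant Hessian $H_J=\begin{bmatrix}2 & 2/\sqrt{2\pi}\\ 2/\sqrt{2\pi} & 1\end{bmatrix}$, whose $(1,1)$-entry is positive and whose determinant is $2-\tfrac{2}{\pi}=2\bigl(1-\tfrac1\pi\bigr)>0$; hence $H_J\succ 0$ and $J$ is strictly convex. Solving \eqref{KKT1''} with $\lambda=0$ gives $\theta_2^\ast=\frac{2\pi}{\pi-1}C$ and $\theta_1^\ast=\E[X]-\theta_2^\ast/\sqrt{2\pi}=\E[X]-\frac{\sqrt{2\pi}}{\pi-1}C$, exactly \eqref{theta's ReLU}; because $C\ge 0$ we have $\theta_2^\ast\ge 0$, so this unconstrained stationary point is feasible. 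A strictly convex $J$ has a unique unconstrained minimizer, and since it lies in the feasible half-plane $\{\theta_2\ge 0\}$ it is also the unique minimizer of \eqref{constrained ReLU}. Equivalently, one may run the $\lambda=0$ versus $\theta_2=0$ dichotomy of Sec.~\ref{proofSimoid}: when $C>0$, setting $\theta_2=0$ forces $\lambda=-2C<0$ in \eqref{KKT1''}, contradicting $\lambda\ge 0$ from \eqref{KKT1}--\eqref{KKT2}, so $\lambda=0$ and the stated formulas follow.
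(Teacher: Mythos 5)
Your proposal is correct and takes essentially the same approach as the paper: both establish $\Cov\left(X,\Phi^{-1}(F_\mu(X))1_{\{F_\mu(X)>1/2\}}\right)\ge 0$ from monotonicity of the truncated map via the independent-copy inequality \eqref{Cov nonn}, reduce to the KKT system \eqref{KKT1''} using \eqref{E calc}--\eqref{E^2 calc}, and settle global optimality/uniqueness by positive definiteness of the same constant Hessian. Your framing via strict convexity of $J$ (rather than the paper's explicit case split between strictly positive and zero covariance) is a presentational variant the paper itself acknowledges as equivalent, since both hinge on the identical Hessian check.
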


\begin{proof}
We first show that the covariance term in \eqref{KKT1''} is non-negative. As $\Phi^{-1}$ is strictly increasing and $F_\mu$ is nondecreasing, the map $x\mapsto \Phi^{-1}(F_\mu(x)) 1_{\{F_\mu(x)> 1/2\}}$ is nondecreasing. This readily implies
\begin{equation}\label{eq_positiveCovReLU}
\Cov\left(X, \Phi^{-1}(F_\mu(X))1_{\{F_\mu(X)> 1/2\}}\right)\ge0
\end{equation}
in \eqref{KKT1''} since 
\begin{equation}
\begin{aligned}
&(x-x')\big(\Phi^{-1}(F_\mu(x))1_{\{F_\mu(x)> 1/2\}} - \Phi^{-1}(F_\mu(x'))1_{\{F_\mu(x')> 1/2\}}\big)\ge0 \notag
\end{aligned}
\end{equation}
for all $x, x'\in\R$. Specifically, by taking an independent but same distribution copy $X'$ of $X$,
\begin{equation}\label{Cov nonn}
\begin{aligned}
0&\le \E[(X-X')\cdot \left(\Phi^{-1}(F_\mu(X))1_{\{F_\mu(X)> 1/2\}} - \Phi^{-1}(F_\mu(X'))1_{\{F_\mu(X')> 1/2\}}\right)] \\
& = 2 \Cov\left(X, \Phi^{-1}(F_\mu(X))1_{\{F_\mu(X)> 1/2\}}\right).
\end{aligned}
\end{equation}
Specifically, we deal with the two cases where $\Cov\left(X, \Phi^{-1}(F_\mu(X)) 1_{\{F_\mu(X)> 1/2\}}  \right)$ is strictly positive and zero separately. 

\noindent
{\bf Case I:} $\Cov\left(X, \Phi^{-1}(F_\mu(X)) 1_{\{F_\mu(X)> 1/2\}}  \right)> 0$. If $\theta_2 =0$, \eqref{KKT1''} entails 

\noindent$\lambda= -2\Cov\left(X, \Phi^{-1}(F_\mu(X)) 1_{\{F_\mu(X)> 1/2\}}  \right)<0$, which violates the requirement $\lambda\ge 0$ in \eqref{KKT1}-\eqref{KKT2}. If $\lambda=0$, solving \eqref{KKT1''} yields \eqref{theta's ReLU}.

That is, the KKT condition gives a unique candidate optimizer $(\theta_1^*,\theta_2^*,\lambda^*)$, with $(\theta_1^*,\theta_2^*)$ as in \eqref{theta's ReLU} and $\lambda^*=0$. To check the corresponding second-order condition, let $H_J$ and $H_g$ denote the Hessian matrices of $J(\theta_1,\theta_2)$ and $g(\theta_1,\theta_2)$, respectively. For $H_J$,
\[\frac{\partial^2 J(\theta_1,\theta_2)}{{\partial\theta_1}^2}=2,
\]
\[\frac{\partial^2 J(\theta_1,\theta_2)}{\partial\theta_1\partial\theta_2}=2\E\left[\Phi^{-1}(F_\mu(X)) 1_{\{F_\mu(X)> 1/2\}} \right],
\]from \eqref{E calc} and 
\[
\frac{\partial^2 J(\theta_1,\theta_2)}{{\partial\theta_2}^2}=2\E\left[(\Phi^{-1}(F_\mu(X)))^2 1_{\{F_\mu(X)> 1/2\}}\right], 
\]
from \eqref{E^2 calc}.
Clearly, $H_g = O_{2\times 2}$, which implies
\begin{equation}\label{Hessian ReLU}
\begin{aligned}
H_J + \lambda H_g
=2\begin{bmatrix}
1 &  \frac{1}{\sqrt{2\pi}}\\
 \frac{1}{\sqrt{2\pi}}& \frac12
 \end{bmatrix},
\end{aligned}
\end{equation}
which is positive definite. Hence, we conclude that $(\theta_1^*,\theta_2^*)$ in \eqref{theta's ReLU} is the unique minimizer of \eqref{constrained ReLU}. \\
\noindent {\bf Case II:} $\Cov\left(X, \Phi^{-1}(F_\mu(X)) 1_{\{F_\mu(X)> 1/2\}}  \right)= 0$. As \eqref{KKT1''} entails $\lambda = \theta_2 (1-\frac{1}{\pi})$, we have $\lambda=\theta_2 =0$ (by recalling complementary slackness that either $\lambda =0$ or $\theta_2 =0$). That is, the KKT condition gives a unique candidate optimizer $(\theta_1^*,\theta_2^*,\lambda^*) = (\E[X],0,0)$. Since $H_J+\lambda H_g$ is positive definite, we conclude that $(\theta_1^*,\theta_2^*) = (\E[X],0)$ is the unique minimizer of \eqref{constrained ReLU}.

\noindent  As a remark, another way to prove is to show $H_J$ as \eqref{Hessian  ReLU} is positive definite in both cases first, and then $J(\theta_1, \theta_2)$ is convex. Since both proofs require checking the Hessian matrix, there is no simpler way to prove.
\end{proof}

\begin{lemma}\label{coro:ReLU}
Suppose that $\mu$ has no atom and $h(x)=\max\{x,0\}$ for all $x\in\R$.
The sub-problem \eqref{GAN-sub} has a unique minimizer $\Theta^* = (\theta_1^*,\theta_2^*)\in\R\times\R_+$, defined as in \eqref{theta's ReLU}. Moreover, the minimum value is
\begin{equation}\label{min value ReLU}
\begin{split}
&\int_\R |x-t^{\Theta^*}(x)|^2 d\mu(x)
= \Var(X) - \frac{2\pi}{\pi-1} \Cov\left(X,\Phi^{-1}(F_\mu(X))1_{\{F_\mu(X)>1/2\}}\right)^2.
\end{split}
\end{equation}
\end{lemma}

\begin{proof}
As \eqref{GAN-sub} is equivalent to \eqref{constrained ReLU}, Lemma~\ref{prop:theta's ReLU} directly shows that $\Theta^* = (\theta_1^*,\theta_2^*)\in\R\times\R_+$ as in \eqref{theta's ReLU} is the unique minimizer of \eqref{GAN-sub}. Thanks to \eqref{quantile' ReLU} and \eqref{theta's ReLU},
\begin{align}
&\int_\R|x-t^{\Theta^*}(x)|^2 d\mu(x) \notag\\
=& \int_{\{F_\mu(x)\le 1/2\}} \left(x-\E[X] + \frac{\theta^*_2}{\sqrt{2\pi}} \right)^2 d\mu(x)
+ \int_{\{F_\mu(x)> 1/2\}} \left(x-\E[X] -\frac{\theta^*_2}{\sqrt{2\pi}}  -\theta^*_2\Psi^{-1}(F_\mu(x))\right)^2 d\mu(x)\notag\\
=& \int_{\{F_\mu(x)\le 1/2\}} (x-\E[X])^2 d\mu(x)
+\sqrt{\frac{2}{\pi}}\theta^*_2 \int_{\{F_\mu(x)\le 1/2\}} (x-\E[X])d\mu(x)
+ \frac{(\theta^*_2)^2}{2\pi} \int_\R 1_{\{F_\mu(x)\le 1/2\}} d\mu(x)\notag\\
+& \int_{\{F_\mu(x)> 1/2\}} (x-\E[X])^2 d\mu(x) 
-2\theta^*_2 \int_{\{F_\mu(x)>1/2\}} (x-\E[X]) \Big(\Psi^{-1}(F_\mu(x))-\frac{1}{\sqrt{2\pi}}\Big)d\mu(x)\notag\\
+& (\theta^*_2)^2 \int_{\{F_\mu(x)>1/2\}} \Big(\Psi^{-1}(F_\mu(x))-\frac{1}{\sqrt{2\pi}}\Big)^2 d\mu(x).\label{minv1}
\end{align}
Observe that
\begin{align}
& \int_{\{F_\mu(x)>1/2\}} (x-\E[X]) \Big(\Psi^{-1}(F_\mu(x))-\frac{1}{\sqrt{2\pi}}\Big)d\mu(x)\notag\\
=& \int_\R (x-\E[X]) \Big(\Psi^{-1}(F_\mu(x)) 1_{\{F_\mu(x)>1/2\}}
-\frac{1}{\sqrt{2\pi}} 1_{\{F_\mu(x)>1/2\}}\Big)d\mu(x)\notag\\
=&  \int_\R (x-\E[X]) \Big(\Psi^{-1}(F_\mu(x)) 1_{\{F_\mu(x)>1/2\}}-\frac{1}{\sqrt{2\pi}} \Big)d\mu(x)
+ \frac{1}{\sqrt{2\pi}}\int_\R(x-\E[X]) 1_{\{F_\mu(x)\le 1/2\}} d\mu(x)\notag\\
=& \Cov\left(X, \Psi^{-1}(F_\mu(x)) 1_{\{F_\mu(x)>1/2\}}\right) 
+ \frac{1}{\sqrt{2\pi}}\int_\R(x-\E[X]) 1_{\{F_\mu(x)\le 1/2\}} d\mu(x),\label{=Cov}
\end{align}
where the last equality follows from \eqref{E calc}. Similarly,
\begin{align}
& \int_{\{F_\mu(x)>1/2\}} \Big(\Psi^{-1}(F_\mu(x))-\frac{1}{\sqrt{2\pi}}\Big)^2 d\mu(x)\notag\\
=& \int_\R  \Big(\Psi^{-1}(F_\mu(x)) 1_{\{F_\mu(x)>1/2\}}-\frac{1}{\sqrt{2\pi}} 1_{\{F_\mu(x)>1/2\}}\Big)^2 d\mu(x)\notag\\
=&  \int_\R \Big(\Psi^{-1}(F_\mu(x)) 1_{\{F_\mu(x)>1/2\}}-\frac{1}{\sqrt{2\pi}} \Big)^2 d\mu(x)
- \frac{1}{2\pi}\int_\R  1_{\{F_\mu(x)\le 1/2\}} d\mu(x)\notag\\
=& \Var\left(\Psi^{-1}(F_\mu(x)) 1_{\{F_\mu(x)>1/2\}}\right) - \frac{1}{2\pi}\int_\R 1_{\{F_\mu(x)\le 1/2\}} d\mu(x)\notag\\
=& \frac{\pi-1}{2\pi} - \frac{1}{2\pi}\int_\R 1_{\{F_\mu(x)\le 1/2\}} d\mu(x),\label{=Var}
\end{align}
where the last two equalities follow from \eqref{E calc} and \eqref{E^2 calc}. Thanks to \eqref{=Cov} and \eqref{=Var}, we conclude from \eqref{minv1} that
\begin{align*}
&\int_\R |x-t^{\Theta^*}(x)|^2 d\mu(x) \\
=& \Var(X) - 2\theta^*_2 \Cov\left(X, \Psi^{-1}(F_\mu(X)) 1_{\{F_\mu(x)> 1/2\}} \right)+ (\theta^*_2)^2  \frac{\pi-1}{2\pi} \\
=&\Var(X) - \frac{2\pi}{\pi-1} \Cov\left(X, \Psi^{-1}(F_\mu(X)) 1_{\{F_\mu(x)> 1/2\}} \right)^2,
\end{align*}
where the last equality is due to $\theta^*_2$ in \eqref{theta's ReLU}.
\end{proof}

Next, we go to $\theta_2\le 0$. The target sub-problem is 
\begin{equation}\label{GAN-sub'}
\min_{\theta_1\in\R,\theta_2\le 0}  \int_\R |x-t^\Theta(x)|^2 d\mu(x).
\end{equation}

\begin{lemma}\label{ReLUthetanegative}
Suppose that $\mu$ has no atom and $h(x)=\max\{x,0\}$ for all $x\in\R$. The sub-problem \eqref{GAN-sub'} has a unique minimizer $\Theta^* = (\theta_1^*,\theta_2^*)$, defined by
\begin{equation}\label{theta's ReLU'}
\begin{split}
\theta^*_1 &=  \E[X] + \frac{\sqrt{2\pi}}{\pi-1} \Cov\big(X, \Phi^{-1}(F_\mu(X)) 1_{\{F_\mu(X)\le 1/2\}}  \big),\\
\theta^*_2 &=   -\frac{2\pi}{\pi-1} \Cov\left(X, \Phi^{-1}(F_\mu(X)) 1_{\{F_\mu(X)\le1/2\}} \right).
\end{split}
\end{equation}
Moreover, the minimum value is
\begin{equation}\label{min value ReLU'}
\begin{aligned}
&\int_\R |x-t^{\Theta^*}(x)|^2 d\mu(x)
= \Var(X) - \frac{2\pi}{\pi-1} \Cov\left(X,\Phi^{-1}(F_\mu(X)) 1_{\{F_\mu(X)\le1/2\}}\right)^2.
\end{aligned}
\end{equation}
\end{lemma}
\noindent The proof for the case $\theta_2\le0$ is similar to the case $\theta_2\ge0$, and is given in Appendix \ref{prooflReLUnegative}
Finally, we can get our theorem. If \eqref{Cov>0 ReLU} holds, the value in \eqref{min value ReLU} is smaller than or equal to that in \eqref{min value ReLU'}. Hence, the minimizer $(\theta_1^*,\theta_2^*)$ should be the one associated with the value \eqref{min value ReLU}, which is given by \eqref{theta's ReLU}. On the other hand, if \eqref{Cov>0 ReLU} fails, the value in \eqref{min value ReLU'} is smaller than that in \eqref{min value ReLU}. The minimizer $(\theta_1^*,\theta_2^*)$ should then be the one associated with the value \eqref{min value ReLU'}, which is given by \eqref{theta's ReLU'}.

\subsection{Proof for Theorem \ref{optimalslicedWasserstein} } \label{proofoptimalslicedWasserstein}
In the beginning, here we provide the proof sketch of Lemma 1 and leave the details to Appendix \ref{ProofGaussianManifull}. First, we assume $X$ is zero mean, since equivalently one can modify $G_\Theta(Z)$ in \eqref{eq_SlicedWGAN} with a bias $G_\Theta(Z)+\E[X]$ for non-zero mean $X$. With $q=2$, Wasserstein distance (as RHS of \eqref{P}) of order 2 between $\omega^T X$ with $\hat{X}_G$, averaged over $\omega$, is bounded from \cite{ReevesISIT17}. We modify this result to make it valid for $q=1$, and then get desired bounded gap $f^\mu(d)$ between \eqref{eq_GaussianManiProp}\eqref{eq_GaussianManiAppr} for $q=1,2$ as
\begin{align} \label{eq_GaussianManiBound}
f^\mu(d):= (C' \E_\mu[\|X\|^2](d^{-5/4} + d^{-7/5}))^{1/2}
\end{align}
where $C'$ is a constant. With linear generator \eqref{eq_highdim}, not only $\omega^T\Theta Z$ is Gaussian as $\hat{X}_G$ given $\omega$, on the contrary to \cite[Theorem 1]{nadjahi2021fast}, but also we prevent gap $f^\mu(d)$ depending on $\Theta$.

Next \eqref{eq_GaussianManiAppr20} is from modifying Corollary \ref{prop:theta's Linear} and \ref{prop:W1d1} to \eqref{eq_GaussianManiAppr}. Finally, solving \eqref{eq_GaussianManiAppr20} with $q=2$ equals to solving
\begin{equation} \label{eq_W2GaussianManiAppr4}
 \tilde{\sigma}^2_x-\max_\Theta \E_w [ 2\tilde{\sigma}_x \sqrt{\omega^T\Theta\Theta^T\omega}-\omega^T\Theta\Theta^T\omega].
\end{equation}
 For the first term, for Gaussian $w \in \Omega_G$, let its quadratic form $U_w:=\omega^T\Theta\Theta^T\omega$ and $U_w>0$ almost surely. Then, $\E_w [\sqrt{\omega^T\Theta\Theta^T\omega}]=\E_{U_w}[U_wU_w^{-1/2}]$ and we can prove that
\begin{align} \label{eqSqrtQuadratic}
\E_w [\sqrt{\omega^T\Theta\Theta^T\omega}]= \frac{-1}{\Gamma(1/2)} \int^\infty_0 z^{-1/2}\frac{\partial}{\partial z}M_{U_w}(-z)dz
\end{align}
where $M_{U_w}(z):=\E[e^{zU_w}]$ is the moment generating function of $U_w$. For the second term, it is easy to see that
\begin{equation}\label{eq_W2GaussianManiAppr5}
\E_w [\omega^T\Theta\Theta^T\omega]=\mathrm{Tr}(\E_w [\omega\omega^T]\Theta\Theta^T)=\mathrm{Tr}(\Theta\Theta^T)/d.
\end{equation}
Combining these results reaches \eqref{eq_W2GaussianManiAppr2}.

Continuing the proof for Theorem \ref{optimalslicedWasserstein}, we first aim to solve \eqref{eq_W2GaussianManiAppr2}. Though the derivative with respect to $z$ in \eqref{eq_W2GaussianManiAppr2} can be easily solved by matrix calculus \cite{hjorungnes2011complex}, direct integration over $z$ is still too complicated to turn it into the closed-form \eqref{R-form_optimal_prob}. To solve this, applying SVD on the parameters of linear generator $\Theta=\mathbf{U}\Lambda \mathbf{V}^T\in\R^{d\times r}$, then $\Theta\Theta^T$ can be eigen-decomposed as \[
\Theta\Theta^T =\mathbf{U}\Lambda\Lambda^T\mathbf{U}^T=\mathbf{U}\mathbf{S}\mathbf{U}^T,
\] where $\mathbf{U}\in\R^{d\times d}$ is eigenvectors of $\Theta\Theta^T$, $\mathbf{V}\in\R^{r\times r}$ is eigenvectors of $\Theta^T\Theta$, $\Lambda\in\R^{d\times r}$ and $\mathbf{S}$ is a diagonal matrix. Now diagonal $\mathbf{S}=\Lambda\Lambda^T\in\R^{d\times d}$ has each diagonal element $S_{ii}\ge0$.
Substituting $\Theta\Theta^T=\mathbf{U}\mathbf{S}\mathbf{U}^T$ into \eqref{eq_W2GaussianManiAppr2}, it becomes
\begin{align}
& \frac{\sum_{i=1}^d S_{ii}}{d}+\frac{2\tilde{\sigma}_x }{\Gamma(1/2)} \int^\infty_0 z^{-1/2}\frac{\partial}{\partial z}(\prod_{i=1}^d (1+\frac{2 z}{d} S_{ii})^{-1/2})dz \label{eq_W2GaussianSVD} \\
 = &\frac{\sum_{i=1}^d S_{ii}}{d}-\frac{2\tilde{\sigma}_x }{\Gamma(1/2)} \int^\infty_0 \frac{1}{2}z^{-1/2}(\prod_{i=1}^d (1+\frac{2 z}{d} S_{ii})^{-3/2}) (\sum_{i=1}^d \frac{2}{d}S_{ii}\prod_{j=1, j\neq i}^d (1+\frac{2 z}{d} S_{jj}))dz. \label{eq_W2GaussianDerivative}
\end{align}

To solve the optimal elements of the diagonal matrix $\mathbf{S}$ in \eqref{eq_W2GaussianDerivative}, we focus on simplifying the integral part of it using the Carlson-R function and turn it into the objectives of \eqref{R-form_optimal_prob}. Details are as follows, in \eqref{eq_W2GaussianDerivative},
\begin{equation}{}
\begin{aligned}
&-\frac{2\tilde{\sigma}_x }{\Gamma(1/2)} \int^\infty_0 \frac{1}{2}z^{-1/2}(\prod_{i=1}^d (1+\frac{2 z}{d} S_{ii})^{-3/2}) (\sum_{i=1}^d \frac{2}{d}S_{ii}\prod_{j=1, j\neq i}^d (1+\frac{2 z}{d} S_{jj}))dz\\
=&-\frac{2\tilde{\sigma}_x }{\Gamma(1/2)} \sum_{i=1}^d\int^\infty_0 \frac{1}{d}z^{-1/2}S_{ii}(1+\frac{2 z}{d} S_{ii})^{-3/2} \prod_{j=1, j\neq i}^d (1+\frac{2 z}{d} S_{jj})^{-1/2}dz.\\
\overset{(a)}{=}&-\frac{2\tilde{\sigma}_x }{\Gamma(1/2)} \sum_{i=1}^d\frac{1}{d}B(\frac{1}{2},\frac{d+1}{2})S_{ii}R_{-\frac{1}{2}}\left(\mathbf{b}^{(i)};\mathrm{diag}\left(\frac{2}{d}\mathbf{S}\right)\right)\\
\end{aligned}
\end{equation}
The step (a) follows \cite[(8.3-1)]{carlson1977special} where $B(\cdot, \cdot)$ is beta function, $\mathbf{b}^{(i)}\in\mathbb{R}^d$ with elements $b_i^{(i)}= 3/2$ and $b_j^{(i)}= 1/2$ for $j\ne i$, and $R_{-a}(\mathbf{b}; \mathbf{z})$ is the Carlson-R function
\[
\begin{aligned}
R_{-a}(\mathbf{b}; \mathbf{z})&:=\frac{1}{B(a, \sum{b_i})}\int_0^\infty t^{\sum{b_i}-a}\prod(t+z_i)^{-b_i}dt=\frac{1}{B(a, \sum{b_i})}\int_0^\infty t^{a-1}\prod(1+tz_i)^{-b_i}dt.
\end{aligned}
\]

We use some properties of the Carlson-R function to simplify (a) further
\begin{equation}\label{CarlsonRderives}
\begin{aligned}
&-\frac{2\tilde{\sigma}_x }{\Gamma(1/2)} \sum_{i=1}^d\frac{1}{d}B(\frac{1}{2},\frac{d+1}{2})S_{ii}R_{-\frac{1}{2}}\left(\mathbf{b}^{(i)};\mathrm{diag}\left(\frac{2}{d}\mathbf{S}\right)\right)\\
\overset{(b)}{=}&-\frac{2\tilde{\sigma}_x }{\Gamma(1/2)} \sum_{i=1}^d\frac{1}{\sqrt{2d}}B(\frac{1}{2},\frac{d+1}{2})S_{ii}R_{-\frac{1}{2}}\left(\mathbf{b}^{(i)};\mathrm{diag}\left(\mathbf{S}\right)\right)\\
\overset{(c)}{=}&-\frac{2\tilde{\sigma}_x }{\Gamma(1/2)} \sqrt{2d}B(\frac{1}{2},\frac{d+1}{2})\frac{1}{2}R_{\frac{1}{2}}\left(\frac{1}{2}\mathbf{1};\mathrm{diag}\left(\mathbf{S}\right)\right)\\
=&-\tilde{\sigma}_x\sqrt{2d}\frac{\Gamma({\frac{d+1}{2})}}{\Gamma({\frac{d}{2}+1)}}R_{\frac{1}{2}}\left(\frac{1}{2}\mathbf{1};\mathrm{diag}\left(\mathbf{S}\right)\right);
\end{aligned}
\end{equation}
the step (b) follows \cite[(5.9-3)]{carlson1977special} that 
\[R_{-\frac{1}{2}}\left(\mathbf{b}^{(i)};\mathrm{diag}\left(\frac{2}{d}\mathbf{S}\right)\right)=(\frac{2}{d})^{-1/2}R_{-\frac{1}{2}}\left(\mathbf{b}^{(i)};\mathrm{diag}\left(\mathbf{S}\right)\right),
\]
and the step (c) comes from Euler's homogeneity relation \cite[(5.9-2)]{carlson1977special}
\[
\sum^{d}_{i=1}S_{ii}\frac{\partial}{\partial S_{ii}}R_{\frac{1}{2}}\left(\frac{1}{2}\mathbf{1};\mathrm{diag}\left(\mathbf{S}\right)\right) = \frac{1}{2}R_{\frac{1}{2}}\left(\frac{1}{2}\mathbf{1};\mathrm{diag}\left(\mathbf{S}\right)\right),
\] and the derivative of R function \cite[(5.9-9)]{carlson1977special}\[
\frac{\partial}{\partial S_{ii}}R_{\frac{1}{2}}\left(\frac{1}{2}\mathbf{1};\mathrm{diag}\left(\mathbf{S}\right)\right)=\frac{1}{2d}R_{-\frac{1}{2}}\left(\mathbf{b}^{(i)};\mathrm{diag}\left(\mathbf{S}\right)\right).
\] 
The last equality of \eqref{CarlsonRderives} comes from 
\[B(\frac{1}{2},\frac{d+1}{2})=\Gamma(\frac{1}{2})\Gamma(\frac{d+1}{2})/\Gamma(\frac{d}{2}+1),
\]
and thus, \eqref{eq_W2GaussianDerivative} becomes the objective of \eqref{R-form_optimal_prob}.

Next, with the aid of Dirichlet average \cite{carlson1977special}, we prove that the objective of \eqref{R-form_optimal_prob} is Schur-convex \cite{marshall11}, and it achieves the global minimum when all the diagonal elements of $\mathbf{S}$ are the same as $s$. 
\begin{lemma}\label{schurconvexofR}
The objective of the equivalent problem \eqref{R-form_optimal_prob} of \eqref{eq_W2GaussianManiAppr2} is symmetric and convex over $\mathbf{S}$, and such a Schur-convex objective \eqref{R-form_optimal_prob} has a global minimum when $\mathbf{S}=s\mathbf{I}$.
\end{lemma}
\begin{proof}
Carlson R function is a special case of Dirichlet average\cite[(5.9-1)]{carlson1977special}
\begin{equation} \label{Dirichletaverage}
R_{\frac{1}{2}}(\frac{1}{2}\mathbf{1};\mathrm{diag}(\mathbf{S}))=\int_{\mathbb{R}^{d-1}}(\mathbf{u}\cdot\mathrm{diag}(\mathbf{S}))^{1/2}d\mu_{\frac{1}{2}\mathbf{1}}(\mathbf{u}),
\end{equation}
where 
\[
d\mu_{\frac{1}{2}\mathbf{1}}(\mathbf{u}):=\frac{\Gamma(d/2)}{\Gamma(1/2)^d}u_1^{-1/2}u_2^{-1/2}\cdots u_d^{-1/2}du_1\cdots du_{d-1},
\]
and \[
\sum_{i=1}^d u_i=1,\ u_i\ge0.
\]
Note that \eqref{Dirichletaverage} is symmetric over $\mathbf{S}$ obviously, and thus the same property applies to \eqref{R-form_optimal_prob}. Since the Hessian matrix of $-(\mathbf{u}\cdot\mathrm{diag}(\mathbf{S}))^{1/2}$ is positive semidefinite and the nonnegative weighted integral preserves the convexity, \eqref{Dirichletaverage} is concave and thus \eqref{R-form_optimal_prob} is convex over $\mathbf{S}$. 
\end{proof}
Simplified with Lemma \ref{schurconvexofR}, \eqref{R-form_optimal_prob} becomes
\begin{equation}
\begin{aligned}
&\min_s s-\tilde{\sigma}_x\sqrt{2d}\frac{\Gamma({\frac{d+1}{2})}}{\Gamma({\frac{d}{2}+1)}}R_{\frac{1}{2}}(\frac{1}{2}\mathbf{1};s\cdot\mathbf{1})
=\min_s s-\tilde{\sigma}_x\sqrt{2d}\frac{\Gamma({\frac{d+1}{2})}}{\Gamma({\frac{d}{2}+1)}}s^{1/2},
\end{aligned}
\end{equation}
where the equality follows \cite[{19.16.11}]{822801}. Then, by solving the quadratic programming, the optimal  
\[
s=\frac{d}{2}\frac{\Gamma(\frac{d+1}{2})^2}{\Gamma(\frac{d}{2}+1)^2}\tilde{\sigma}_x^2
\]
with minimum value of \eqref{eq_GaussianManiAppr} when $q=2$ is
\begin{equation}\label{inequalitywhenDiieqal}
\tilde{\sigma}_x\sqrt{1-\frac{d}{2}\frac{\Gamma(\frac{d+1}{2})^2}{\Gamma(\frac{d}{2}+1)^2}}.
\end{equation}
By Lemma \ref{ThmGaussianMani}, the upper bound \eqref{ubvalueofslice} for \eqref{eq_SlicedWGAN} from \eqref{upperboundofslice} is obtained. From \cite[(5.11.12)]{822801}, we have the asymptotic equivalence (same notation as $X\sim\mathcal{N}$)
\[\frac{d}{2}\left(\Gamma(\frac{d+1}{2})/\Gamma(\frac{d}{2}+1)\right)^2\sim 1.
\]
The upper bound \eqref{ubvalueofslice} then goes to zero as $d\rightarrow \infty$, and the lower bound of \eqref{eq_SlicedWGAN} is obviously zero. Thus, the linear generator with the parameters \eqref{optimallinearparameter} is asymptotically optimal for the sliced WGAN \eqref{eq_SlicedWGAN}.

\subsection{Proof of Lemma \ref{inner_discriminator_unproj_equal_proj}} \label{proof_inner_discriminator_unproj}
The R.H.S. of \eqref{innerdiscriminatorequality} can be solved by following the proof steps in Sec. \ref{proofSimoid}. When $q=2$, for any $\omega$,
\begin{equation}\label{eq_OTproject}
\begin{aligned}
&\inf_{\pi_\omega\in \Pi_\omega(\mu_\omega,\nu^\Theta_\omega)} \mathbb{E}_{\pi_\omega}[|\omega^T X - \omega^T\Theta Z|^2]
=\int_\R|\omega^Tx-t^\Theta_\omega(\omega^Tx)|^2d\mu_\omega(\omega^Tx),
\end{aligned}
\end{equation}
where the optimal transport function 
\begin{equation}\label{t^theta}
t^\Theta_\omega(\omega^Tx)=\sup\left\{y\in\R:F_{\nu^\Theta_w}(y)\le F_{\mu_\omega}\left(\omega^T x \right)\right\}.
\end{equation}
For the L.H.S. of \eqref{innerdiscriminatorequality}, when $q=2$, for any $\omega$,
\begin{align}
&\inf_{\pi'\in \Pi'(\mu,\nu^\Theta_\omega)} \mathbb{E}_{\pi'}[|\omega^T X - \omega^T\Theta Z|^2]\label{eq_M21OT}
=\int_{\R^d}|\omega^Tx-t^\Theta(x)|^2d\mu(x),
\end{align}
where the optimal transport function $t^\Theta(x):\R^d\rightarrow\R$ has the same value as $t^\Theta_\omega(\omega^Tx)$ in \eqref{t^theta} given $\omega$, which will be proved in Lemma \ref{multitoonetransport} later. Then, with the change of variables\cite{Cantelli_inequality}, \eqref{eq_M21OT}and \eqref{eq_OTproject} are equal.

To show that the optimal transport function $t^\Theta(x)$ for \eqref{eq_M21OT} equals $t^\Theta_\omega(\omega^Tx)$ in \eqref{t^theta}, we first prove that \eqref{eq_M21OT} belongs to the general multi-to-one optimal transport problem in \cite{chiappori2017multi} by checking its properties. To do this, let $Y=\omega^TG_\Theta(Z)$ and introduce the following sets.
For each fixed $y\in\R$, for all $k\in\R$, we define sets
\begin{align}
X(y,k) &:= \left\{x\in \R^d : -\frac{\partial}{\partial y}c(x,y)=k\right\}=\left\{x\in \R^d : \omega^T x = y + {k}/{2}\right\},\notag\\
X_{\le}(y,k) &:= \left\{x\in \R^d : -\frac{\partial}{\partial y}c(x,y)\le k\right\}=\left\{x\in \R^d : \omega^T x \le y + {k}/{2}\right\},\label{eq_M21superlevel set}
\end{align}
where $c(x,y)=(\omega^Tx-y)^2$.

\begin{lemma}\label{prop:OT}
Suppose that $\mu$ vanishes on hypersurfaces in $\R^d$ and $\nu_w^\Theta$ has no atom. For multi-to-one-dimensional inner discriminator problem \eqref{eq_M21OT}, we have the following properties
\begin{itemize}
\item [(a)] For each $y\in\R$, there exists a maximal interval $K(y) = [k^-(y),k^+(y)]\neq\emptyset$ such that 
\begin{equation}\label{mu=nu}
\mu(X_\le(y,k)) = \nu_w^\Theta((-\infty,y))\quad \forall k\in K(y). 
\end{equation}
\item [(b)] $\pi = (id\times t^\Theta(x))_\# \mu$ is the unique minimizer of \eqref{eq_M21OT}, where $t^\Theta(x):\R^d\to\R$ is defined by
\begin{equation}\label{F}
t^\Theta(x) =y\quad\hbox{if}\quad x\in X(y,k^+(y)).
\end{equation}
\end{itemize}
\end{lemma}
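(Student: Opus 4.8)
The plan is to exploit the fact that the cost $c(x,y)=(\omega^T x-y)^2$ depends on $x$ only through the scalar projection $v=\omega^T x$, which collapses the $d$-dimensional problem \eqref{eq_M21OT} onto the real line. First I would record the two regularity facts that drive everything. Since $\mu$ vanishes on hypersurfaces, it in particular assigns zero mass to every affine hyperplane $\{x:\omega^T x=c\}$, so the law $\mu_\omega$ of $\omega^T X$ has no atoms and $F_{\mu_\omega}$ is continuous; and since $\nu_w^\Theta$ has no atom (here the non-degenerate Gaussian law of $\omega^T\Theta Z$), $F_{\nu_w^\Theta}$ is continuous and strictly increasing with range $(0,1)$, so $\nu_w^\Theta((-\infty,y))=F_{\nu_w^\Theta}(y)$.

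For part (a), I would observe that $X_\le(y,k)$ in \eqref{eq_M21superlevel set} is the half-space $\{\omega^T x\le y+k/2\}$, hence $\mu(X_\le(y,k))=F_{\mu_\omega}(y+k/2)$, so that \eqref{mu=nu} reads $F_{\mu_\omega}(y+k/2)=F_{\nu_w^\Theta}(y)$. Because $F_{\mu_\omega}$ is continuous with $\lim_{t\to-\infty}F_{\mu_\omega}=0$, $\lim_{t\to+\infty}F_{\mu_\omega}=1$ and $F_{\nu_w^\Theta}(y)\in(0,1)$, the intermediate value theorem gives a solution, and since $F_{\mu_\omega}$ is nondecreasing the solution set in $t=y+k/2$ is a nonempty closed interval (a single point where $F_{\mu_\omega}$ is strictly increasing, a genuine interval on a flat stretch). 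Rescaling $t\mapsto k=2(t-y)$ yields the maximal interval $K(y)=[k^-(y),k^+(y)]\neq\emptyset$.

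For part (b), set $t^+(y):=y+k^+(y)/2=\sup\{t:F_{\mu_\omega}(t)=F_{\nu_w^\Theta}(y)\}$, so the prescription \eqref{F} reads $t^\Theta(x)=y$ iff $\omega^T x=t^+(y)$. I would first check $\mu$-a.e.\ well-definedness: the map $y\mapsto t^+(y)$ is nondecreasing and its generalized inverse is exactly the monotone rearrangement $v\mapsto F_{\nu_w^\Theta}^{-1}(F_{\mu_\omega}(v))$ of $\mu_\omega$ onto $\nu_w^\Theta$, and the flat stretches of $F_{\mu_\omega}$ where ambiguity could arise carry no $\mu_\omega$-mass, so $t^\Theta$ is single-valued for $\mu$-a.e.\ $x$. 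Next I would verify the marginal: evaluating \eqref{mu=nu} at $k=k^+(y)$ gives $\mu(\{\omega^T x\le t^+(y)\})=F_{\nu_w^\Theta}(y)$, which is precisely $(t^\Theta)_\#\mu=\nu_w^\Theta$, so $(\mathrm{id}\times t^\Theta)_\#\mu\in\Pi(\mu,\nu_w^\Theta)$. For optimality and uniqueness I would invoke \cite{chiappori2017multi}: the matched super-level sets $X_\le(y,k^+(y))=\{\omega^T x\le t^+(y)\}$ are nested increasingly in $y$ because $t^+(y)$ is nondecreasing, so the \emph{nestedness} hypothesis holds and the level-set map \eqref{F} is the unique optimizer. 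Equivalently, one checks directly that every $\pi\in\Pi(\mu,\nu_w^\Theta)$ has cost $\E_\pi[(\omega^T X-Y)^2]=\E_{\bar\pi}[(V-Y)^2]$ depending only on the projected coupling $\bar\pi$ of $(\mu_\omega,\nu_w^\Theta)$; the one-dimensional quadratic problem is minimized by monotone rearrangement alone, this lower bound is attained by $t^\Theta$, and uniqueness follows since the monotone coupling between atomless $\mu_\omega$ and $\nu_w^\Theta$ is unique and deterministic. A final change of variables then identifies \eqref{eq_OTunproject} with \eqref{eq_OTproject}.

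The main obstacle I anticipate is the rigorous handling of the tie-breaking built into the choice $k^+(y)$ together with the flat portions of $F_{\mu_\omega}$: one must show that the set of $x$ where \eqref{F} is ambiguous (several competing $y$, or $\omega^T x$ landing on a plateau of $F_{\mu_\omega}$) is $\mu$-null, and that the resulting map agrees $\mu$-a.e.\ with the lifted monotone rearrangement. Verifying the nestedness condition of \cite{chiappori2017multi} itself is mild in our special quadratic cost, since it reduces to the monotonicity of $t^+(y)$; once that is in place, existence, optimality, and uniqueness are inherited from the one-dimensional theory.
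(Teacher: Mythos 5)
Your proposal is correct, and it reaches the result by a more explicit route than the paper. The paper stays inside the abstract framework of \cite{chiappori2017multi}: part (a) follows by checking non-degeneracy of the cost (the rank of $D^2_{xy}c$ is at least $1$) and citing Theorem 4(a) there, and part (b) by verifying the nestedness hypotheses for the sets \eqref{eq_M21superlevel set} --- monotonicity of $y\mapsto X_\le(y,k^\pm(y))$, proved by a contradiction argument built on the mass identity \eqref{mu=nu}, together with the strict inclusion $X_\le(y,k^+(y))\subseteq X_<(y',k^-(y'))$ when $\nu^\Theta_w((y,y'))>0$ --- before citing Theorem 4(b). You instead use from the outset that $X_\le(y,k)$ is the half-space $\{\omega^Tx\le y+k/2\}$, so \eqref{mu=nu} becomes the scalar equation $F_{\mu_\omega}(y+k/2)=F_{\nu^\Theta_w}(y)$: part (a) is then the intermediate value theorem applied to a continuous nondecreasing CDF, and the monotonicity underlying nestedness is read off from the explicit formula $t^+(y)=\sup\{t: F_{\mu_\omega}(t)=F_{\nu^\Theta_w}(y)\}$ rather than derived by contradiction. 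Moreover, your ``equivalently'' argument for (b) --- project any coupling to a coupling of $(\mu_\omega,\nu^\Theta_w)$, solve the one-dimensional quadratic problem by monotone rearrangement, and lift back --- bypasses \cite{chiappori2017multi} entirely and delivers the closed form $t^\Theta(x)=F_{\nu^\Theta_\omega}^{-1}(F_{\mu_\omega}(\omega^Tx))$ in the same stroke, a fact the paper establishes only afterwards in Lemma~\ref{multitoonetransport}. The trade-off: the paper's argument is the one that generalizes to non-degenerate costs that are not functions of a single linear projection, while yours is elementary, self-contained, and unifies the two lemmas. Two points in your sketch should be made explicit. First, in the uniqueness step, uniqueness of the projected optimal coupling does not by itself determine $\pi$, since projection of couplings is many-to-one; what closes the argument is that the projected optimizer is \emph{deterministic} (concentrated on a graph over $v=\omega^Tx$), which forces any optimal $\pi$ to be concentrated on $\{(x,y):\,y=T(\omega^Tx)\}$ and hence to equal $(id\times t^\Theta)_\#\mu$. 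Second, you repeatedly use strict monotonicity of $F_{\nu^\Theta_w}$; this is legitimate here because the no-atom hypothesis forces $\omega^T\Theta Z$ to be a non-degenerate Gaussian, but it is the Gaussian structure of the linear generator, not the bare no-atom assumption in the lemma statement, that licenses it.
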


\begin{proof}
(a) $c(x,y)$ is non-degenerate in the sense of \cite[Definition 3.1]{chiappori2017multi} since the rank of $H_c\ge1$ for all $(x,y)\in\R^d\times\R$, where $H_c$ is the Hessian matrix of $c(x,y)$. The desired result follows from \cite[Theorem 4(a)]{chiappori2017multi}. 
\begin{equation}\label{Hessian_C}\notag
H_c = \begin{bmatrix}
2ww^T & -2w \\
-2w^T & 2 
\end{bmatrix}.
\end{equation}

(b) To apply the optimal transport in \cite{chiappori2017multi}, we need to check whether our \eqref{eq_M21superlevel set} meets the requirements listed within. We first show that $f^{\pm}(y):=y\mapsto y+{k^\pm(y)}/{2}$ are non-decreasing. By contradiction, suppose that there exist $y<y'$ such that $f^+(y)>f^+(y')$. Observe from \eqref{eq_M21superlevel set} that  
\begin{equation}\label{X contains X}
\begin{aligned}
&X_\le(y,k^+(y)) = \{x\in \R^d : w^T x \le f^+(y)\} 
\supseteq  \{x\in \R^d : w^T x \le f^+(y')\} = X_\le(y',k^+(y')),
\end{aligned}
\end{equation}
which implies $\mu(X_\le(y,k^+(y)))\ge \mu(X_\le(y',k^+(y')))$. On the other hand, \eqref{mu=nu} entails 
\begin{equation}\label{mu<mu}
\begin{aligned}
&\mu(X_\le(y,k^+(y))) = \nu^\Theta_w((-\infty,y))
\le \nu^\Theta_w((-\infty,y')) = \mu(X_\le(y',k^+(y'))).
\end{aligned}
\end{equation}
We therefore conclude \[\mu(X_\le(y,k^+(y))) = \mu(X_\le(y',k^+(y')))\] and consequently
\begin{equation}\label{nu=nu}
\nu^\Theta_w((-\infty,y)) = \nu^\Theta_w((-\infty,y')).
\end{equation}
As $y+{k^+(y)}/{2}=f^+(y)>f^+(y') = y'+{k^+(y')}/{2}$, we can choose $\bar k > k^+(y')$ such that $y'+\bar k/2 = y+{k^+(y)}/{2}$. By \eqref{eq_M21superlevel set}, this implies $X_\le(y',\bar k) = X_\le(y,k^+(y))$. It then follows that 
\[
\begin{aligned}
\mu(X_\le(y',\bar k)) = \mu(X_\le(y,k^+(y))) 
=  \nu^\Theta_w((-\infty,y)) = \nu^\Theta_w((-\infty,y')),
\end{aligned}
\]
where the second equality follows from \eqref{mu=nu} and the third equality is due to \eqref{nu=nu}. This readily contradicts part (a), which asserts that $k^+(y')$ is the largest solution $k\in\R$ to $\mu(X_\le(y', k))= \nu^\Theta_w((-\infty,y'))$. A similar argument shows that $y\mapsto f^-(y)$ is also non-decreasing. 
As $y\mapsto y+{k^\pm(y)}/{2}$ are non-decreasing, we see from \eqref{eq_M21superlevel set} that 
\begin{equation}\label{cond. 1'}
\hbox{$y\mapsto X_\le(y,k^\pm(y))$ are non-decreasing}. 
\end{equation}
Now, consider $y<y'$ such that $\nu^\Theta_w((y,y'))>0$. With $\nu^\Theta_w((-\infty,y)) < \nu^\Theta_w((-\infty,y'))$, we deduce from \eqref{mu=nu} that $\mu(X_\le(y,k^+(y)))=\nu^\Theta_w((-\infty,y)) < \nu^\Theta_w((-\infty,y')) = \mu(X_\le(y',k^-(y')))$. This, together with \eqref{eq_M21superlevel set}, indicates that $y+k^+(y)/2 < y'+k^-(y')/2$. This in turn implies that 
\begin{equation}\label{cond. 2'}
X_\le(y,k^+(y))\subseteq X_<(y',k^-(y')).
\end{equation} 
Then, the desired result follows from \cite[Theorem 4(b)]{chiappori2017multi}.
\end{proof}

From Lemma \ref{prop:OT}(b), the optimal transport function in \eqref{eq_M21OT} has the closed-form solution \eqref{F}. We prove that this optimal transport function is well-defined by properties in Lemma \ref{prop:OT}.

\begin{lemma}\label{multitoonetransport}
For $\mu$ almost every ($\mu-a.e.) \ x\in\R^d$, $t^\Theta(x)=F_{\nu^\Theta_\omega}^{-1}(F_{\mu_\omega}(\omega^Tx))$ and can be equivalently expressed as the R.H.S of \eqref{t^theta}.
\end{lemma}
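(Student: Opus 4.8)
The plan is to connect the multi-to-one transport map $t^\Theta$ produced by Lemma~\ref{prop:OT}(b) with the one-dimensional quantile map $t^\Theta_\omega$ of \eqref{t^theta}, and the bridge is a single scalar identity. Writing $f^+(y) := y + k^+(y)/2$, I would first derive from \eqref{mu=nu} evaluated at $k=k^+(y)$ that
\[
F_{\mu_\omega}\big(f^+(y)\big) = F_{\nu^\Theta_\omega}(y)\qquad\text{for every }y\in\R .
\]
Indeed, $X_\le(y,k^+(y)) = \{x : \omega^T x \le f^+(y)\}$, so its $\mu$-mass is exactly $F_{\mu_\omega}(f^+(y))$ by the definition of the projected law $\mu_\omega$; and since $\nu^\Theta_\omega$ has no atom (being Gaussian after projecting the linear generator output), one has $\nu^\Theta_\omega((-\infty,y)) = F_{\nu^\Theta_\omega}(y)$.

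Next I would establish that $t^\Theta$ is well-defined $\mu$-a.e., which I expect to be the main obstacle. Two things can fail in the prescription ``$t^\Theta(x)=y$ for $x\in X(y,k^+(y))$'': the hyperplanes $X(y,k^+(y)) = \{\omega^T x = f^+(y)\}$ could overlap for distinct $y$ (non-uniqueness) or could miss $x$ altogether (non-existence). For overlaps, two distinct $y,y'$ are assigned the same hyperplane only when $f^+(y)=f^+(y')$, and a hyperplane is a hypersurface on which $\mu$ vanishes by assumption, so the ambiguous set is $\mu$-null. For coverage, note first that $\mu$ vanishing on hypersurfaces forces $\mu_\omega$ to be atomless, hence $u := F_{\mu_\omega}(\omega^T x)\in(0,1)$ for $\mu$-a.e.\ $x$. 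For such $x$ set $y := F_{\nu^\Theta_\omega}^{-1}(u)$, which is legitimate since the Gaussian CDF $F_{\nu^\Theta_\omega}$ is continuous, strictly increasing, and onto $(0,1)$. The identity above yields $F_{\mu_\omega}(f^+(y)) = F_{\nu^\Theta_\omega}(y) = u = F_{\mu_\omega}(\omega^T x)$, so $f^+(y)$ and $\omega^T x$ carry the same value of $F_{\mu_\omega}$; they can differ only if $F_{\mu_\omega}$ is flat between them, but then the slab of $x'$ whose projection lies strictly between them is $\mu$-null, and the (countably many) endpoints of maximal flat intervals form $\mu$-null hyperplanes as well. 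Hence $f^+(y)=\omega^T x$ for $\mu$-a.e.\ $x$, i.e.\ $x\in X(y,k^+(y))$, which simultaneously gives existence and the first claimed identity $t^\Theta(x) = y = F_{\nu^\Theta_\omega}^{-1}(F_{\mu_\omega}(\omega^T x))$.

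Finally I would dispatch the equivalence with the R.H.S.\ of \eqref{t^theta} using the standard generalized-inverse identity: for a continuous, strictly increasing CDF one has
\[
\sup\{y\in\R : F_{\nu^\Theta_\omega}(y)\le u\} = F_{\nu^\Theta_\omega}^{-1}(u)\qquad\text{for all }u\in(0,1).
\]
Applying this with $u = F_{\mu_\omega}(\omega^T x)$ gives $t^\Theta_\omega(\omega^T x) = F_{\nu^\Theta_\omega}^{-1}(F_{\mu_\omega}(\omega^T x)) = t^\Theta(x)$, which is the desired conclusion. Throughout, the delicate points are purely measure-theoretic---controlling the flat portions of $F_{\mu_\omega}$ and the non-injectivity of $f^+$---and each exceptional set is shown to be $\mu$-null precisely through the hypothesis that $\mu$ vanishes on hypersurfaces together with the no-atom property of $\nu^\Theta_\omega$.
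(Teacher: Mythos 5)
Your proof is correct in the setting where the lemma is actually used (linear generator, so that $\nu^\Theta_\omega$ is a non-degenerate Gaussian), and it pivots on the same scalar identity as the paper, namely $F_{\mu_\omega}\big(f^+(y)\big)=F_{\nu^\Theta_\omega}(y)$ extracted from \eqref{mu=nu}; but the surrounding architecture is genuinely different. The paper takes the $\mu$-a.e.\ well-definedness of $t^\Theta$ for granted from Lemma~\ref{prop:OT}(b) (it is part of the cited multi-to-one transport theorem) and spends its effort on the $\nu$-side: it removes the flat intervals of $F_{\nu^\Theta_\omega}$ to form the set $\R^{\nu^\Theta_\omega}$ on which the inverse is well defined, shows via Lemma~\ref{prop:OT}(a) and the hypersurface assumption that $t^\Theta(x)\in\R^{\nu^\Theta_\omega}$ for $\mu$-a.e.\ $x$, and only then inverts the CDF chain \eqref{CDFrelation}. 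That argument works for any atomless $\nu^\Theta_\omega$, including projected outputs of nonlinear generators whose CDFs have flat pieces. You instead exploit Gaussianity to make $F_{\nu^\Theta_\omega}$ continuous, strictly increasing and onto $(0,1)$---so the $\nu$-side flat-interval issue disappears---then construct $y=F_{\nu^\Theta_\omega}^{-1}\big(F_{\mu_\omega}(\omega^T x)\big)$ explicitly and push all the measure-theoretic work onto the $\mu$-side (flat intervals of $F_{\mu_\omega}$ and injectivity of $f^+$), finishing with the standard generalized-inverse identity to recover the R.H.S.\ of \eqref{t^theta}. Your route is more constructive and re-establishes the a.e.\ well-definedness of the map rather than quoting it; the paper's route buys generality beyond Gaussian projections. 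Two small caveats: your overlap argument tacitly requires the ambiguous values of $f^+$ to be only countably many (this follows from the monotonicity of $f^+$ proved inside Lemma~\ref{prop:OT}; in fact, under your Gaussian assumption the identity $F_{\mu_\omega}(f^+(y))=F_{\nu^\Theta_\omega}(y)$ forces $f^+$ to be strictly increasing, so overlaps never occur at all), and because you lean on Gaussianity, your argument does not cover the general atomless-$\nu^\Theta_\omega$ version of the lemma that the paper's proof establishes.
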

\begin{proof}
Note that $F_{\nu^\Theta_\omega}$ is strictly increasing on
\[
\R^{\nu^\Theta_\omega}:=\R\backslash\bigcup_{y<y',\ \nu^\Theta_\omega((y,y'))=0}[y,y').
\]
Hence, $F_{\nu^\Theta_\omega}^{-1}$ is well-defined on $\R^{\nu^\Theta_\omega}$. For any $y<y'$ such that $\nu^\Theta_\omega((y,y'))=0$, we have $\nu^\Theta_\omega((-\infty,y))=\nu^\Theta_\omega((-\infty,y'))$. By Lemma \ref{prop:OT}(a), this implies
\[
\mu\big(\{x\in\R^d: y+\frac{k^+(y)}{2}<\omega^Tx\le y'+\frac{k^+(y')}{2}\}\big)=0.
\]
Since $\mu$ vanishes on hypersurfaces, $\mu\big(\{x\in\R^d: \omega^Tx = y+\frac{k^+(y)}{2}\})$ for $(y,y')$ such that $\nu^\Theta_w((y,y')) = 0$ can be ignored. It follows that
\[
\mu\big(\{x\in\R^d:\omega^Tx= y+\frac{k^+(y)}{2}\text{for some $y\in\R^{\nu^\Theta_\omega}$}\}\big)=1.
\]
That is, for $\mu-a.e.\ x\in\R^d$, $t^\Theta(x)\in\R^{\nu^\Theta_\omega}$. Since we have the following relationship between CDFs by \eqref{eq_M21superlevel set}, \eqref{mu=nu} , and \eqref{F}, 
\begin{equation}\label{CDFrelation}
F_{\mu_\omega}\left(\omega^T x \right)=F_{\mu_\omega}\left(y+\frac{k^+(y)}{2} \right)=F_{\nu^\Theta_w}(y)=F_{\nu^\Theta_w}\left(t^\Theta(x) \right),
\end{equation}
then $t^\Theta(x)=F_{\nu^\Theta_\omega}^{-1}(F_{\mu_\omega}(\omega^Tx))$.
\end{proof}

\section{Empirical study with synthetic data} \label{sec:simu}
We have only considered population WGAN thus far. In practice, the distribution is estimated from training data and does not have a closed-form CDF. Therefore, it is of interest to empirically study the performance of the population WGAN result using synthetic training data. In Fig.~\ref{FigTheta2}, we consider the linear activation and plot the optimal $\theta_2^*$ in \eqref{eq_W2thetaOpt1}\eqref{optParaLinear1} obtained by solving population WGAN with $d=1, q=2$, and its estimate with synthetic data when $\mu$ is chosen to be (a) $\mathcal{N}(0,1)$ and (b) Laplace distribution \cite{LaplaceTIP} with mean $0$ and scale $1/\sqrt{2}$ (which also has a unit variance). Specifically, by generating training data $\{x^{(i)}\}^M_{i=1}$ according to the true distribution $\mu$, we empirically estimate $\theta_2^*$ from \eqref{eq_W2thetaOpt1} by
\begin{equation} \label{eq_empiricaltheta2}
\frac{1}{M}\sum^M_{i =1} x^{(i)} \Phi^{-1}(\hat{F}_\mu(x^{(i)}))
\end{equation}
and the kernel density estimation (KDE) \cite{jiang2017uniform} is used to replace the true CDF $F_\mu(x)$ with the estimated one $\hat{F}_\mu(x)$ from the training data. Details are provided in the Appendix \ref{kde}. For each distribution, our result shows a nice convergence behavior where the difference becomes smaller than 0.005 with only $M$=50000 sample size. The optimal $\theta_2^*$ in \eqref{eq_W2thetaOpt1}\eqref{optParaLinear1} is 1 and 0.98013 for Gaussian and Laplace distributions, respectively. Using a linear generator, the GAN output is also Gaussian, and thus the convergence behavior for Gaussian data is better than that for Laplace data. Note that $\theta^*_1$ in \eqref{eq_W2thetaOpt1} can be estimated by the sample mean, so the convergence is not shown in Fig.~\ref{FigTheta2}. We also use SGD with momentum to estimate $\theta_1$ and $\theta_2$ for  loss function \eqref{eq_transportGANE} under $d=1,q=1$, with gradient empirically obtained from \eqref{eq_W1condition} using KDE $\hat{F}_\mu(x)$. With $\mathcal{N}(\mu=1.5, \sigma^2=4)$ data, $(\theta_1, \theta_2)$ converges to $(1.4957, 2.0905)$ shown in Fig. \ref{localoptimalq1} and close to the population local optimum $(\mu, \sigma)$ for $q=1$. The convergence rate of the empirical WGAN solution to that of the population WGAN problem has been theoretically analyzed to be $M^{-2/d}$ for the LQG setting in \cite[Theorem 2]{Tse20}. Note that the convergence of iteratively solving empirical WGAN based on a good solver for \eqref{P} was given in \cite{chen2019gradual}. However, this convergence heavily relies on a good semi-discrete optimal transport solver, which is still hard to design empirically\cite{tacskesen2023semi}.
\begin{figure}[t]
                \centering
             \includegraphics[width=0.5\textwidth]{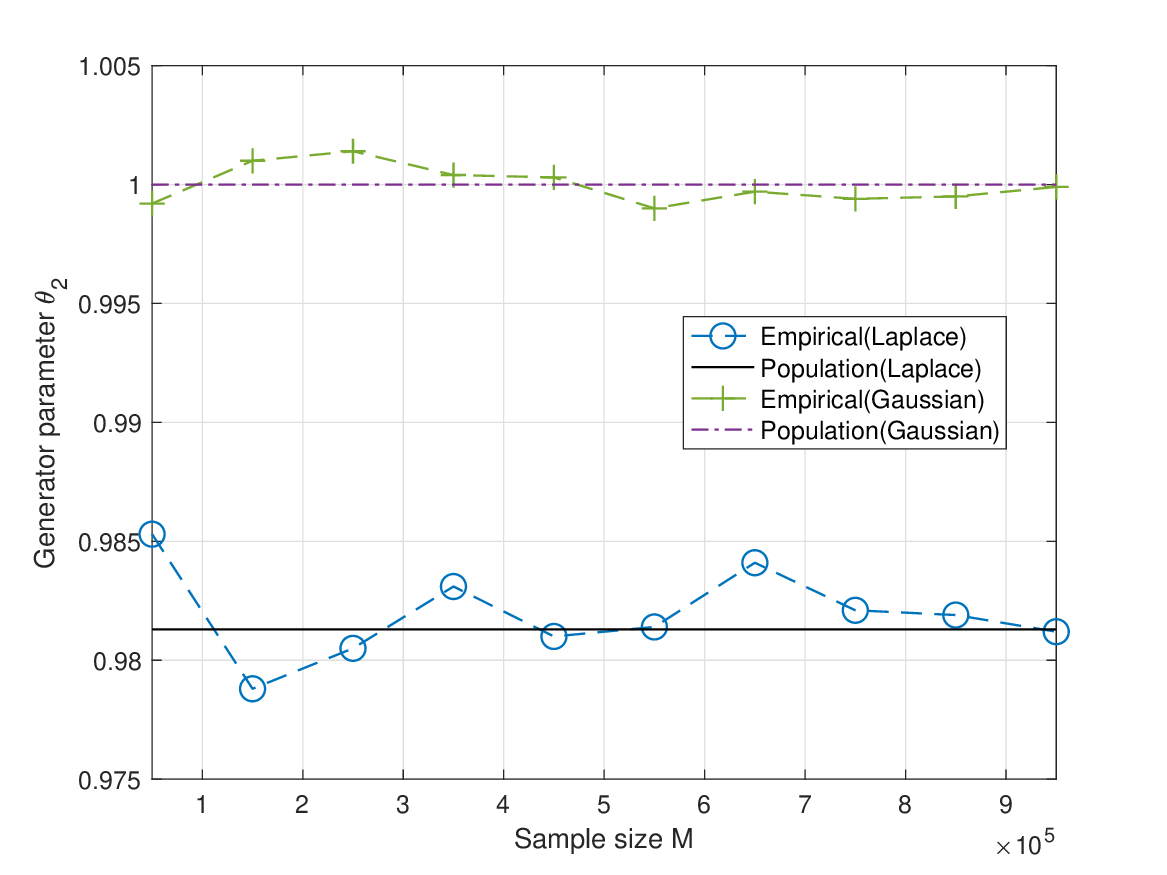}
             \caption{Comparison of optimal parameters in \eqref{eq_W2thetaOpt1}\eqref{optParaLinear1}  with their estimates \eqref{eq_empiricaltheta2} using synthetic data under $q=2, d=1$
             }
             \label{FigTheta2}
\end{figure}
\begin{figure}[t]
                \centering
             \includegraphics[width=0.5\textwidth]{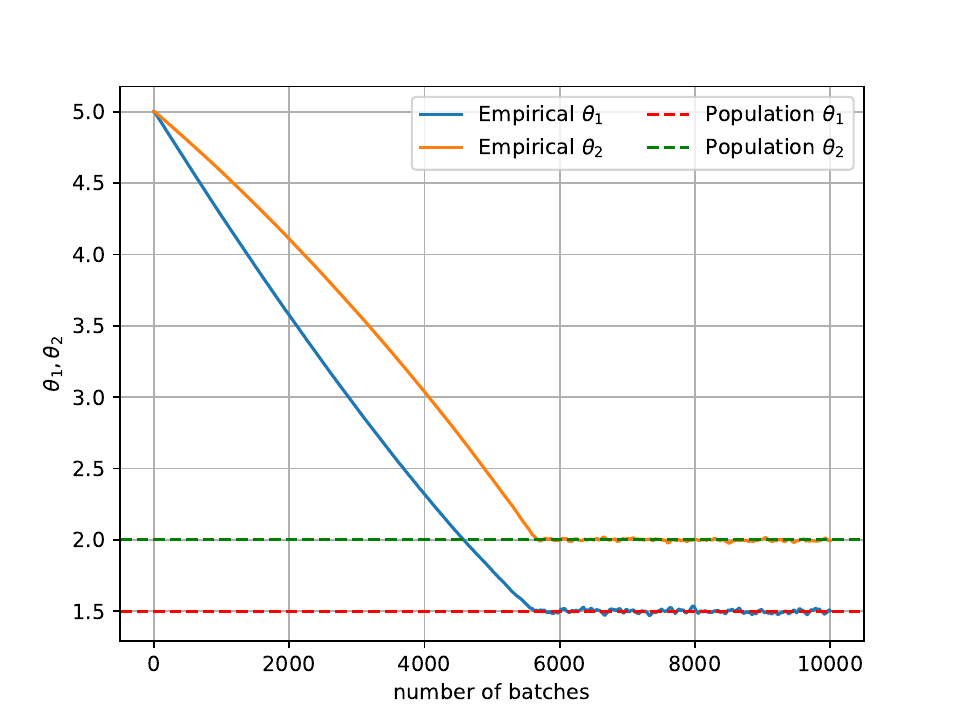}
             \caption{Convergence of $\theta_1$ and $\theta_2$ form SGD using \eqref{eq_transportGANE} for $q=1,d=1$
             }
             \label{localoptimalq1}
\end{figure}
\begin{figure}[t]
                \centering
             \includegraphics[width=0.5\textwidth]{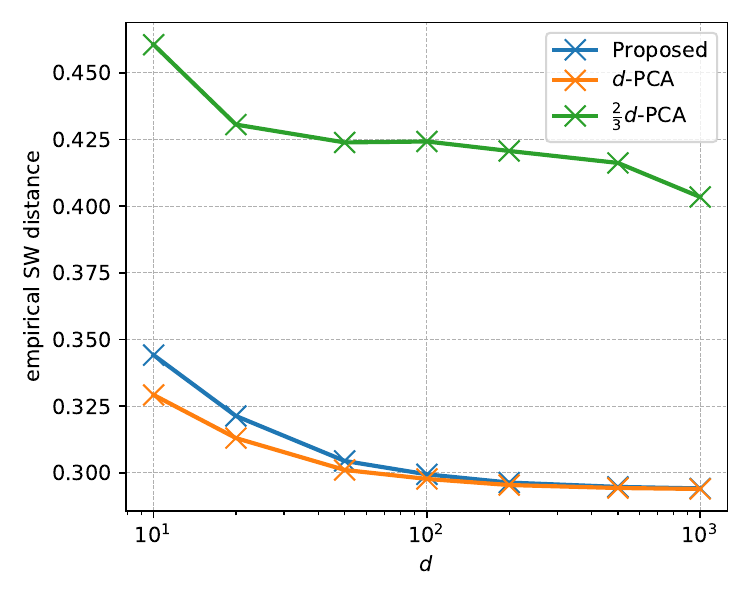}
             \caption{Comparison of sliced Wasserstein distance with our linear generator \eqref{optimallinearparameter} and that from r-PCA \cite{Tse20} when data is i.i.d. Laplace
             }
             \label{tse}
\end{figure}
\begin{figure}[t]
                \centering
             \includegraphics[width=0.5\textwidth]{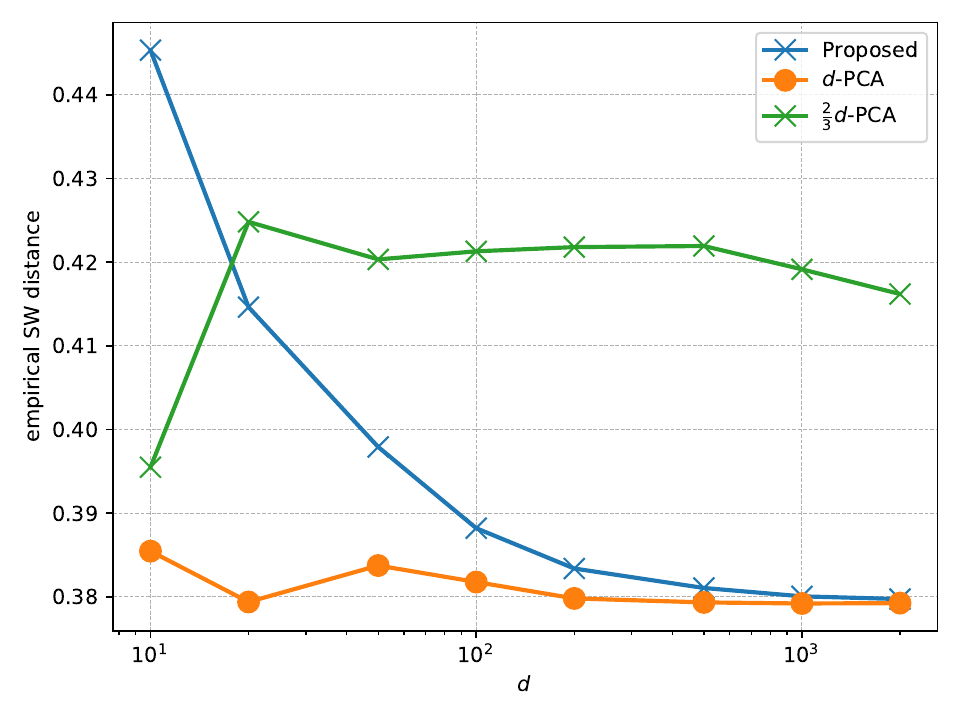}
             \caption{Comparison of sliced Wasserstein distance with our linear generator \eqref{optimallinearparameter} and that from r-PCA \cite{Tse20} when data is correlated AR model
             }
             \label{correlated_tse}
\end{figure}

For high-dimensional sliced WGAN with $d>1$, we compare our asymptotically optimal linear generator in Theorem \ref{optimalslicedWasserstein} with the $r$-PCA solution. The r-PCA is widely adopted \cite{Goodfellow-et-al-2016}, and is the optimal solution of classical population WGAN in the LQG setting \cite{Tse20}. First, we use synthetic data $\{X^{(i)}\}_{i=1}^M$ and $\{Z^{(i)}\}_{i=1}^M$ with sample size $M=4\times 10^5$, where $X^{(i)}$ generated from i.i.d Laplace distribution with zero mean and unit variance for all $i$ as \cite{nadjahi2021fast}, and $Z^{(i)}$ generated from i.i.d Gaussian distribution with zero mean and unit variance for all $i$. To solve the infimum of \eqref{eq_SlicedWGAN} in empirical, we sort $\{\omega_i^T X^{(j)}\}_{j=1}^M$ and $\{\omega_i^T\Theta Z^{(j)}\}_{j=1}^M$ as $[\omega_i^T X]_{(1)}\le[\omega_i^T X]_{(2)}\le\cdots\le[\omega_i^T X]_{(M)}$ and $[\omega_i^T\Theta Z]_{(1)}\le[\omega_i^T\Theta Z]_{(2)}\le\cdots\le[\omega_i^T\Theta Z]_{(M)}$ for each $i$. We compute the empirical sliced Wasserstein distance with Monte Carlo simulation based on $M_\omega=2\times10^4$ random projection samples from the unit sphere \eqref{unitsphere},
\begin{equation} \label{SWmonteappr}
\left(\frac{1}{M_\omega}\frac{1}{M}\sum_{\omega_i\in\Omega,\ i=1}^{M_\omega}\sum_{j=1}^M|[\omega_i^T X]_{(j)} - [\omega_i^T\Theta Z]_{(j)}|^2 \right)^{\frac{1}{2}}.
\end{equation}
We let $\Theta$ of \eqref{SWmonteappr} be our proposed linear generator parameters \eqref{optimallinearparameter} and the parameters of the $r$-PCA solution. In Fig. \ref{tse}, we can see that the sliced Wasserstein distance with our linear generator is much lower than that of $r$-PCA when $r=\frac{2}{3}d$.

When $r=d$, our results reach the same values as the $r$-PCA method at $d=1000$. The computational complexity of solving our proposed asymptotically optimal parameters is lower than the complexity of the $r$-PCA optimal parameters. For the $r$-PCA optimal parameters, the complexity of computing the covariance and finding the eigenvalues is $O((M+r)d^2)$, which is $O(Md^3)$ in the settings $r=\frac{2}{3}d$ and $r=d$. For our proposed linear generator parameters, we only need to compute $\E_\mu[\|X\|^2]$ whose complexity is $O(Md)$. Our asymptotically optimal linear generator has comparable performance to the r-PCA solution while reducing the computational resources required for parameters.

As \cite{nadjahi2021fast}, in addition to the i.i.d. synthetic data, we also use the data generated from the AR model in Remark \ref{remark:correlated}. For each sample $X^{(i)}$, the $j$-th element $X^{(i)}_j=0.5 X^{(i)}_{j-1}+\eta_j$ where $\eta_j$ is i.i.d. student's-t distribution noise. Given the correlated data, we compare the value of \eqref{SWmonteappr} with our proposed linear generator parameters to that of $r$-PCA. In Fig. \ref{correlated_tse}, except for very small $d=10$, we can see that the sliced Wasserstein distance with our proposed linear generator is lower than the i.i.d. data scenario when $r=\frac{2}{3}d$. When $r=d$, our results reach the same values as the $r$-PCA method at $d=2000$ with lower complexity than in the i.i.d. generated data.

\section{Conclusion}
In this work, we investigated the optimal solution characterization of population WGAN beyond the LQG setting. We first derived closed-form optimal parameters for one-dimensional WGAN when the NN has non-linear activation functions, and the data is non-Gaussian. For high‐dimensional data, we adopted the sliced Wasserstein framework and proved that linear generators can be asymptotically optimal. We further proposed an unprojected sliced WGAN that constrains the whole data marginal rather than the projected one in the original sliced WGAN, and we identified its asymptotic optimality as well. Empirical studies have shown that our generator for sliced WGAN can achieve better performance with linear complexity in the data dimension, compared to the celebrated r-PCA solution, which has cubic complexity.

\section*{Acknowledgement}
We would like to acknowledge the help of Wen-Yi Tseng and Kuan-Hui Lyu on the simulations.

\appendices

\section{Proof for Proposition \ref{prop:theta's}} \label{proofSimoid}
If $\Psi$ is continuous and strictly increasing and $\theta_2\in \R_+$, \eqref{quantile} can be expressed in closed-form as
\begin{equation}\label{quantile'}
t^\Theta(x) = \theta_1 + \theta_2 \Psi^{-1}(F_\mu(x)).
\end{equation}
To see this, since $\theta_2>0$, observe that
\begin{equation}
\begin{aligned}
&\Psi\bigg(\frac{t^\Theta(x)-\theta_1}{\theta_2}\bigg) =   \P\bigg(h(Z) \le \frac{t^\Theta(x)-\theta_1}{\theta_2}\bigg) \\ 
= &\P\left(G_\Theta(Z) \le t^\Theta(x)\right) = F_{\nu^\Theta}\left(t^\Theta(x)\right)\overset{(a)}= F_\mu(x),
\end{aligned}
\end{equation}
where the last equality follows from \eqref{quantile} and that $F_{\nu^\Theta}$ is continuous and strictly increasing; here,  $F_{\nu^\Theta}$ inherits the same properties from $\Psi$ thanks to \eqref{G}. Though the continuity of $\Psi$ is not needed in Lemma \ref{prop:1-d result} and its Kantorovich equivalence  \cite[Theorem 2.18]{villani2003topics}, without it, equality (a) will become an inequality, which harms finding closed-form $\Theta$. Then it follows that $\frac{t^\Theta(x)-\theta_1}{\theta_2}=\Psi^{-1}(F_\mu(x))$, which yields \eqref{quantile'}. On the other hand, if $\theta_2 = 0$, since $G_\Theta(Z)\equiv \theta_1\in\R$, we have $F_{\nu^\Theta}(y) = 1_{\{y\ge \theta_1\}}$. Plugging this into \eqref{quantile} directly gives $t^\Theta(x) = \theta_1$ for all $x\in\R$. This particularly shows that \eqref{quantile'} is also satisfied for the case $\theta_2=0$. Our target sub-problem becomes \eqref{GAN-sub}.
As will be explained below, the other sub-problem can be solved in a similar manner.

With closed-form representation \eqref{Wd_2}\eqref{quantile'} for the inner problem \eqref{P}, WGAN \eqref{eq_transportGANE} with $q=2,d=1$ can be simplified to be the following stochastic minimization problem
\begin{equation}\label{GAN'}
\min_{\theta_1 \in\R, \theta_2 \in\R_+}  \E_\mu\left[\left|X-\theta_1 - \theta_2 \Psi^{-1}(F_\mu(X))\right|^2 \right].
\end{equation}
Together with \eqref{G}, \eqref{GAN'} becomes the constrained optimization problem
\begin{align}
&\min_{\theta_1,\theta_2\in\R}\quad \! J(\theta_1,\theta_2) \!:= \! \int_\R \Big(\theta_1 \!+\! \theta_2 \Psi^{-1}(F_\mu(x))\!-\!x\Big)^2 \! F_\mu'(x)  dx \notag \\
&\hbox{subject to}\quad g(\theta_1,\theta_2) := -\theta_2\le 0. \label{constrained}
\end{align}
The corresponding first-order condition—namely, the KKT condition—is again in the form of \eqref{KKT1}-\eqref{KKT2}, where $\lambda\ge 0$ is the Lagrange multiplier.
By direct calculation, \eqref{KKT1} becomes
\begin{align*}
&\int_\R  \Big(\theta_1 + \theta_2 \Psi^{-1}(F_\mu(x))-x\Big)  F_\mu'(x) dx =0,\\
&\int_\R \Big(\theta_1 + \theta_2 \Psi^{-1}(F_\mu(x))-x\Big) \Psi^{-1}(F_\mu(x))F_\mu'(x)dx =\frac{\lambda}{2}.
\end{align*}
Recall $X$ is a random variable with CDF $F_\mu$, and the above equalities can be written as
\begin{align*}
&\theta_1 + \theta_2 \E\left[\Psi^{-1}(F_\mu(X))\right] = \E[X],\\
&\theta_1  \E\left[\Psi^{-1}(F_\mu(X))\right]  + \theta_2  \E\left[(\Psi^{-1}(F_\mu(X)))^2\right]  - \E\left[X \Psi^{-1}(F_\mu(X))\right]=  \lambda/2.
\end{align*}
Note that $F_\mu(X)\sim\operatorname{Uniform}[0,1]$, so that the CDF of $\Psi^{-1}(F_\mu(X))$ is simply $\Psi$. In other words, $\Psi^{-1}(F_\mu(X))$ and $h(Z)$ have identical distribution as \eqref{identical} in Sec.\ref{proofReLU}. The formulas above thus simplify to
\begin{align*}
&\theta_1 + \theta_2 \E\left[h(Z)\right]- \E[X] =0,\\
&\theta_1  \E\left[h(Z)\right]  + \theta_2  \E\left[(h(Z))^2\right] - \E\left[X \Psi^{-1}(F_\mu(X))\right] =  \frac{\lambda}{2}.
\end{align*}
Plugging $\theta_1 = \E[X] -  \theta_2 \E\left[h(Z)\right]$ from the first equality into the second one, we obtain
\[
 \E[X] \E\left[h(Z)\right] +\theta_2 \text{Var}(h(Z)) - \E\left[X \Psi^{-1}(F_\mu(X))\right] -\lambda/2=0.
\]
Hence, a solution $(\theta_1,\theta_2,\lambda)$ to \eqref{KKT1} must equivalently satisfy
\begin{equation}\label{KKT1'}
\begin{split}
\theta_1 &= \E[X] -  \theta_2 \E\left[h(Z)\right],\\
\lambda/2 &= \theta_2 \text{Var}(h(Z)) - \text{Cov}(X,\Psi^{-1}(F_\mu(X))).
\end{split}
\end{equation}

To solve the KKT condition \eqref{KKT1}-\eqref{KKT2} for candidate minimizers, we already know that \eqref{KKT1} boils down to \eqref{KKT1'}, while \eqref{KKT2} simply implies either $\lambda = 0$ or $\theta_2=0$. Also, because $\Psi^{-1}$ is strictly increasing and $F_\mu$ is nondecreasing, the map $x\mapsto \Psi^{-1}(F_\mu(x))$ is nondecreasing. This readily implies
\begin{equation} \label{eq_positiveCov}
\Cov\left(X, \Psi^{-1}(F_\mu(X))\right)\ge 0
\end{equation}
in \eqref{KKT1'} following the step of \eqref{eq_positiveCovReLU} in Sec.\ref{proofReLU}.
We therefore separate the proof into two cases from \eqref{eq_positiveCov}. \\
\noindent
{\bf Case I:} $\Cov(X,\Psi^{-1}(F_\mu(X)))> 0$. If $\theta_2 =0$, \eqref{KKT1'} entails $\lambda= -2\operatorname{Cov}(X,\Psi^{-1}(F_\mu(X)))<0$, which violates the requirement $\lambda\ge 0$ in \eqref{KKT1}-\eqref{KKT2}. If $\lambda=0$, solving \eqref{KKT1'} yields
\begin{equation}\label{theta's1}
\begin{split}
&\theta^*_1 =  \E[X] -  \frac{\Cov(X, \Psi^{-1}(F_\mu(X)))}{\Var(h(Z))}\E \left[h(Z)\right],\\ &\theta^*_2 =  \frac{\Cov(X, \Psi^{-1}(F_\mu(X))) }{\Var(h(Z)) }\ge 0.
\end{split}
\end{equation}
That is, the KKT condition gives a unique candidate optimizer $(\theta_1^*,\theta_2^*,\lambda^*)$, with $(\theta_1^*,\theta_2^*)$ as in \eqref{theta's1} and $\lambda^*=0$. To check the corresponding second-order condition, let $H_J$ and $H_g$ denote the Hessian matrices of $J(\theta_1,\theta_2)$ and $g(\theta_1,\theta_2)$, respectively. Clearly, $H_g = O_{2\times 2}$ and is all-zero, which implies
\begin{equation}\label{Hessian}
H_J + \lambda H_g =2
\begin{bmatrix}
1 & \E[h(Z)] \\
\E[h(Z)] & \E[(h(Z))^2]
\end{bmatrix}.
\end{equation}
As $\det(H_J+\lambda H_g)/4 = \E[(h(Z))^2]-\E[(h(Z))]^2=\text{Var}(h(Z))>0$, $H_J+\lambda H_g$ is positive definite. Hence, we conclude that $(\theta_1^*,\theta_2^*)$ in \eqref{theta's1} is the unique minimizer of \eqref{constrained}. \\
\noindent {\bf Case II:} $\Cov(X,\Psi^{-1}(F_\mu(X)))= 0$. As \eqref{KKT1'} entails $\lambda/2 = \theta_2 \text{Var}(h(Z))$, we have $\lambda=\theta_2 =0$ (by recalling that either $\lambda =0$ or $\theta_2 =0$). That is, the KKT condition gives a unique candidate optimizer $(\theta_1^*,\theta_2^*,\lambda^*) = (\E[X],0,0)$. Since $H_J+\lambda H_g$ is again given by \eqref{Hessian}, which is positive definite, we conclude that $(\theta_1^*,\theta_2^*) = (\E[X],0)$ is the unique minimizer of \eqref{constrained}.

\begin{lemma} \label{coro:min_value}
Suppose that $\mu$ has no atom, the CDF $\Psi$ of $h(Z)$ is continuous and strictly increasing, and $\Var(h(Z))>0$.
The sub-problem \eqref{GAN-sub} has a unique minimizer $\Theta^* = (\theta_1^*,\theta_2^*)\in\R\times\R_+$, defined as in \eqref{theta's}. Moreover, the minimum value is
\begin{equation}\label{min value}
\int_\R |x-t^{\Theta^*}(x)|^2 d\mu(x) = \Var(X) - \frac{\Cov(X,\Psi^{-1}(F_\mu(X)))}{\Var(h(Z))}.
\end{equation}
\end{lemma}

\begin{proof}
As \eqref{GAN-sub} is equivalent to \eqref{constrained}, Proposition~\ref{prop:theta's} directly shows that $\Theta^* = (\theta_1^*,\theta_2^*)\in\R\times\R_+$ as in \eqref{theta's} is the unique minimizer of \eqref{GAN-sub}. Thanks to \eqref{quantile'} and \eqref{theta's},
\begin{align*}
&\int_\R |x-t^{\Theta^*}(x)|^2 d\mu(x)\\
&= \int_\R \left(x-\theta_1^*-\theta^*_2 \Psi^{-1}(F_\mu(x))\right)^2 d\mu(x)\\
&= \int_\R \left(x-\E[X] - \theta^*_2 \Big(\Psi^{-1}(F_\mu(x))-\E[h(Z)]\Big)\right)^2 d\mu(x)\\
&= \int_\R (x-\E[X])^2 d\mu(x)- 2\theta^*_2 \int_\R (x-\E[X]) \Big(\Psi^{-1}(F_\mu(x))-\E[h(Z)]\Big) d\mu(x) \\
&\ \ + (\theta^*_2)^2 \int_\R \Big(\Psi^{-1}(F_\mu(x))-\E[h(Z)]\Big)^2 d\mu(x)\\
&= \Var(X)- 2\theta^*_2 \Cov(X, \Psi^{-1}(F_\mu(X))) + (\theta^*_2)^2 \Var(h(Z))\\
&=\Var(X) - \frac{\Cov(X,\Psi^{-1}(F_\mu(X)))^2}{\Var(h(Z))},
\end{align*}
where the fourth equality follows from the fact that $\Psi^{-1}(F_\mu(X))$ and $h(Z)$ have identical distribution, and the last equality is due to the definition of $\theta^*_2$ in \eqref{theta's}.
\end{proof}

\begin{lemma} \label{coro:GAN_sub}
Suppose that $\mu$ has no atom, the CDF $\Psi$ of $h(Z)$ is continuous and strictly increasing, and $\Var(h(Z))>0$.
The sub-problem \eqref{GAN-sub'} has a unique minimizer $\Theta^* = (\theta_1^*,\theta_2^*)\in\R\times\R_+$, defined by
\begin{equation}\label{theta's'}
\begin{split}
\theta^*_1 &=  \E[X] -  \frac{\Cov\left(X, \Psi^{-1}(1-F_\mu(X))\right) }{\Var(h(Z))}\E \left[h(Z)\right],\\
\theta^*_2 &=  \frac{\Cov\left(X, \Psi^{-1}(1-F_\mu(X))\right) }{\Var(h(Z)) }\le 0.
\end{split}
\end{equation}
Moreover, the minimum value is
\begin{equation}\label{min value'}
\int_\R |x-t^{\Theta^*}(x)|^2 d\mu(x) = \Var(X) - \frac{\Cov(X,\Psi^{-1}(1-F_\mu(X)))^2}{\Var(h(Z))}.
\end{equation}
\end{lemma}

\begin{proof}
Set $\eta:=-\theta_2$ and let $\Psi_0$ denote the CDF of $-h(Z)$. Given $x\in\R$, take $y = \Psi^{-1}_0(F_\mu(x))$. Observe that
$F_\mu(x) = \Psi_0(y) = \P(-h(Z)\le y) = \P(h(Z)\ge -y) = 1-\Psi(-y)$, which implies $y = - \Psi^{-1}(1-F_\mu(x))$. That is,
\begin{equation}\label{Psi_0}
\Psi^{-1}_0(F_\mu(x)) = - \Psi^{-1}(1-F_\mu(x))\quad \forall x\in\R.
\end{equation}
Now, since we can rewrite $G_\Theta$ in \eqref{G} as $G_\Theta =  \theta_1+\eta\cdot (-h(Z))$, the sub-problem \eqref{GAN-sub'} can be solved by Proposition~\ref{prop:theta's}, with $(\theta_2, h(Z), \Psi)$ therein replaced by $(\eta,-h(Z),\Psi_0)$. The unique minimizer $(\theta^*_1,\eta^*)$ is then given by
\begin{equation*}
\begin{split}
\eta^* &=  \frac{\Cov\left(X, \Psi^{-1}_0(F_\mu(X))\right) }{\Var(-h(Z)) } =  -\frac{\Cov\left(X, \Psi^{-1}(1-F_\mu(X))\right) }{\Var(h(Z)) }\\
\theta^*_1 &=  \E[X] -  \eta^* \E \left[-h(Z)\right] = \E[X] - \frac{\Cov\left(X, \Psi^{-1}(1-F_\mu(X))\right) }{\Var(h(Z)) } \E[h(Z)],
\end{split}
\end{equation*}
where we used \eqref{Psi_0} to change from $\Phi_0$ to $\Psi$. This readily yields \eqref{theta's'}. By replacing $\Psi$ in \eqref{min value} with $\Psi_0$ and using again \eqref{Psi_0}, we obtain the desired minimum value formula.
\end{proof}

Now we can prove Proposition \ref{prop:theta's}. By the linearity of covariances,
\begin{equation*}
\begin{split}
&\Cov\left(X, \Psi^{-1}(F_\mu(X))\right) + \Cov\left(X,\Psi^{-1}(1-F_\mu(X))\right)
 = \Cov\left(X, \Psi^{-1}(F_\mu(X))+\Psi^{-1}(1-F_\mu(X))\right).
\end{split}
\end{equation*}
If \eqref{Cov>0} holds, the above implies that the value in \eqref{min value} is smaller than or equal to that in \eqref{min value'}. Hence, the minimizer $(\theta_1^*,\theta_2^*)$ should be the one associated with the value \eqref{min value}, which is given by \eqref{theta's}. On the other hand, if \eqref{Cov>0} fails, the same argument shows that the value in \eqref{min value'} is smaller than that in \eqref{min value}. The minimizer $(\theta_1^*,\theta_2^*)$ should then be the one associated with the value \eqref{min value'}, which is given by \eqref{theta's'}.

\section {Proof of Lemma \ref{ReLUthetanegative}}\label{prooflReLUnegative}
Under the condition $\theta_2\le 0$, we deduce from \eqref{quantile} and \eqref{cdf ReLU} that
\begin{equation}\label{quantile' ReLU'}
t^\Theta(x) =
\begin{cases}
\theta_1- \theta_2 \Phi^{-1}(F_\mu(x)),\quad &\hbox{if}\ F_\mu(x)\le 1/2,\\
\theta_1,\quad &\hbox{if}\ F_\mu(x)> 1/2.
\end{cases}
\end{equation}
Indeed, for the case $F_\mu(x)\le 1/2$, with $\theta_2\le 0$, we get
\[
\begin{split}
\theta_1 +\theta_2 \Phi^{-1}(1-F_\mu(x)) = \theta_1 -\theta_2\Phi^{-1}(F_\mu(x)),
\end{split}
\]
where we used the fact $\Psi(v)=\Phi(v)$ for $v\ge 0$ (recall \eqref{cdf ReLU}) and that $Z$ (whose CDF is $\Phi$) is distributed symmetrically about 0.

In the following, we will often write $\theta_2\le 0$ in terms of $\eta:=-\theta_2\ge 0$, so that we can more easily borrow the arguments from $\theta_2\ge 0$. Similarly to \eqref{constrained ReLU}, the sub-problem \eqref{GAN-sub'} now becomes
\begin{equation}\label{constrained ReLU'}
\begin{split}
&\min_{\theta_1\in\R,\eta\ge 0}  J(\theta_1,\eta) :=  \int_{\{F_\mu(x)> 1/2\}} \Big(\theta_1 -x\Big)^2 F_\mu'(x) dx+\int_{\{F_\mu(x)\le 1/2\}} \Big(\theta_1 + \eta \Phi^{-1}(F_\mu(x))-x\Big)^2 F_\mu'(x) dx\\
&\hbox{subject to}\quad g(\theta_1,\eta) := -\eta\le 0.
\end{split}
\end{equation}
The corresponding first-order condition---namely, the KKT condition---is again in the form of \eqref{KKT1}-\eqref{KKT2} (with $\theta_2$ therein replaced now by $\eta$), where $\lambda\ge 0$ is the Lagrange multiplier. Now, by following the same arguments in \eqref{11}-\eqref{22'} (with $\theta_2$ replaced by $\eta$ and the two sets $\{F_\mu(x)\le 1/2\}$ and $\{F_\mu(x)> 1/2\}$ switched), we obtain $\theta_1 = \E[X] -  \eta \E\left[\Phi^{-1}(F_\mu(X)) 1_{\{F_\mu(X)\le 1/2\}}\right]$ and
\begin{align}
 &\E[X] \E\left[\Phi^{-1}(F_\mu(X)) 1_{\{F_\mu(X)\le 1/2\}} \right]
 +\eta \text{Var}\left(\Phi^{-1}(F_\mu(X))1_{\{F_\mu(X)\le 1/2\}}\right)\notag\\
&-\E\left[X \Phi^{-1}(F_\mu(X)) 1_{\{F_\mu(X)\le 1/2\}} \right]-\lambda/2=0. \label{22'''}
\end{align}
Similarly to \eqref{E calc} and \eqref{E^2 calc}, we have
\begin{align}
&\E\left[\Phi^{-1}(F_\mu(X)) 1_{\{F_\mu(X)\le 1/2\}} \right]
= \E[Z 1_{\{\Phi(Z)\le1/2\}}] = \E[Z 1_{\{ Z\le 0\}}] = -\frac{1}{\sqrt{2\pi}},\label{E calc'}\\
&\E\left[(\Phi^{-1}(F_\mu(X)))^2 1_{\{F_\mu(X)\le 1/2\}} \right] 
=  \E[Z^2 1_{\{\Phi(Z)\le1/2\}}] = \E[Z^2 1_{\{ Z\le 0\}}] =  \frac{1}{2}. \label{E^2 calc'}
\end{align}
In view of this, \eqref{11'} and \eqref{22'''} reduce to
\begin{equation}\label{KKT1'''}
\begin{split}
\theta_1 &= \E[X] +  \frac{\eta}{\sqrt{2\pi}},\\
\lambda/2 &= \frac{\eta}{2} \left(1-\frac{1}{\pi}\right) - \text{Cov}(X,\Phi^{-1}(F_\mu(X)) 1_{\{F_\mu(X)\le 1/2\}}).
\end{split}
\end{equation}

\begin{lemma}\label{prop:theta's ReLU'}
Suppose that $\mu$ has no atom. Then, \eqref{constrained ReLU'} has a unique minimizer $(\theta_1^*,\eta^*)\in\R\times\R_+$ given by
\begin{equation}\label{theta's ReLU'-pre}
\begin{split}
\theta^*_1 &=  \E[X]  + \frac{\sqrt{2\pi}}{\pi-1} \Cov\big(X, \Phi^{-1}(F_\mu(X)) 1_{\{F_\mu(X)\le 1/2\}}  \big),\\
\eta^* &=   \frac{2\pi}{\pi-1} \Cov\left(X, \Phi^{-1}(F_\mu(X)) 1_{\{F_\mu(X)\le 1/2\}} \right)\ge 0.
\end{split}
\end{equation}
\end{lemma}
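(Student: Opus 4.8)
The plan is to mirror, almost verbatim, the argument for Lemma~\ref{prop:theta's ReLU}, now with the two regions $\{F_\mu(X)\le 1/2\}$ and $\{F_\mu(X)>1/2\}$ interchanged and with $\theta_2$ replaced by $\eta=-\theta_2\ge 0$. Since the KKT system for \eqref{constrained ReLU'} has already been reduced to \eqref{KKT1'''}, what remains is to pin down the sign of the relevant covariance, solve \eqref{KKT1'''} case by case, and verify the second-order condition.

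First I would establish that $\Cov(X,\Phi^{-1}(F_\mu(X))1_{\{F_\mu(X)\le 1/2\}})\ge 0$. Set $\varphi(x):=\Phi^{-1}(F_\mu(x))1_{\{F_\mu(x)\le 1/2\}}$ and observe that $\varphi$ is nondecreasing: on $\{F_\mu(x)\le 1/2\}$ it coincides with $\Phi^{-1}(F_\mu(x))\le 0$, which is nondecreasing because $\Phi^{-1}$ is increasing and $F_\mu$ is nondecreasing, and it attains the value $\Phi^{-1}(1/2)=0$ exactly where $F_\mu(x)=1/2$, matching the constant value $0$ on $\{F_\mu(x)>1/2\}$; hence there is no downward jump at the threshold. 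With $\varphi$ nondecreasing, the coupling argument used in \eqref{Cov nonn}---taking an independent copy $X'$ of $X$ and using $(X-X')(\varphi(X)-\varphi(X'))\ge 0$---yields the claimed nonnegativity.

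Next I would solve \eqref{KKT1'''} in the two standard cases. If the covariance is strictly positive, then $\eta=0$ would force $\lambda=-2\Cov(X,\Phi^{-1}(F_\mu(X))1_{\{F_\mu(X)\le 1/2\}})<0$, contradicting $\lambda\ge 0$; hence complementary slackness \eqref{KKT2} gives $\lambda=0$, and the second line of \eqref{KKT1'''} delivers $\eta^*=\frac{2\pi}{\pi-1}\Cov(X,\Phi^{-1}(F_\mu(X))1_{\{F_\mu(X)\le 1/2\}})\ge 0$, with $\theta_1^*$ read off from the first line (the identity $2\pi/\sqrt{2\pi}=\sqrt{2\pi}$ recovering the stated formula). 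If the covariance is zero, \eqref{KKT1'''} forces $\lambda=\eta=0$ and $\theta_1^*=\E[X]$, the $\eta^*=0$ specialization of \eqref{theta's ReLU'-pre}. In either case the candidate is exactly \eqref{theta's ReLU'-pre}.

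Finally I would confirm the second-order sufficient condition. Because $g(\theta_1,\eta)=-\eta$ is affine, $H_g=O_{2\times2}$, so $H_J+\lambda H_g=H_J$; using \eqref{E calc'} and \eqref{E^2 calc'} (giving $\E[\Phi^{-1}(F_\mu(X))1_{\{F_\mu(X)\le 1/2\}}]=-1/\sqrt{2\pi}$ and $\E[(\Phi^{-1}(F_\mu(X)))^2 1_{\{F_\mu(X)\le 1/2\}}]=1/2$), a direct differentiation of $J$ shows that
\[
\tfrac12 H_J=\begin{bmatrix} 1 & -\frac{1}{\sqrt{2\pi}}\\ -\frac{1}{\sqrt{2\pi}} & \frac12 \end{bmatrix},
\]
whose determinant is $\tfrac12-\tfrac{1}{2\pi}=\tfrac{\pi-1}{2\pi}>0$ with positive leading entry, so $H_J$ is a constant positive-definite matrix, $J$ is strictly convex, and the KKT point is the unique minimizer of \eqref{constrained ReLU'}. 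The only feature differing from Lemma~\ref{prop:theta's ReLU} is the sign of the off-diagonal Hessian entry, but since the determinant is unchanged this does not affect positive definiteness; I therefore expect no genuine obstacle, the only point demanding care being the monotonicity/sign check for the truncated quantile map $\varphi$.
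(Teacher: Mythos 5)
Your proposal is correct and follows essentially the same route as the paper's proof: establish monotonicity of the truncated map $x\mapsto \Phi^{-1}(F_\mu(x))1_{\{F_\mu(x)\le 1/2\}}$ to get nonnegativity of the covariance via the independent-copy argument of \eqref{Cov nonn}, then split into the strictly-positive and zero covariance cases, solve \eqref{KKT1'''}, and verify positive definiteness of the Hessian $\begin{bmatrix} 1 & -\frac{1}{\sqrt{2\pi}}\\ -\frac{1}{\sqrt{2\pi}} & \frac12 \end{bmatrix}$. The only difference is that you spell out the steps the paper compresses into "repeating the same arguments in the proof of Lemma~\ref{prop:theta's ReLU}," so there is no substantive divergence.
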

\begin{proof}
Consider $f(x):=\Phi^{-1}(F_\mu(x)) 1_{\{F_\mu(x)\le 1/2\}}$ and set $x^*:=\sup\{x\in\R : F_\mu(x)\le 1/2\}$.
As $F_\mu$ is nondecreasing and $\Phi^{-1}$ is strictly increasing, for $x<x^*$, $f(x)= \Phi^{-1}(F_\mu(x))$ is nondecreasing with $f(x)\le\Phi^{-1}(1/2)=0$; for $x\ge x^*$, we simply have $f\equiv 0$. Since $f$ is nondecreasing on $\R$, an argument similar to \eqref{Cov nonn} implies $\Cov\left(X, \Phi^{-1}(F_\mu(X)) 1_{\{F_\mu(X)\le 1/2\}}  \right)\ge 0$.
Specifically, we deal with the two cases where $\Cov\left(X, \Phi^{-1}(F_\mu(X)) 1_{\{F_\mu(X)\le 1/2\}}  \right)$ is strictly positive and zero separately. In each case, we solve \eqref{KKT1'''} (in place of \eqref{KKT1''}) and find that $H_J+\lambda H_g$ now takes the form
\[
\begin{bmatrix}
1 &  -\frac{1}{\sqrt{2\pi}}\\
 -\frac{1}{\sqrt{2\pi}}& \frac12
 \end{bmatrix},
\]
which is positive definite.
\end{proof}
By repeating all the calculations in Lemma~\ref{coro:ReLU} (with the two sets $\{F_\mu(x)\le 1/2\}$ and $\{F_\mu(x)> 1/2\}$ switched and $\theta_2$ replaced by $\eta^*$ in \eqref{theta's ReLU'-pre}), we obtain the minimum value formula.

\section{Proof of Corollary \ref{prop:theta's Linear}}\label{proof:theta's Linear}

Besides checking \eqref{Cov>0}, it is easy to see that one can limit $\theta_2 \in \R_+$ such that the optimal parameter is \eqref{theta's} when $h(Z)=Z$. Now the distribution of $-h(Z)$ is still the same as $h(Z)$. Then one can rewrite \eqref{G} as $
G_\Theta(Z) = \theta_1+(-\theta_2)*(-Z),
$
and absorb the case for $\theta_2<0$ into that for $\theta_2 \geq 0$. Since now $h(Z)=Z, Z \sim \mathcal{N}(0,1)$, \eqref{theta's} reduces to
\begin{equation}
\begin{split}
\theta^*_2 &=  \Cov\left(X, \Phi^{-1}(F_\mu(X))\right)\\
\theta^*_1 &=  \E_\mu[X] ;
\end{split}
\end{equation}
Furthermore,
\begin{equation} \label{eq_transDataGaussian}
  \Phi^{-1}(F_\mu(X)) \stackrel{d}{=} Z,
\end{equation}
from $F_\mu(X) \stackrel{d}{=} \Phi(Z)$ and both uniformly distributed in
$[0,1]$ for continuous $X$. Thus $\E[\Phi^{-1}(F_\mu(X))]=0$ and we reach \eqref{eq_W2thetaOpt1}. The other method is that one can rewrite \eqref{eq_transportGANE} as
\begin{equation} \label{eq_transportGAN}
\min_{\Theta} \inf_{\pi\in \Pi(\mu,\nu)} \int
\|x-G_\Theta(z)\|^q d\pi(x,z).
\end{equation}
where $\nu$ is the distribution of the generator input, a Gaussian vector. The reason is that, by transforming random variables, the optimal solution of the inner discriminator problem of \eqref{eq_transportGANE} can be transformed to be the optimal solution of the inner discriminator problem in \eqref{eq_transportGAN}. Let $\pi^*(x,y,\Theta)$ be the solution of the inner discriminator problem, then from Danskin's theorem \cite[Proposition B.25]{alma991020283739704786}, its gradient over $\Theta$ is 
\begin{equation} \label{eq_gra_theta}
\int  \nabla_{\! \Theta} \|G_\Theta(z)-x\|^q d\pi^*(x,z,\Theta),
\end{equation}
Under $q=2$ with $d = 1$, the gradient over vector $\theta=(\theta_1, \theta_2)$ \begin{align}
& \int \nabla_{\! \Theta}
\left( x-(\theta_1+\theta_2z)\right)^2 d\pi^*(x,z,\theta)  \notag \\
=& -2\;[
 \E_\mu[X]-(\theta_1+\theta_2\E[Z]) \; ,
 \E_{\pi*}[XZ]-(\theta_1\E[Z]+\theta_2\E[Z^2]) \label{eqL_1dimW2gra}
]^T,
\end{align}
since the matrix derivative
\begin{align}
&\nabla_{\! \Theta}
\left( x-(\theta_1+\theta_2z)\right)^2 \notag \\
=&\nabla_{\! \Theta} \left(x - \begin{bmatrix} 1&z \end{bmatrix} \begin{bmatrix} \theta_1 \\   \theta_2 \end{bmatrix}\right)^T \left(x - \begin{bmatrix} 1&z \end{bmatrix} \begin{bmatrix} \theta_1 \\   \theta_2 \end{bmatrix}\right) \notag \\
=&-2 \begin{bmatrix} 1&z \end{bmatrix}^T  \left(x-\begin{bmatrix} 1&z \end{bmatrix} \begin{bmatrix}    \theta_1 \\   \theta_2 \end{bmatrix} \right)
\end{align}
Note that the optimal joint PDF in the LHS of \eqref{Wd_2} will make $G_\Theta(Z)=t^\Theta(X)$, which also holds for $\pi^*$ in \eqref{eq_gra_theta}. Together with \eqref{quantile'}\eqref{G}, $\pi^*$ will make
\[
\theta_1+\theta_2 \Phi^{-1}(F_\mu(X)) = \theta_1+\theta_2 Z,
\]
which results \eqref{eq_W2thetaOpt1} by letting \eqref{eqL_1dimW2gra} equals to $[0,0]^T$.

For \eqref{optParaLinear1} we swapping the role of $X$ and $G_\Theta(Z)$ in
Lemma \ref{prop:1-d result}, and from \eqref{eqL_1dimW2gra}
\begin{equation} \label{optParaLinear1'}
\theta^*_2=\E_{\pi*}[XZ]=\E_g[F_\mu^{-1}(\Phi(Z) \cdot Z].
\end{equation}
To see this, from Lemma \ref{prop:1-d result}
\begin{align*} \inf_{\pi \in
\Pi(\mu,\nu^\Theta)} \int_{\R \times \R} | x - y |^2 d\pi(x,y) =
\int_{\R} | t^{\Theta} (y) - y |^2 d\nu^{\Theta}(y),
\end{align*}
where $d\nu^{\theta}(y) $ is the distribution of Gaussian variable
\begin{equation} \label{eq_SimplyGZ}
Y = G_\Theta (Z) = \theta_1 + \theta_2 Z.
\end{equation}
The optimal transport is then
\[
t^\Theta(y) = F_{\mu}^{-1} \left( \Phi \left(
\frac{y-\theta_1}{\theta_2} \right) \right).
\]
Note that the optimal joint distribution $\pi*$ will equal that of $(t^\Theta(Y),Y)$. From \eqref{eq_SimplyGZ} and $t^\Theta(Y)=X$ we reach \eqref{optParaLinear1'}.

\section{Detailed Proof for Lemma \ref{ThmGaussianMani}} \label{ProofGaussianManifull}
First we prove that for $q=1,2$, the gap between \eqref{eq_GaussianManiProp} and \eqref{eq_GaussianManiAppr} is bounded as
\begin{align}
\Bigg| & \Big(\int_{\omega\in\Omega_G}\inf_{\pi_\omega} \mathbb{E}_{\pi_\omega}[|\omega^T X - \omega^TG_\Theta(Z)|^q] d\omega \Big)^{\frac{1}{q}}- 
\Big(\int_{\omega\in\Omega_G}\inf_\pi \mathbb{E}_\pi[|\hat{X}_G - \omega^TG_\Theta(Z)|^q] d\omega \Big)^{\frac{1}{q}} \Bigg| \leq f^\mu(d) \label{eq_GaussianManiApprgap}
\end{align}
where $f^\mu(d)$ is in \eqref{eq_GaussianManiBound}. To see this, first we prove
\[
\int_{\omega\in\Omega_G} W^q_q(\omega^T X, \hat{X}_G) d\omega \leq  (f^\mu(d))^q, q=1,2
\]
where as the RHS of \eqref{P} we define Wasserstein distance of order $q$ as
\[
W_q(\mu,\nu):= \left(\inf_{\pi\in \Pi(\mu,\nu)} \E_\pi[\|X-Y\|^q]\right)^{1/q}
\]
where the marginals with respect to the first and second variables are given by $\mu$ and $\nu$ respectively. The case for $q=2$ is proved by \cite[Corollary 3]{ReevesISIT17}. Now for $q=1$,
\begin{align*}
&\int_{\omega\in\Omega_G} W_1(\omega^T X, \hat{X}_G) d\omega  \le
\int_{\omega\in\Omega_G} W_2(\omega^T X, \hat{X}_G) d\omega  \le   \bigg(\int_{\omega\in\Omega_G} W^2_2(\omega^T X, \hat{X}_G) d\omega\bigg)^{1/2} \le f^\mu(d)
\end{align*}
where the first inequality is from $W_1\le W_2$ by the definition of Wasserstein distances (see \cite[Remark 6.6]{Villani-book-09}) and the second one is from Jensen's inequality. Then from triangle inequality, given $\omega$
\begin{align*}
&|W_q(\omega^T X, \omega^TG_\Theta(Z))-W_q(\hat{X}_G, \omega^TG_\Theta(Z))|  \leq  W_q(\omega^T X, \hat{X}_G), q=1,2,
\end{align*}
which implies
\begin{align}
&\int_{\omega\in\Omega_G} |W_q(\omega^T X, \omega^TG_\Theta(Z))-W_q(\hat{X}_G, \omega^TG_\Theta(Z))|^q d\omega \leq  (f^\mu(d))^q. \label{eq_sW1bound1}
\end{align}
Finally, by the triangle inequality
\begin{align*}
\Bigg|&\left(\int_{\omega\in\Omega_G} W^q_q(\omega^T X, \omega^TG_\Theta(Z))\right)^{1/q} \notag -\left(\int_{\omega\in\Omega_G} W^q_q(\hat{X}_G, \omega^TG_\Theta(Z))\right)^{1/q}\Bigg| \\ \leq & \left(\int_{\omega\in\Omega_G} |W_q(\omega^T X, \omega^TG_\Theta(Z))-W_q(\hat{X}_G, \omega^TG_\Theta(Z))|^q d\omega \right)^{1/q}.
\end{align*}
Note that when $q=1$, one does not need the triangle inequality to make the upper-bound valid. Then with \eqref{eq_sW1bound1}, we have \eqref{eq_GaussianManiApprgap}.

Next, now from $q=1$ in \eqref{eq_GaussianManiAppr} we aim to solve with the linear generator
\begin{equation} \label{eq_W1GaussianManiAppr0}
\min_\Theta \left(\int_{\omega\in\Omega_G}\inf_\pi \mathbb{E}_\pi[|\hat{X}_G - \omega^T\Theta Z|] d\omega \right),
\end{equation}
which equals to \eqref{eq_GaussianManiAppr20} with $q=1$ as follows.
Given $\Theta$ and $\omega$, following the proof of Corollary \ref{prop:W1d1}
\[
\inf_\pi \mathbb{E}_\pi[|\hat{X}_G - \omega^T\Theta Z|]=\frac{1}{\tilde{\sigma}_x}\mathbb{E}[|(\tilde{\sigma}_x-\sqrt{\omega^T\Theta\Theta^T\omega})\hat{X}_G|],
\]
since the optimal transport function will make the distribution of $\omega^T\Theta Z$ same as
\[
\frac{\sqrt{\omega^T\Theta\Theta^T\omega}}{\tilde{\sigma}_x} \hat{X}_G.
\]
Then solving \eqref{eq_W1GaussianManiAppr0} equals to solving
\begin{equation} \label{eq_W1GaussianManiAppr1}
\min_\Theta \E_\omega [ |\tilde{\sigma}_x-\sqrt{\omega^T\Theta\Theta^T\omega}| ]
\end{equation}
since
\[
\mathbb{E}[|(\tilde{\sigma}_x-\sqrt{\omega^T\Theta\Theta^T\omega})\hat{X}_G|]=|\tilde{\sigma}_x-\sqrt{\omega^T\Theta\Theta^T\omega}|\mathbb{E}[|\hat{X}_G|]. \]
Similarly, from $q=2$ in \eqref{eq_GaussianManiAppr} , we aim to solve
\begin{equation} \label{eq_W2GaussianManiAppr}
\min_\Theta \left(\int_{\omega\in\Omega_G}\inf_\pi \mathbb{E}_\pi[|\hat{X}_G - \omega^T\Theta Z|^2] d\omega \right)^{\frac{1}{2}},
\end{equation}
which equals solving
\begin{align} \label{eq_W2GaussianManiAppr3}
 \min_\Theta \int_{\omega\in\Omega_G} \left(\tilde{\sigma}_x-\sqrt{\omega^T\Theta\Theta^T\omega}\right)^2  d\omega
\end{align}
from Corollary \ref{prop:theta's Linear}. And \eqref{eq_W2GaussianManiAppr3} equals to \eqref{eq_GaussianManiAppr20} with $q=2$.

Finally, it is easy to see that \eqref{eq_GaussianManiAppr20} with $q=2$ equals \eqref{eq_W2GaussianManiAppr4}. Here, we focus on the term
\begin{equation} \label{eqSqrtQuadratic0}
\E_w [\sqrt{\omega^T\Theta\Theta^T\omega}]=\E_{U_w}[U_wU_w^{-1/2}]
\end{equation}
in \eqref{eq_W2GaussianManiAppr4} and prove that it equals to \eqref{eqSqrtQuadratic}. By recalling the gamma function $\Gamma(t):= \int_0^\infty y^{t-1} e^{-y}dy$ for all $t>0$, we observe that
\begin{align*}
U_w^{-1/2} &= U_w^{-1/2} \frac{1}{\Gamma(1/2)}\int_0^\infty y^{1/2-1} e^{-y}dy\\
&= \frac{1}{\Gamma(1/2) U_w}\int_0^\infty \bigg(\frac{y}{U_w}\bigg)^{1/2-1} e^{-y}dy
= \frac{1}{\Gamma(1/2)}\int_0^\infty z^{-1/2} e^{-zU_w}dz,
\end{align*}
where the second line follows from the change of variable $z = y/U_w$. Note that the above calculation is motivated by  \cite[Example 3.2b.2]{Mathai}. It follows that \eqref{eqSqrtQuadratic0} equals to
\begin{align*}
\frac{1}{\Gamma(1/2)} \int^\infty_0 z^{-1/2}\E_{U_w}[U_w\exp(-zU_w)]dz                                         
\end{align*}
which equals to \eqref{eqSqrtQuadratic} from $\E_{U_w}[U_w\exp(-zU_w)]=-\frac{\partial}{\partial z}M_{U_w}(-z)$ where we recall $M_{U_w}(z):=\E[e^{zU_w}]$ is the moment generating function of $U_w$. Moreover, from \cite[Theorem 3.2a.1]{Mathai}, $M_{U_w}(z)$ has a closed form as
\begin{equation} \label{eq_MGFQuadra}
M_{U_w}(z)=\left|\mathbf{I}-\frac{2 z}{d} \Theta\Theta^T\right|^{-1/2}.
\end{equation}
Plugging \eqref{eq_MGFQuadra} into \eqref{eqSqrtQuadratic}, together with \eqref{eq_W2GaussianManiAppr4}\eqref{eq_W2GaussianManiAppr5},  we have \eqref{eq_W2GaussianManiAppr2}.

\section{Solution for \eqref{eq_GaussianManiAppr} when $q=d=2$}\label{dimension2}
Now we use elliptic integrals to show the special case of Theorem \ref{optimalslicedWasserstein} and \eqref{ubvalueofslice} when $q=d=2$ as follows.
\begin{corollary} The minimum value of \eqref{eq_GaussianManiAppr} when $q=d=2$ is 
\begin{equation}\label{inequalitywhenDiieqal2}\notag
\tilde{\sigma}_x\sqrt{1-\frac{\pi}{4}},
\end{equation}
with the optimal parameters of the linear generator 
$\Theta=\mathbf{U}\Lambda \mathbf{V}^T$ where $\Lambda\Lambda^T=(\pi\tilde{\sigma}_x^2/4)\mathbf{I},$ and $\mathbf{U},\mathbf{V}^T$ are unitary.
\end{corollary}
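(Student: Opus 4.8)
The plan is to treat this corollary as the $d=2$ specialization of Theorem~\ref{optimalslicedWasserstein} and verify it by a direct computation, bypassing the Carlson-R machinery. By Lemma~\ref{ThmGaussianMani}, solving \eqref{eq_GaussianManiAppr} with the linear generator at $q=2$ is equivalent to minimizing the scalar objective in \eqref{eq_W2GaussianManiAppr4}, namely $\tilde\sigma_x^2 - \E_\omega[2\tilde\sigma_x\sqrt{\omega^T\Theta\Theta^T\omega} - \omega^T\Theta\Theta^T\omega]$, where $\omega\sim\mathcal N(0,\tfrac12\mathbf I_2)$. This depends on $\Theta$ only through $M:=\Theta\Theta^T\succeq 0$, and since $\E_\omega[\omega^T M\omega]=\mathrm{Tr}(M)/2$, the entire problem reduces to evaluating $\E_\omega[\sqrt{\omega^T M\omega}]$ in closed form. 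First I would diagonalize $M=\mathrm{diag}(a,b)$ with $a\ge b\ge 0$ (the objective is invariant under $M\mapsto\mathbf U M\mathbf U^T$ because $\omega$ is rotationally invariant) and pass to polar coordinates.

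Writing $\omega=\tfrac{1}{\sqrt2}R(\cos\phi,\sin\phi)$ with $R$ Rayleigh-distributed (from a standard $2$-D Gaussian) and $\phi\sim\mathrm{Unif}[0,2\pi)$ independent, I get $\sqrt{\omega^T M\omega}=\tfrac{R}{\sqrt2}\sqrt{a\cos^2\phi+b\sin^2\phi}$, so the radial and angular parts factor: $\E_\omega[\sqrt{\omega^T M\omega}]=\tfrac{1}{\sqrt2}\,\E[R]\cdot\tfrac{1}{2\pi}\int_0^{2\pi}\sqrt{a\cos^2\phi+b\sin^2\phi}\,d\phi$. Using $\E[R]=\sqrt{\pi/2}$, the period-$\pi$ symmetry $\int_0^{2\pi}=4\int_0^{\pi/2}$, and the substitution $a\cos^2\phi+b\sin^2\phi=a(1-k^2\sin^2\phi)$ with $k^2=1-b/a$, the angular integral becomes the complete elliptic integral of the second kind $E(k)=\int_0^{\pi/2}\sqrt{1-k^2\sin^2\phi}\,d\phi$. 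This yields the clean closed form $\E_\omega[\sqrt{\omega^T M\omega}]=\tfrac{\sqrt a}{\sqrt\pi}\,E(k)$, which is exactly the elliptic-integral representation promised in the remark.

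Finally, to optimize I would argue that $M\mapsto\E_\omega[\sqrt{\omega^T M\omega}]$ is concave (it is the expectation of a concave-increasing function of the linear form $\omega^T M\omega$), so the objective is convex in $M$; combined with its rotational invariance, a Haar average over orthogonal conjugations together with Jensen's inequality shows that a minimizer of the form $M=s\mathbf I$ exists (the $d=2$ analogue of the Schur-convexity in Lemma~\ref{schurconvexofR}). At $a=b=s$ one has $k=0$, $E(0)=\pi/2$, hence $\E_\omega[\sqrt{\omega^T M\omega}]=\tfrac12\sqrt{\pi s}$ and the objective collapses to $\tilde\sigma_x^2-\tilde\sigma_x\sqrt{\pi s}+s$, a quadratic in $\sqrt s$ minimized at $s=\pi\tilde\sigma_x^2/4$ with minimum value $\tilde\sigma_x^2(1-\pi/4)$. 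Taking the square root gives the value $\tilde\sigma_x\sqrt{1-\pi/4}$ and $\Lambda\Lambda^T=M=(\pi\tilde\sigma_x^2/4)\mathbf I$, matching \eqref{optimallinearparameter} at $d=2$. The main obstacle is the elliptic-integral evaluation itself---correctly tracking the normalizations ($\E[R]=\sqrt{\pi/2}$, the factor $1/\sqrt2$ from the covariance $\tfrac12\mathbf I_2$, and the $4\int_0^{\pi/2}$ reduction) so that the constants line up; if one instead insisted on minimizing the two-variable objective $J(a,b)$ directly in terms of $E(k)$, the obstacle would shift to handling $dE/dk=(E-K)/k$ and proving the only critical point is $a=b$, which the convexity-plus-symmetry shortcut conveniently avoids.
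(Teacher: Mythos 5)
Your proposal is correct, and it reaches the stated value and optimizer by a genuinely different route from the paper's Appendix proof. The paper starts from the integral representation \eqref{eq_W2GaussianDerivative} specialized to $d=2$, evaluates that integral in closed form as a combination of \emph{both} complete elliptic integrals $K$ and $E$ (the piecewise formula \eqref{solvedintegral2}), and then establishes Schur-convexity by explicitly verifying $(S_{11}-S_{22})\bigl[\partial f/\partial S_{11}-\partial f/\partial S_{22}\bigr]\ge 0$, which requires the derivative identities for $K$ and $E$, a hypergeometric series expansion with Pochhammer symbols to show $(2-m)K(m)-2E(m)>0$, and an imaginary-modulus transformation to cover $m<0$. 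You instead work directly with \eqref{eq_W2GaussianManiAppr4}: the polar factorization $\omega=\tfrac{1}{\sqrt2}R(\cos\phi,\sin\phi)$ splits $\E_\omega[\sqrt{\omega^T M\omega}]$ into a Rayleigh mean times an angular integral, giving the single clean formula $\tfrac{\sqrt a}{\sqrt\pi}E(k)$ with $k^2=1-b/a$ (your constants check out: $\tfrac{1}{\sqrt 2}\cdot\sqrt{\pi/2}\cdot\tfrac{2\sqrt a}{\pi}E(k)=\tfrac{\sqrt a}{\sqrt\pi}E(k)$, and at $a=b=s$ this is $\tfrac12\sqrt{\pi s}$, consistent with $\E\|\omega\|=\sqrt{\pi}/2$). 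For the optimization you replace the paper's hardest step---the elliptic-integral Schur-convexity verification---by the observation that $M\mapsto\sqrt{\omega^T M\omega}$ is concave (concave increasing composed with a linear form), so the objective is convex in $M$, and rotational invariance plus Jensen over Haar-averaged conjugations $\int U M U^T\,dU=\tfrac{\mathrm{Tr}(M)}{2}\mathbf{I}$ yields an isotropic minimizer; the rest is the same scalar quadratic in $\sqrt s$, minimized at $s=\pi\tilde{\sigma}_x^2/4$ with value $\tilde{\sigma}_x^2(1-\pi/4)$. Your symmetrization is really the $d=2$ instance of the paper's general-$d$ Lemma~\ref{schurconvexofR} (concavity via Dirichlet averages), so your proof is shorter and avoids the delicate special-function inequalities; what the paper's longer computation buys is the full off-diagonal closed form $f(S_{11},S_{22})$ in terms of $K$ and $E$, which serves as an independent consistency check of the Carlson-R machinery rather than relying on the reduction through \eqref{eq_W2GaussianManiAppr4}.
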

\begin{proof} 
When $d=2$, the target function
\eqref{eq_W2GaussianDerivative} equals to
\begin{align}
&f(\mathbf{S})=\frac{S_{11}+S_{22}}{2}-\frac{2\tilde{\sigma}_x }{\Gamma(1/2)} \int^\infty_0 \frac{1}{2}z^{-1/2}\frac{S_{11}(1+zS_{22})+S_{22}(1+zS_{11})}{[(1+zS_{11})(1+zS_{22})]^{3/2}}dz . \label{quadraticcase2}
\end{align}
We first prove the Schur-convexity of \eqref{quadraticcase2} when $d=2$, 
Recalling the definition of Schur-convexity as follows, our target function 
$f(\mathbf{S}) : \{\mathbb{R}^+\cup0\}^2 \rightarrow  \mathbb{R} $ is Schur-convex if and only if $f$ is symmetric on $\{\mathbb{R}^+\cup0\}^2$, and
\begin{equation} \label{eq_Schurd2}
(S_{11}-S_{22})\left[\frac{\partial f(\mathbf{S})}{\partial S_{11}}-\frac{\partial f(\mathbf{S})}{\partial S_{22}}\right]\ge0.
\end{equation} 
To prove \eqref{eq_Schurd2}, we first find the closed-form of $f(\mathbf{S})$ using elliptic integrals as follows. Let us define
\[
K(m):=\int^{\frac{\pi}{2}}_0\frac{d\theta}{\sqrt{1-m\sin^2\theta}}
\]
which is the complete elliptic integral of the first kind, and
\[
E(m)=\int^{\frac{\pi}{2}}_0\sqrt{1-m\sin^2\theta}d\theta
\]
which is the complete elliptic integral of the second kind. When $D_{11}\neq D_{22}\ne0$, the integral in \eqref{quadraticcase2} becomes
\begin{align*}
&\int_0^\infty\frac{1}{2}z^{-\frac{1}{2}}\frac{S_{11}(1+zS_{22})}{[(1+zS_{11})(1+zS_{22})]^{\frac{3}{2}}}dz\\
\overset{(a)}=&\int_0^\infty\frac{S_{11}(1+x^2S_{22})}{[(1+x^2S_{11})(1+x^2S_{22})]^{\frac{3}{2}}}dx,\\
\overset{(b)}=&\sqrt{S_{11}}\int_0^{\frac{\pi}{2}}\frac{\sec^2\theta}{\sec^3\theta\sqrt{1+\frac{S_{22}}{S_{11}}\tan^2\theta}}d\theta,\\
=&\sqrt{S_{11}}\int_0^{\frac{\pi}{2}}\frac{\cos^2\theta}{\sqrt{\cos^2\theta+\frac{S_{22}}{S_{11}}\sin^2\theta}}d\theta\\
=&\sqrt{S_{11}}\int_0^{\frac{\pi}{2}}\frac{1-\sin^2\theta}{\sqrt{1+(\frac{S_{22}}{S_{11}}-1)\sin^2\theta}}d\theta\\
=&\sqrt{S_{11}}\frac{1}{1-\frac{S_{22}}{S_{11}}}\int_0^{\frac{\pi}{2}}\frac{-\frac{S_{22}}{S_{11}}+1-(1-\frac{S_{22}}{S_{11}})\sin^2\theta}{\sqrt{1-(1-\frac{S_{22}}{S_{11}})\sin^2\theta}}d\theta\\
=&\frac{-\sqrt{S_{11}}S_{22}}{S_{11}-S_{22}}K(1-\frac{S_{22}}{S_{11}})+\frac{\sqrt{S_{11}}S_{11}}{S_{11}-S_{22}}E(1-\frac{S_{22}}{S_{11}}),
\end{align*}
where the change of variable $x=z^2$ and $x=\frac{\tan\theta}{\sqrt{D_{11}}}$ are respectively applied to step (a) and (b). Substituting this results into \eqref{quadraticcase2},
\begin{equation}\label{solvedintegral2}
f(\mathbf{S})=\left\{
\begin{matrix}
    \frac{S_{11}+S_{22}}{2}-\frac{2\tilde{\sigma}_x }{\Gamma(1/2)}[\frac{-\sqrt{S_{11}}S_{22}}{S_{11}-S_{22}}K(1-\frac{S_{22}}{S_{11}})+\\
    \frac{\sqrt{S_{11}}S_{11}}{S_{11}-S_{22}}E(1-\frac{S_{22}}{S_{11}})+\frac{-\sqrt{S_{22}}S_{11}}{S_{22}-S_{11}}K(1-\frac{S_{11}}{S_{22}})+\\
    \frac{\sqrt{S_{22}}S_{22}}{S_{22}-S_{11}}E(1-\frac{S_{11}}{S_{22}})],\quad\text{if }S_{11}\ne S_{22}\ne0\\
    \\
    S-\frac{\tilde{\sigma}_x }{\Gamma(1/2)}\pi\sqrt{S},\quad\text{if }S_{11}=S_{22}=S\\
    \\
    \frac{S_{11}}{2}-\frac{2\tilde{\sigma}_x }{\Gamma(1/2)}\sqrt{S_{11}},\quad\text{if }S_{22}=0\\
    \\
    \frac{S_{22}}{2}-\frac{2\tilde{\sigma}_x }{\Gamma(1/2)}\sqrt{S_{22}},\quad\text{if }S_{11}=0
\end{matrix}
\right..
\end{equation}

It is easy to check \eqref{eq_Schurd2} is valid besides the first case in \eqref{solvedintegral2}. Now we prove that \eqref{eq_Schurd2} is also valid when $S_{11}\neq S_{22}\ne0$. Using \cite[equation 19.4.1 and equation 19.7.2]{822801}
\begin{equation}\label{conditionschur2}
\begin{aligned}
&(S_{11}-S_{22})[\frac{\partial f(S_{11},S_{22})}{\partial S_{11}}-\frac{\partial f(S_{11},S_{22})}{\partial S_{22}}]
=\frac{\tilde{\sigma}_x }{\Gamma(1/2)\sqrt{S_{22}}}[(S_{11}+S_{22})K(1-\frac{S_{11}}{S_{22}})-2S_{22}E(1-\frac{S_{11}}{S_{22}})].
\end{aligned}
\end{equation}
Let 
\[
m=1-\frac{S_{11}}{S_{22}}
\]
with $m<1$, then proving \eqref{conditionschur2} is non-negative is equivalent to proving
\[
(2-m)K(m)-2E(m)
\]
is non-negative. For $0\le m<1$, using \cite[equation 19.5.1 and equation 19.5.2]{822801},
\begin{equation} \label{positiveschur2}
\begin{aligned}
&(2-m)K(m)-2E(m)\\
=&\frac{\pi}{2}[(2-m)\sum_
{n=0}^\infty\frac{(\frac{1}{2})_n(\frac{1}{2})_n}{n!n!}m^n-2\sum_
{n=0}^\infty\frac{(-\frac{1}{2})_n(\frac{1}{2})_n}{n!n!}m^n]\\
=&\frac{\pi}{2}\sum_
{n=0}^\infty(2n(\frac{1}{2})_{n-1}-m(\frac{1}{2})_n)\frac{(\frac{1}{2})_n}{n!n!}m^n,
\end{aligned}
\end{equation}
where $(x)_n$ is Pochhammer's symbol
\[
(x)_n=\left\{
\begin{matrix}
x(x+1)(x+2)\cdots(x+n-1)\ \ if\ n=1, 2, 3,...\\
1\ \ if\ n=0
\end{matrix}
\right..
\]
When $n\ge1$, 
\[
(2n(\frac{1}{2})_{n-1}-m(\frac{1}{2})_n)\frac{(\frac{1}{2})_n}{n!n!}m^n>0, 
\]
but 
\[
(2n(\frac{1}{2})_{n-1}-m(\frac{1}{2})_n)\frac{(\frac{1}{2})_n}{n!n!}m^n = -m 
\]
at $n=0$. We sum \eqref{positiveschur2} from $n=0$ to $3$,
\begin{align*}
&\sum_
{n=0}^3(2n(\frac{1}{2})_{n-1}-m(\frac{1}{2})_n)\frac{(\frac{1}{2})_n}{n!n!}m^n =\frac{1}{8}m^2+\frac{7}{64}m^3+\frac{25}{256}m^4 >0
\end{align*}
and therefore, 
\[
(2-m)K(m)-2E(m)>0, \; \mbox{when} \; 0\le m<1.
\]
While if $m<0$, using the imaginary-modulus transformation of \cite[equation 19.7.5]{822801}, one can transform the problem equivalent to the one with $0\le m<1$. Specifically,
\begin{equation}\label{negativeschur2}
\begin{aligned}
&(2-m)K(m)-2E(m)\\
=&\frac{2-m}{\sqrt{1-m}}K(\frac{-m}{1-m})-2\sqrt{1-m}E(\frac{-m}{1-m})\\
=&\frac{1}{1-m'}[(2-m')K(m')-2E(m')]>0
\end{aligned}
\end{equation}
where
\[
0\le m'=\frac{-m}{1-m}<1.
\]
Hence, \eqref{quadraticcase2} is Schur-convex, and thus has a global minimum at 
\[
S_{ii}=\Lambda\Lambda^T=\frac{\pi\tilde{\sigma}_x^2}{4},\quad  i=1,2,
\]
and thus the minimum value of \eqref{eq_GaussianManiAppr} when $d=2$ is $\tilde{\sigma}_x\sqrt{1-\pi/4}$.

\end{proof}

\section{Estimate of the data CDF $\hat{F}_\mu(x)$ in \eqref{eq_empiricaltheta2}}\label{kde}
We use kernel density estimation to estimate the data PDF
\[
\frac{1}{|\mathcal{M}|} \sum^{|\mathcal{M}|}_{i=1} \frac{1}{h}k_i\left(\frac{x-x_i}{h}\right),
\]
where the kernel $k_i(.)$ can be selected as Gaussian or one
easy to perform the CDF inverse. The idea of kernel density estimation is a piecewise density estimation, and it converges to the PDF when $|\mathcal{M}|$ is large \cite{jiang2017uniform}. To be specific, we first rearrange the data in set $\mathcal{M}$ such that
\[
x_1 \leq x_2 \leq \ldots \leq x_{|\mathcal{M}|}.
\]
Then if $k_i(x)$ is Epanechnikov which is $(3/4)(1-x^2)$ if $x_i - h \leq x \leq x_i + h$ and 0 else, and by integration the total probability
\begin{align*}
 &\int^{\infty}_{-\infty} \frac{1}{|\mathcal{M}|} \sum^{|\mathcal{M}|}_{i=1} \frac{1}{h}k_i\left(\frac{x-x_i}{h}\right) dx \\&=  \frac{1}{|\mathcal{M}|} \sum_{i=1}^{|\mathcal{M}|} \int_{-1}^{1} \frac{3}{4} (1-y_i^2) dy_i\\
   & =  \frac{1}{|\mathcal{M}|}  \sum_{i=1}^{|\mathcal{M}|} \frac{3}{4}\left[ y_i-\frac{y_i^3}{3} \right]_{-1}^{1}  =  \frac{1}{|\mathcal{M}|} \sum_{i=1}^{|\mathcal{M}|} 1 = 1
\end{align*}
where the change of variable is $y_i = \frac{x - x_i}{h}, dx = h dy_i$. Then the CDF estimation is
\begin{align*}
&\hat{F}_\mu(x) =  \int^{x}_{-\infty} \frac{1}{|\mathcal{M}|} \sum^{|\mathcal{M}|}_{i=1} \frac{1}{h}k_i\left(\frac{x'-x_i}{h}\right) dx' \\ &= \frac{1}{|\mathcal{M}|} \left(\sum^{L^x}_{i=1}1+ \int_{-1}^{(x-x_{L^x+1})/h} \frac{3}{4} (1-y_L^2) dy_L \right)  \\ &= \frac{1}{|\mathcal{M}|}\left(L^x+\frac{3}{4}\left(\frac{x-x_{L^x+1}}{h}-\frac{(x-x_{L^x+1})^3}{3h^3}+1-\frac{1}{3}\right)\right) \\ &= \frac{1}{|\mathcal{M}|}\left(L^x+\frac{1}{4} \left( 2 + 3\left(\frac{x - x_{L^x+1}}{h}\right) - \left(\frac{x - x_{L^x+1}}{h}\right)^3 \right) \right)
\end{align*}
if $x_1-h<x<x_{|\mathcal{M}|}+h$, where $L^x+1$ is the index of unique data point $x_{L^x+1}$ such that
\[
x_{L^x+1}-h<x<x_{L^x+1}+h.
\]
If $x \geq x_{|\mathcal{M}|}+h$ then $\hat{F}_\mu(x)=1$ and if $x \leq x_1-h$ then $\hat{F}_\mu(x)=0$. Also, the inverse CDF
\[
\hat{F}^{-1}_\mu(p)=2\sin \left(\frac{1}{3}\arcsin\big(2\left(|\mathcal{M}|p-L^p\right)-1\big) \right)h + x_{L^p+1}
\]
where
\[
\frac{L^p}{|\mathcal{M}|}<p<\frac{L^p+1}{|\mathcal{M}|}.
\]

\bibliographystyle{IEEEtran}
\bibliography{IEEEabrv,GANref,ganref3}

\end{document}